\newcommand{\C}{G}
\DeclareMathAlphabet\mathbb{U}{msb}{m}{n}
\definecolor{Gray}{gray}{0.85}
\newcolumntype{g}{>{\columncolor{Gray}}c}
\def\Rset{\mathbb{R}}
\def\Nset{\mathbb{N}}
\DeclareMathOperator*{\E}{\mathbb{E}}
\newcommand{\nrm}[1]{{\left\vert\kern-0.25ex\left\vert\kern-0.25ex\left\vert #1 
    \right\vert\kern-0.25ex\right\vert\kern-0.25ex\right\vert}}
\DeclarePairedDelimiter{\curl}{\{}{\}}
\DeclarePairedDelimiter{\paren}{(}{)}
\newcommand{\cC}{\mathcal{C}}
\newcommand{\cH}{\mathcal{H}}
\newcommand{\cR}{\mathcal{R}}
\newcommand{\cX}{\mathcal{X}}
\newcommand{\cY}{\mathcal{Y}}
\newcommand{\sH}{{\mathscr H}}
\newcommand{\sP}{{\mathscr P}}
\newcommand{\sX}{{\mathscr X}}
\newcommand{\sY}{{\mathscr Y}}
\newcommand{\bu}{{\mathbf u}}
\newcommand{\bw}{{\mathbf w}}
\newcommand{\bx}{{\mathbf x}}
\newcommand{\bz}{{\mathbf z}}
\newcommand{\ignore}[1]{}
\title[A Finer Calibration Analysis for Adversarial Robustness]
{A Finer Calibration Analysis for Adversarial Robustness}
\begin{document}

\maketitle

\begin{abstract}%
We present a more general analysis of \emph{$\sH$-calibration} for
adversarially robust classification. By adopting a finer definition of calibration, we can cover settings beyond the restricted hypothesis sets studied in previous work.  In particular, our results hold for
most common hypothesis sets used in machine learning. We both fix
some previous calibration results \citep{pmlr-v125-bao20a} and
generalize others \citep{PNAMY21}. Moreover, our calibration results,
combined with the previous study of consistency by \citet{PNAMY21},
also lead to more general \emph{$\sH$-consistency} results covering common hypothesis sets.

\end{abstract}

\begin{keywords}%
calibration, consistency, adversarial robustness.
\end{keywords}

\section{Introduction}
\label{sec:introduction}

Rich learning models trained on large datasets often achieve a high
accuracy in a variety of applications
\citep{SutskeverVinyalsLe2014,KrizhevskySutskeverHinton2012}. However,
such complex models have been shown to be susceptible to imperceptible perturbations
\citep{szegedy2013intriguing}: an unnoticeable perturbation can, for example,
result in a dog being classified as an electronics device, which could lead
to dramatic consequences in practice in many applications.

This has motivated the introduction and analysis of the notion of
\emph{adversarial loss}, which requires a predictor not only to
correctly classify an input point $\bx$ but also to maintain the same
classification for all points at a small $\ell_p$ distance of $\bx$
\citep{goodfellow2014explaining, madry2017towards,
tsipras2018robustness, carlini2017towards}.

The problem of designing effective learning algorithms 
with theoretical guarantees for the \emph{adversarial loss} has been the
topic of a number of recent studies \citep{pmlr-v125-bao20a, PNAMY21}. 
In particular, these authors have initiated a theoretical analysis
of the \emph{$\sH$-calibration} and \emph{$\sH$-consistency} of surrogate losses
for the \emph{adversarial $0/1$ loss}.

\citet{pmlr-v125-bao20a} analyzed \emph{$\sH$-calibration} for adversarially
robust classification in the special case where $\sH$ is the family of
linear models. However, several comments are due regarding that work.
First, the definition of calibration adopted by the authors does not
coincide with the standard definition \citep{steinwart2007compare} in
the case of the linear models they study, although it does match that
definition in the case of the family of all measurable functions
\citep[Section~4.1]{steinwart2007compare}: the minimal inner risk in
the definition should be defined for a fixed $\bx$ and the infimum
should be over $f$, instead of an infimum over both $f$ and $\bx$.
Second, and this is crucial, \emph{$\sH$-calibration}, in general, does
not imply \emph{$\sH$-consistency}, unless a property such as
\emph{$\sP$-minimizability} holds
\citep[Theorem~2.8]{steinwart2007compare}. \emph{$\sP$-minimizability} holds for standard
binary classification and the family of all measurable functions
\citep[Theorem~3.2]{steinwart2007compare}. However, it does not hold,
in general, for adversarially robust classification and a specific
hypothesis set $\sH$. As a result, the claim made by the authors that
the calibrated surrogates they propose are \emph{$\sH$-consistent} is
incorrect, as shown by \citet{PNAMY21}.
Third, the authors analyze \emph{$\sH$-calibration} with respect to the loss
function $\phi_{\gamma}\colon \bx \mapsto \mathds{1}_{yf(\bx)\leq
\gamma}$ in the case where $\sH\supset [-1,1]$ is the general family
of functions. However, $\phi_{\gamma}$ only coincides with the
\emph{adversarial $0/1$ loss $\ell_{\gamma}$} in
Equation~\eqref{eq:supinf01} in the special case where $\sH$ is the
family of linear models \citep[Proposition~1]{pmlr-v125-bao20a}.

\citet{PNAMY21} also recently studied the \emph{$\sH$-calibration} and
\emph{$\sH$-consistency} of adversarial surrogate losses. They pointed out
the issues just mentioned about the study of \citet{pmlr-v125-bao20a}
and considered more general hypothesis sets, such as generalized
linear models, ReLU-based functions, and one-layer ReLU neural networks. They identified natural conditions under which
\emph{$\sH$-calibrated} losses can be \emph{$\sH$-consistent} in the adversarial
scenario. They also derived calibration results under the correct
definition of the minimal inner risk by analyzing the equivalence of
two definitions.
However, with this method of calibration analysis, the calibration
considered by the authors needs to be a uniform calibration
\citep[Definition~2.15]{steinwart2007compare} instead of non-uniform
calibration \citep[Definition~2.7]{steinwart2007compare}. In view of
that, their positive result imposes an extra restriction on the
parameters of the hypothesis sets, which can be removed through the
analysis presented here.
  
\noindent \textbf{Our Contributions.}  Building on previous work by
\citet{PNAMY21}, we present a more general analysis of
\emph{$\sH$-calibration} for adversarially robust classification for more
general hypothesis sets. For example, our Theorem~\ref{Thm:calibration_margin_convex}, Theorem~\ref{Thm:calibration_sup_convex} and Theorem~\ref{Thm:sup_rhomargin_calibrate_general} apply to most common
hypothesis sets. Furthermore, for the specific hypothesis sets
considered in previous work, our results either fix existing
calibration results \citep{pmlr-v125-bao20a} or generalize them
\citep{PNAMY21}. More precisely, our Theorem~\ref{Thm:calibration_positive_linear} is a correction to the main
positive result, Theorem~11 in \citep{pmlr-v125-bao20a}, where we
prove the theorem under the correct calibration definition.
Moreover, our Theorem~\ref{Thm:quasiconcave_calibrate_general} extends
the results for linear models to generalized linear models. Our
Corollary~\ref{Corollary:calibration_margin_convex}, Theorem~\ref{Thm:linear_sup_convex}, Theorem~\ref{Thm:calibration_sup_convex} and Corollary~\ref{Corollary:calibration_sup_convex} are stronger versions
of the negative calibration results Theorem~10, Corollary~11,
Theorem~12 and Corollary~13 in \citep{PNAMY21}, since the calibration
considered in \citep{PNAMY21} is uniform calibration \citep[Definition~2.15]{steinwart2007compare}, which is stronger than non-uniform
calibration \citep[Definition~2.7]{steinwart2007compare} considered in
our paper. Our Theorem~\ref{theorem:rho_margin_calibtartion} and
Corollary~\ref{Corollary:sup_rhomargin_calibrate_general} are
generalizations of the positive calibration results of
\citet{PNAMY21}, since our results hold without the unboundedness
assumptions for parameters of the hypothesis sets.

\section{Preliminaries}
\label{sec:preliminaries}

We adopt much of the notation used in \citep{PNAMY21}.
We will denote vectors as lowercase bold letters (e.g.\ $\bx$). The
$d$-dimensional $l_2$-ball with radius $r$ is denoted by
$B_2^d(r)\colon=\curl*{\bz\in\mathbb{R}^d\mid\|\bz\|_2\leq r }$. We
denote by $\sX$ the set of all possible examples. $\sX$ is also
sometimes referred to as the input space. The set of all possible
labels is denoted by $\sY$. We will limit ourselves to the case of
binary classification where $\sY=\{-1,1\}$. Let $\sH$ be a family of
functions from $\Rset^d$ to $\Rset$. Given a fixed but unknown
distribution $\sP$ over $\sX\times\sY$, the binary classification
learning problem is then formulated as follows. The learner seeks
to select a predictor $f\in \sH$ with small 
\emph{generalization error} with respect to the distribution $\sP$. 
The \emph{generalization error} of a
classifier $f \in \sH$ is defined by $\cR_{\ell_0}(f) = \E_{(\bx, y)
  \sim \sP}[\ell_0(f,\bx,y)]$, where $\ell_0(f,\bx,y)=\mathds{1}_{y
  f(\bx) \leq 0}$ is the standard $0/1$ loss. More generally, the
\emph{$\ell$-risk} of a classifier $f$ for a surrogate loss
$\ell(f,\bx,y)$ is defined by
\begin{align}
\label{eq:surrogate-risk}
    \cR_{\ell}(f) = \E_{(\bx, y) \sim \sP}[\ell(f, \bx, y)].
\end{align}
Moreover, the \emph{minimal ($\ell$,$\sH$)-risk}, which is also called
the \emph{Bayes ($\ell$,$\sH$)-risk}, is defined by
$\cR_{\ell,\sH}^*=\inf_{f\in\sH}\cR_{\ell}(f)$. In the
standard classification setting, the goal of a consistency analysis is to
determine whether the minimization of a surrogate loss $\ell$ can lead to
that of the binary loss generalization error. Similarly, in adversarially robust classification, the goal of a consistency analysis is to determine
if the minimization of a surrogate loss $\ell$ yields that of the \emph{adversarial generalization error} defined by $\cR_{\ell_{\gamma}}(f)=\E_{(\bx, y) \sim \sP}[\ell_{\gamma}(f,\bx,y)]$, where 
\begin{equation}
\label{eq:AdvLoss}
\ell_{\gamma}(f,\bx,y)\colon = \sup_{\bx'\colon \|\bx-\bx'\|\leq \gamma}\mathds{1}_{y f(\bx') \leq 0}
\end{equation}
is the \emph{adversarial $0/1$ loss}. This motivates the 
definition of
\emph{$\sH$-consistency} (or simply \emph{consistency}) stated below.
\begin{definition}[$\sH$-Consistency]
  Given a hypothesis set $\sH$, we say that a loss function $\ell_1$
  is \emph{$\sH$-consistent} with respect to loss function $\ell_2$,
  if the following holds:
\begin{align}
\label{eq:H-consistency}
    \cR_{\ell_1}(f_n)-\cR_{\ell_1,\sH}^* \xrightarrow{n \rightarrow +\infty} 0 \implies \cR_{\ell_2}(f_n)-\cR_{\ell_2,\sH}^* \xrightarrow{n \rightarrow +\infty} 0,
\end{align}
for all probability distributions and sequences of $\{f_n\}_{n\in \Nset}\subset \sH$.
\end{definition}
For a distribution $\sP$ over $\sX \times \sY$ with random
variables $X$ and $Y$, let $\eta_{\sP} \colon\sX \rightarrow [0,1]$ be a
measurable function such that, for any $\bx \in \sX$, $\eta_{\sP}(\bx) =
\sP(Y = 1 \mid X=\bx)$. By the property of conditional expectation, we
can rewrite \eqref{eq:surrogate-risk} as $\cR_{\ell}(f) =
\E_{X}[\cC_{\ell}(f, \bx, \eta_{\sP}(\bx))]$, where $\cC_{\ell}(f, \bx,
\eta)$ is the \emph{generic conditional $\ell$-risk} (or
\emph{inner $\ell$-risk}) defined as followed:
\begin{align}
\label{eq:conditional-risk}
\forall \bx \in \sX, \forall \eta \in [0, 1], \quad 
\cC_{\ell}(f,\bx,\eta)\colon 
= \eta \ell(f, \bx, +1) + (1 - \eta)\ell(f, \bx, -1).
\end{align}
Moreover, the \emph{minimal inner $\ell$-risk} on $\sH$ is denoted by
$\cC_{\ell,\sH}^{*}(\bx,\eta)\colon=\inf_{f\in\sH}\cC_{\ell}(f,\bx,\eta).$ The notion of \emph{calibration} for the inner risk is often a powerful tool for
the analysis of $\sH$-consistency \citep{steinwart2007compare}.
\begin{definition}[$\sH$-Calibration]\emph{[Definition~2.7 in \citep{steinwart2007compare}]}
\label{def:H-calibration_true}
Given a hypothesis set $\sH$, we say that a loss function $\ell_1$ is
\emph{$\sH$-calibrated} with respect to a loss function $\ell_2$
if, for any $\epsilon>0$, $\eta\in
     [0,1]$, and $\bx \in \sX$, there exists $\delta>0$ such that for all $f\in\sH$ we have
\begin{align}
\label{eq:H-calibration_true}
    \cC_{\ell_1}(f,\bx,\eta)<\cC_{\ell_1,\sH}^{*}(\bx,\eta)+\delta \implies \cC_{\ell_2}(f,\bx,\eta)<\cC_{\ell_2,\sH}^{*}(\bx,\eta)+\epsilon.
\end{align}
\end{definition}
For comparison with previous work, we also introduce the \emph{uniform $\sH$-calibration} in \citep{steinwart2007compare}, which is stronger than Definition~\ref{def:H-calibration_true}.
\begin{definition}[Uniform $\sH$-Calibration]\emph{[Definition~2.15 in \citep{steinwart2007compare}]}
\label{def:H-calibration_real}
Given a hypothesis set $\sH$, we say that a loss function $\ell_1$ is
\emph{uniform $\sH$-calibrated} with respect to a loss function $\ell_2$
if, for any
$\epsilon>0$, there exists $\delta>0$ such that for all $\eta\in
     [0,1]$, $f\in\sH$, $\bx \in \sX$, we have
\begin{align}
\label{eq:H-calibration_real}
    \cC_{\ell_1}(f,\bx,\eta)<\cC_{\ell_1,\sH}^{*}(\bx,\eta)+\delta \implies \cC_{\ell_2}(f,\bx,\eta)<\cC_{\ell_2,\sH}^{*}(\bx,\eta)+\epsilon.\end{align}
\end{definition}
Note that, in the previous work of \citet{PNAMY21}, Definition~\ref{def:H-calibration_real} is adopted, where $\delta$ in \eqref{eq:H-calibration_real} is independent of $\eta$ and $\bx$; the work of \citet{pmlr-v125-bao20a} adopts a similar definition.
In this paper, we will focus on the non-uniform case, that is Definition~\ref{def:H-calibration_true}, where $\delta$ is dependent on $\eta$ and $\bx$. There are two advantages to considering non-uniform calibration:
it makes it possible to provide stronger negative results on calibration properties of convex surrogates and, it helps us prove more general positive results that hold for most common hypothesis sets $\sH$. In contrast, positive results for uniform calibration hold for some restricted hypothesis sets \citep{PNAMY21}.

\citet{steinwart2007compare} showed that if $\ell_1$ is $\sH$-calibrated (it suffices to satisfy non-uniform calibration, that is condition~\eqref{eq:H-calibration_true}) with respect to $\ell_2$,  then $\sH$-consistency, that is condition~\eqref{eq:H-consistency}, holds for any probability distribution verifying the 
additional condition of
\emph{$\sP$-minimizability} \citep[Definition~2.4]{steinwart2007compare}. While \emph{$\sP$-minimizability} does not hold in general for adversarially robust classification, \citet{PNAMY21} showed that the uniform $\sH$-calibrated losses are $\sH$-consistent under certain conditions. In fact, it also suffices to satisfy non-uniform calibration, that is condition~\eqref{eq:H-calibration_true} for these results, since their proofs only make use of the weaker non-uniform property.

Next, we introduce the notions of \emph{calibration function} and an important result characterizing $\sH$-calibration from \citep{steinwart2007compare}.
\begin{definition}[Calibration function]
Given a hypothesis set $\sH$, we define the \emph{calibration function} $\delta_{\max}$ for a pair of losses $(\ell_1,\ell_2)$ as follows: for all $\bx\in \sX$, $\eta\in [0,1]$ and $\epsilon>0$,
\begin{align}
\label{eq:def-calibration-function-true}
    &\delta_{\max}(\epsilon,\bx,\eta) =\inf_{f\in\sH} \curl[\Big]{\cC_{\ell_1}(f,\bx,\eta) - \cC^{*}_{\ell_1,\sH}(\bx,\eta) \mid
    \cC_{\ell_2}(f,\bx,\eta) - \cC^{*}_{\ell_2,\sH}(\bx,\eta)\geq\epsilon}\,.
\end{align}
\end{definition}
\begin{proposition}[Lemma~2.9 in \citep{steinwart2007compare}]
\label{prop:calibration_function_positive}
Given a hypothesis set $\sH$, loss $\ell_1$ is $\sH$-calibrated with respect to $\ell_2$ if and only if its calibration function $\delta_{\max}$ satisfies
$\delta_{\max}(\epsilon,\bx,\eta)>0$ for all $\bx\in \sX$, $\eta\in
     [0,1]$ and $\epsilon>0$.
\end{proposition}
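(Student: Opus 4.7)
The plan is to prove both directions of the equivalence by directly unpacking the definitions of $\sH$-calibration and of the calibration function, since each assertion is essentially a contrapositive of the other.

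For the forward direction, I would assume $\delta_{\max}(\epsilon,\bx,\eta)>0$ for every triple $(\epsilon,\bx,\eta)$, and fix arbitrary $\epsilon>0$, $\bx\in\sX$, $\eta\in[0,1]$. Setting $\delta \coloneqq \delta_{\max}(\epsilon,\bx,\eta)>0$, I would take any $f\in\sH$ with $\cC_{\ell_1}(f,\bx,\eta)<\cC^{*}_{\ell_1,\sH}(\bx,\eta)+\delta$ and argue by contradiction: if $\cC_{\ell_2}(f,\bx,\eta)\geq \cC^{*}_{\ell_2,\sH}(\bx,\eta)+\epsilon$, then $f$ is admissible in the infimum defining $\delta_{\max}(\epsilon,\bx,\eta)$, so $\cC_{\ell_1}(f,\bx,\eta)-\cC^{*}_{\ell_1,\sH}(\bx,\eta)\geq\delta_{\max}(\epsilon,\bx,\eta)=\delta$, contradicting the hypothesis. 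Hence condition \eqref{eq:H-calibration_true} of Definition~\ref{def:H-calibration_true} is verified.

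For the converse, assume $\ell_1$ is $\sH$-calibrated with respect to $\ell_2$ and suppose, toward a contradiction, that there exist $\epsilon>0$, $\bx\in\sX$, $\eta\in[0,1]$ with $\delta_{\max}(\epsilon,\bx,\eta)=0$. By the definition of the infimum in \eqref{eq:def-calibration-function-true}, there is a sequence $\{f_n\}\subset\sH$ with $\cC_{\ell_2}(f_n,\bx,\eta)-\cC^{*}_{\ell_2,\sH}(\bx,\eta)\geq\epsilon$ for every $n$ and $\cC_{\ell_1}(f_n,\bx,\eta)-\cC^{*}_{\ell_1,\sH}(\bx,\eta)\to 0$. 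Applying calibration at this $(\epsilon,\bx,\eta)$ yields a $\delta>0$ realizing the implication in \eqref{eq:H-calibration_true}; choosing $n$ large enough that $\cC_{\ell_1}(f_n,\bx,\eta)-\cC^{*}_{\ell_1,\sH}(\bx,\eta)<\delta$ forces $\cC_{\ell_2}(f_n,\bx,\eta)-\cC^{*}_{\ell_2,\sH}(\bx,\eta)<\epsilon$, contradicting the construction of the sequence.

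There is no real obstacle here: the whole statement is a reformulation of Definition~\ref{def:H-calibration_true} via the infimum that defines $\delta_{\max}$, so the only care needed is to handle the case $\cC^{*}_{\ell_1,\sH}(\bx,\eta)=\infty$ or the infimum being taken over an empty set (using the usual convention $\inf\emptyset=+\infty$, so $\delta_{\max}>0$ holds trivially and calibration is vacuous). Once these degenerate cases are noted, the two arguments above complete the proof.
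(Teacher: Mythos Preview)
Your proof is correct. Note, however, that the paper does not supply its own proof of this proposition: it is quoted directly as Lemma~2.9 of \citet{steinwart2007compare} and used as a black box throughout. Your argument is exactly the standard one behind that lemma --- unpacking the infimum in the definition of $\delta_{\max}$ and reading calibration as its contrapositive --- so there is nothing to compare against in the paper itself.
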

For comparison, \citet[Definition 3]{pmlr-v125-bao20a} and \citet[Definition 2]{PNAMY21} consider the \emph{Uniform Calibration function} $\delta(\epsilon)$ and make use of Lemma~2.16 in \citep{steinwart2007compare} to characterize uniform calibration \citep[Proposition 4]{PNAMY21,pmlr-v125-bao20a}. Note $\delta(\epsilon)>0$ implies $\delta_{\max}(\epsilon,\bx,\eta)>0$ for all $\bx\in \sX$, $\eta\in [0,1]$, and as a result uniform calibration implies non-uniform calibration. However, the converse does not hold in general.

\section{Adversarially Robust Classification}
\label{sec:robust_classification}
In adversarially robust classification, the loss at $(\bx,y)$ is measured in terms of the
worst loss incurred over an adversarial perturbation of $\bx$ within a
ball of a certain radius in a norm. In this work we will consider
perturbations in the $l_2$ norm $\|\cdot\|$. We will denote by
$\gamma$ the maximum magnitude of the allowed perturbations. Given
$\gamma>0$, a data point $(\bx,y)$, a function $f\in \sH$, and a
margin-based loss $\phi\colon\mathbb{R}\rightarrow\mathbb{R}_{+}$, we
define the \emph{adversarial loss} of $f$ at $(\bx,y)$ as
\begin{align}
\label{eq:sup-based-surrogate}
\tilde{\phi}(f,\bx,y)=\sup\limits_{\bx'\colon \|\bx-\bx'\|\leq \gamma}\phi(y f(\bx')).
\end{align}
The above naturally motivates {\em supremum-based} surrogate losses
that are commonly used to optimize the adversarial $0/1$
loss \citep{goodfellow2014explaining, madry2017towards,
  shafahi2019adversarial, wong2020fast}. We say that a surrogate loss
$\tilde{\phi}(f, \bx, y)$ is \emph{supremum-based} if it is of the
form defined in \eqref{eq:sup-based-surrogate}. We say that the
supremum-based surrogate is convex if the function $\phi$ in
\eqref{eq:sup-based-surrogate} is convex. When $\phi$ is
non-increasing, the following equality
holds \citep{YinRamchandranBartlett2019}:
\begin{align}
\label{eq:sup=inf}
\sup\limits_{\bx'\colon \|\bx-\bx'\|\leq \gamma}\phi(y f(\bx')) = \phi\paren*{\inf\limits_{\bx'\colon \|\bx-\bx'\|\leq \gamma}yf(\bx')}.
\end{align}
The adversarial $0/1$ loss defined in
\eqref{eq:AdvLoss} is a special kind of adversarial loss \eqref{eq:sup-based-surrogate}, where $\phi$ is the $0/1$ loss, that is, $\phi(yf(\bx))=\ell_0(f,\bx,y)=\mathds{1}_{yf(\bx)\leq 0}$. Therefore, the adversarial $0/1$ loss has the equivalent form
\begin{align}
\label{eq:supinf01}
    \ell_{\gamma}(f,\bx,y)=\sup\limits_{\bx'\colon \|\bx-\bx'\|\leq \gamma}\mathds{1}_{y f(\bx') \leq 0}=\mathds{1}_{\inf\limits_{\bx'\colon \|\bx-\bx'\|\leq \gamma}yf(\bx')\leq 0}.
\end{align}
This alternative equivalent form of adversarial $0/1$ loss is more advantageous to analyze than \eqref{eq:AdvLoss} and would be adopted in our proofs.
Without loss of generality, let $\cX=B_2^d(1)$ and $\gamma\in (0,1)$. In this paper, we aim to characterize surrogate losses $\ell_{1}$ satisfying $\sH$-calibration \eqref{eq:H-calibration_true} with $\ell_2 = \ell_{\gamma}$ and for the hypothesis sets $\sH$ which are \emph{regular for adversarial calibration}.
\begin{definition}[Regularity for Adversarial Calibration]
We say that a hypothesis set $\sH$ is \emph{regular for adversarial calibration} 
if there exists a \emph{distinguishing $\bx$} in $\sX$, that is if there exist $f, g \in \sH$ 
such that $\inf_{\| \bx' - \bx \| \leq \gamma } f(\bx')>0$ and 
$\sup_{\| \bx' - \bx \| \leq \gamma }g(\bx')<0$.
\end{definition}
It suffices to study hypothesis sets $\sH$ that are regular for adversarial calibration not only because all common hypothesis sets admit that property, but also because the following result holds. We say that a hypothesis set $\sH$ is \emph{symmetric}, 
if for any $f \in \sH$, $-f$ is also in $\sH$.
\begin{restatable}{theorem}{NatrualCondition}
\label{Thm:natural_condition}
   Let $\sH$ be a symmetric hypothesis set. If $\sH$ is not regular for adversarial calibration, 
   then any surrogate loss $\ell$ is $\sH$-calibrated with respect to $\ell_{\gamma}$.
\end{restatable}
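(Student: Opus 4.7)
The plan is to unravel what non-regularity combined with symmetry forces on $\ell_{\gamma}$, show that the adversarial $0/1$ loss is identically $1$ on $\sH\times \sX\times \sY$, and then invoke Proposition~\ref{prop:calibration_function_positive}.

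First, I would carefully negate the regularity condition. Not being regular for adversarial calibration means that for every $\bx\in\sX$, we do \emph{not} have the existence of $f,g\in\sH$ with $\inf_{\|\bx'-\bx\|\le\gamma} f(\bx')>0$ \emph{and} $\sup_{\|\bx'-\bx\|\le\gamma} g(\bx')<0$. Since the quantifiers over $f$ and $g$ are independent, this negation splits as a disjunction: for every $\bx$, either no $f\in\sH$ has $\inf_{\|\bx'-\bx\|\le\gamma} f(\bx')>0$, or no $g\in\sH$ has $\sup_{\|\bx'-\bx\|\le\gamma} g(\bx')<0$. Here is where symmetry enters: if $f\in\sH$, then $-f\in\sH$, and $\sup_{\|\bx'-\bx\|\le\gamma}(-f)(\bx')=-\inf_{\|\bx'-\bx\|\le\gamma}f(\bx')$. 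Therefore the two ``no such function'' alternatives are equivalent, so both must hold. In summary, for every $\bx\in\sX$ and every $f\in\sH$,
\begin{equation*}
\inf_{\|\bx'-\bx\|\le\gamma} f(\bx')\le 0\qquad\text{and}\qquad \sup_{\|\bx'-\bx\|\le\gamma} f(\bx')\ge 0.
\end{equation*}

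Next, I would feed this into the equivalent form \eqref{eq:supinf01} of $\ell_{\gamma}$. For $y=+1$, the first inequality yields $\ell_{\gamma}(f,\bx,+1)=\mathds{1}_{\inf_{\|\bx'-\bx\|\le\gamma}f(\bx')\le 0}=1$. For $y=-1$, the second inequality gives $\ell_{\gamma}(f,\bx,-1)=\mathds{1}_{\inf_{\|\bx'-\bx\|\le\gamma}(-f)(\bx')\le 0}=\mathds{1}_{\sup_{\|\bx'-\bx\|\le\gamma}f(\bx')\ge 0}=1$. Hence $\ell_{\gamma}\equiv 1$ on $\sH\times\sX\times\sY$, and plugging into \eqref{eq:conditional-risk} gives $\cC_{\ell_{\gamma}}(f,\bx,\eta)=\eta+(1-\eta)=1$ for all $f\in\sH$, $\bx\in\sX$, $\eta\in[0,1]$. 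Consequently $\cC_{\ell_{\gamma},\sH}^{*}(\bx,\eta)=1$, and the excess inner risk $\cC_{\ell_{\gamma}}(f,\bx,\eta)-\cC_{\ell_{\gamma},\sH}^{*}(\bx,\eta)$ is identically zero.

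Finally, I would conclude via the calibration function. For any surrogate $\ell_1=\ell$ and any $\epsilon>0$, $\bx\in\sX$, $\eta\in[0,1]$, the condition $\cC_{\ell_{\gamma}}(f,\bx,\eta)-\cC_{\ell_{\gamma},\sH}^{*}(\bx,\eta)\ge\epsilon$ in \eqref{eq:def-calibration-function-true} is never satisfied, so the infimum is taken over the empty set and equals $+\infty$. In particular $\delta_{\max}(\epsilon,\bx,\eta)>0$ everywhere, and Proposition~\ref{prop:calibration_function_positive} yields that $\ell$ is $\sH$-calibrated with respect to $\ell_{\gamma}$.

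I do not anticipate a serious obstacle; the only subtle step is the logical manipulation that turns the negation of an ``$\exists f,\exists g$'' statement into ``$\forall f$ (both things hold)'' using symmetry, and then observing that this forces $\ell_{\gamma}\equiv 1$ on the hypothesis set. Once that is in hand, the calibration conclusion is immediate from Proposition~\ref{prop:calibration_function_positive}.
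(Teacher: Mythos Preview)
Your proposal is correct and follows essentially the same route as the paper: both use symmetry together with non-regularity to deduce that $\inf_{\|\bx'-\bx\|\le\gamma}f(\bx')\le 0\le \sup_{\|\bx'-\bx\|\le\gamma}f(\bx')$ for all $f\in\sH$ and $\bx\in\sX$, whence $\cC_{\ell_{\gamma}}(f,\bx,\eta)\equiv 1=\cC_{\ell_{\gamma},\sH}^{*}(\bx,\eta)$. The only cosmetic difference is that the paper concludes directly from Definition~\ref{def:H-calibration_true} (the right-hand side of \eqref{eq:H-calibration_true} holds vacuously since the excess inner risk is zero), whereas you route through Proposition~\ref{prop:calibration_function_positive} and the empty-infimum convention; these are equivalent one-line finishes.
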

\begin{proof}
Since $\sH$ is symmetric, for any $\bx \in \sX$, $f\in \sH$, $\inf_{\| \bx' - \bx \| \leq \gamma } f(\bx')\leq 0 \leq \sup_{\| \bx' - \bx \| \leq \gamma }f(\bx')$. Thus by the definition of inner risk \eqref{eq:conditional-risk} and adversarial 0-1 loss $\ell_{\gamma}$ \eqref{eq:supinf01}, for any $\bx \in \sX$, $f\in \sH$,
\[\cC_{\ell_{\gamma},\sH}(f,\bx,\eta)=\eta \mathds{1}_{\inf\limits_{\bx'\colon \|\bx-\bx'\|\leq \gamma} f(\bx')\leq 0} +(1-\eta) \mathds{1}_{\sup\limits_{\bx'\colon \|\bx-\bx'\|\leq \gamma} f(\bx')\geq 0}=1=\cC^*_{\ell_{\gamma},\sH}(\bx,\eta),\]
which implies any surrogate loss $\ell$ is $\sH$-calibrated with respect to $\ell_{\gamma}$ by \eqref{eq:H-calibration_true}.
\end{proof}
Note all the hypothesis sets considered in the previous work \citep{pmlr-v125-bao20a} and \citep{PNAMY21} are regular for adversarial calibration. For convenience, we adopt the notation in \citep{PNAMY21} to denote these specific hypothesis sets:
\begin{itemize}[itemsep=-1mm]
\item linear models: $\sH_{\mathrm{lin}}=\curl*{\bx\rightarrow \bw \cdot \bx \mid \|\bw\|=1}$, as in \citep{pmlr-v125-bao20a} and \citep{PNAMY21}.

\item generalized linear models: $\sH_{g} = \curl*{\bx\rightarrow
  g(\bw \cdot \bx)+b\mid\|\bw\|= 1, |b|\leq \C}$ where $g$ is a
  non-decreasing function, as in \citep{PNAMY21}; and

\item one-layer ReLU neural networks:
$\sH_{\mathrm{NN}} = \curl*{\bx\rightarrow \sum_{j = 1}^n u_j(\bw_j \cdot \bx)_{+} \mid \|\bu \|_{1}\leq \Lambda, \|\bw_j\|\leq W}$, where $(\cdot)_+ = \max(\cdot,0)$ as in \citep{PNAMY21}; and

\item all measurable functions: $\sH_{\mathrm{all}}$ as in \citep{PNAMY21}.
\end{itemize}
In the special case of $g = (\cdot)_+$, 
we denote the corresponding ReLU-based hypothesis set as $\sH_{\mathrm{relu}} = \curl*{\bx\rightarrow (\bw \cdot \bx)_{+} + b \mid \|\bw\|=1, |b|\leq \C}$ as in \citep{PNAMY21}.

\section{\texorpdfstring{$\sH$}{H}-Calibration Analysis}
\label{sec:calibration}

\subsection{Negative results}
\label{sec:calibration_negative}

In this section, we show that the commonly used convex surrogates and supremum-based convex surrogates are not $\sH$-calibrated with respect to $\ell_{\gamma}$, even under the weaker notion of non-uniform calibration. These results can be viewed as a generalization of those
given by \citet{PNAMY21}.
\subsubsection{Convex losses}
\label{sec:calibration_negative_convex}
We first study convex losses, which are often used 
for standard binary classification problems.
\begin{restatable}{theorem}{CalibrationMarginConvex}
\label{Thm:calibration_margin_convex}
    Assume $\sH$ satisfies there exists a distinguishing $\bx_0\in \sX$ and $f_0\in \sH$ such that $f_0(\bx_0)=0$. If a margin-based loss $\phi\colon\Rset\rightarrow \Rset_{+}$ is convex, then it is not $\sH$-calibrated with respect to $\ell_{\gamma}$.
\end{restatable}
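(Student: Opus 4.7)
The plan is to invoke Proposition~\ref{prop:calibration_function_positive} by exhibiting a triple $(\bx,\eta,\epsilon)$ at which the calibration function $\delta_{\max}$ vanishes. The natural choice is $\bx=\bx_0$ (the distinguishing point from the hypothesis) together with $\eta=\tfrac12$, since convexity of $\phi$ makes $f(\bx_0)=0$ optimal for the $\phi$-inner risk at $\eta=\tfrac12$, while $f_0(\bx_0)=0$ is simultaneously the worst possible case for the adversarial $0/1$ inner risk. The function $f_0$ itself will then serve as the witness of non-calibration.

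First I would compute the two Bayes inner risks at $(\bx_0,\tfrac12)$. For the $\phi$-risk, $\cC_{\phi}(f,\bx_0,\tfrac12)=\tfrac12[\phi(f(\bx_0))+\phi(-f(\bx_0))]\geq \phi(0)$ by Jensen's inequality applied to the convex $\phi$, and $f_0$ attains this bound, so $\cC^{*}_{\phi,\sH}(\bx_0,\tfrac12)=\phi(0)=\cC_{\phi}(f_0,\bx_0,\tfrac12)$. For the adversarial $0/1$ risk, the representation~\eqref{eq:supinf01} gives $\cC_{\ell_{\gamma}}(f,\bx_0,\tfrac12)=\tfrac12[\mathds{1}_{\inf_{\bx'}f(\bx')\leq 0}+\mathds{1}_{\sup_{\bx'}f(\bx')\geq 0}]$, which is always at least $\tfrac12$ since the two indicators cannot both vanish. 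The distinguishing property provides some $f_{+}\in\sH$ with $\inf_{\bx'}f_{+}(\bx')>0$, which achieves inner risk exactly $\tfrac12$, so $\cC^{*}_{\ell_{\gamma},\sH}(\bx_0,\tfrac12)=\tfrac12$.

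The last step is to observe that $f_0(\bx_0)=0$ forces both $\inf_{\bx'}f_0(\bx')\leq 0$ and $\sup_{\bx'}f_0(\bx')\geq 0$ (because $\bx_0$ lies in its own $\gamma$-ball), so $\cC_{\ell_{\gamma}}(f_0,\bx_0,\tfrac12)=1$ and the excess adversarial inner risk of $f_0$ is $\tfrac12$. Taking $\epsilon=\tfrac14$ and plugging $f_0$ into \eqref{eq:def-calibration-function-true} yields $\delta_{\max}(\tfrac14,\bx_0,\tfrac12)\leq \cC_{\phi}(f_0,\bx_0,\tfrac12)-\cC^{*}_{\phi,\sH}(\bx_0,\tfrac12)=0$, and Proposition~\ref{prop:calibration_function_positive} concludes. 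The one point that requires care is establishing $\cC^{*}_{\ell_{\gamma},\sH}(\bx_0,\tfrac12)=\tfrac12$ rather than $1$, which is exactly where the distinguishing assumption on $\bx_0$ is used; without it $f_0$ could coincide with an adversarial Bayes minimizer and the argument would collapse.
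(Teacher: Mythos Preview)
Your proof is correct and follows essentially the same approach as the paper: both use the witness $f_0$ at $\eta=\tfrac12$, invoke Jensen's inequality on the convex $\phi$ to show $f_0$ achieves the minimal $\phi$-inner risk $\phi(0)$, and use the distinguishing property of $\bx_0$ to show $f_0$ has strictly positive excess adversarial inner risk. The only cosmetic differences are that the paper packages the $\ell_\gamma$-inner-risk computation into a separate lemma (Lemma~\ref{lemma:distinguishing}) and takes $\epsilon=\tfrac12$ rather than $\tfrac14$, then phrases the conclusion as a contradiction.
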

In particular, the assumption holds when $\sH$ is regular for adversarial calibration and contains $0$.
The proof of Theorem~\ref{Thm:calibration_margin_convex} is included in Appendix~\ref{app:calibration_convex}. By Theorem~\ref{Thm:calibration_margin_convex}, we obtain the following corollary, which fixes the main negative result of \citet{pmlr-v125-bao20a} and generalizes negative results of \citet{PNAMY21}. Note $\sH_{\mathrm{lin}}$, $\sH_{\mathrm{NN}}$ and $\sH_{\mathrm{all}}$ all satisfy there exists a distinguishing $\bx_0\in \sX$ and $f_0\in \sH$ such that $f_0(\bx_0)=0$. When $g(-\gamma)+\C>0$ and  $g(-\gamma) -\C<0$, $\sH_g$ also satisfies this assumption.
\begin{corollary} 
\label{Corollary:calibration_margin_convex}
If a margin-based loss $\phi\colon\Rset\rightarrow \Rset_{+}$ is convex, then,
  \begin{enumerate}
      \item $\phi$ is not $\sH_{\mathrm{lin}}$-calibrated with respect to $\ell_{\gamma}$;      
      \item Given a non-decreasing and continuous function $g$ such that $g(-\gamma)+\C>0$ and  $g(\gamma) -\C<0$. Then $\phi$ is not $\sH_g$-calibrated with respect to $\ell_{\gamma}$; Specifically, if $\C>\gamma$, then $\phi$ is not $\sH_{\mathrm{relu}}$-calibrated with respect to $\ell_{\gamma}$;
      \item $\phi$ is not $\sH_{\mathrm{NN}}$-calibrated with respect to $\ell_{\gamma}$;  
      \item $\phi$ is not $\sH_{\mathrm{all}}$-calibrated with respect to $\ell_{\gamma}$.
  \end{enumerate}
\end{corollary}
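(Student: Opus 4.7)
The plan is to derive each of the four items by direct application of Theorem~\ref{Thm:calibration_margin_convex}: for each hypothesis set it suffices to exhibit a distinguishing point $\bx_0\in\sX$ together with an element $f_0$ of the set vanishing at $\bx_0$. In every case I will take $\bx_0=\be_1$, the first standard basis vector (so $\bx_0\in B_2^d(1)=\sX$), and assume the ambient dimension satisfies $d\geq 2$ (implicit throughout). The inf/sup computations will rest on the elementary identity $\inf_{\|\bx'-\bx_0\|\leq \gamma} \bw\cdot\bx' = \bw\cdot\bx_0 - \gamma$, and its supremum counterpart, for any unit $\bw$.

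For item~(1), $\sH_{\mathrm{lin}}$: the positive witness $\bx\mapsto\be_1\cdot\bx$ has infimum $1-\gamma>0$ and its negation has supremum $-(1-\gamma)<0$; the map $\bx\mapsto\be_2\cdot\bx$ serves as $f_0$. For item~(3), $\sH_{\mathrm{NN}}$: the single-neuron element $\bx\mapsto\Lambda(W\be_1\cdot\bx)_{+}$ (admissible via $n=1$, $u_1=\Lambda$, $\bw_1=W\be_1$) has infimum $\Lambda W(1-\gamma)>0$, its negation (which keeps $\|\bu\|_1=\Lambda$) has supremum $-\Lambda W(1-\gamma)<0$, and $\bx\mapsto\Lambda(W\be_2\cdot\bx)_{+}$ serves as $f_0$. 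For item~(4), $\sH_{\mathrm{all}}$: the constants $+1$ and $-1$ are the distinguishing witnesses and the zero function is $f_0$.

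The only item requiring care is~(2). For the general $\sH_g$ under the hypotheses $g(-\gamma)+\C>0$ and $g(\gamma)-\C<0$: the positive witness $\bx\mapsto g(\be_1\cdot\bx)+\C$ has infimum $g(1-\gamma)+\C\geq g(-\gamma)+\C>0$ by monotonicity, and the negative witness $\bx\mapsto g(-\be_1\cdot\bx)-\C$ has supremum $g(\gamma-1)-\C\leq g(\gamma)-\C<0$. For the vanishing hypothesis I would take $f_0(\bx)=g(\be_2\cdot\bx)-g(0)$; admissibility requires the offset $-g(0)$ to satisfy $|g(0)|\leq\C$, and this is exactly where both two-sided conditions combine simultaneously: monotonicity gives $g(-\gamma)\leq g(0)\leq g(\gamma)$, and the two hypotheses bracket this interval strictly inside $(-\C,\C)$. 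The $\sH_{\mathrm{relu}}$ specialization is immediate by substituting $g=(\cdot)_{+}$: the assumption $\C>\gamma$ is precisely the conjunction of $g(-\gamma)+\C=\C>0$ and $g(\gamma)-\C=\gamma-\C<0$.

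The main obstacle is not conceptual but bookkeeping in item~(2)—ensuring that every $\bw$ chosen remains a unit vector and every $b$ chosen remains within $[-\C,\C]$ after the two sign-flipped conditions on $g$ are applied; the other items reduce to routine unit-vector or one-neuron constructions.
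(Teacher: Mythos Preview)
Your proposal is correct and follows the same approach as the paper: each item is obtained by verifying the hypothesis of Theorem~\ref{Thm:calibration_margin_convex}, namely exhibiting a distinguishing $\bx_0$ together with some $f_0\in\sH$ vanishing at $\bx_0$. The paper itself only sketches this (``Note $\sH_{\mathrm{lin}}$, $\sH_{\mathrm{NN}}$ and $\sH_{\mathrm{all}}$ all satisfy\ldots''), whereas you supply the explicit witnesses; your handling of item~(2), in particular the observation that the two hypotheses $g(-\gamma)+\C>0$ and $g(\gamma)-\C<0$ together force $|g(0)|<\C$ so that $b=-g(0)$ is admissible, is exactly the detail the paper leaves implicit. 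One minor simplification: since $0\in\sH_{\mathrm{NN}}$ (take $\bu=0$) and $0\in\sH_{\mathrm{all}}$, you can use $f_0\equiv 0$ directly in items~(3) and~(4), as the paper hints with ``the assumption holds when $\sH$ is regular for adversarial calibration and contains $0$.''
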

By using the correct calibration Definition~\ref{def:H-calibration_true}, 1.\ of Corollary~\ref{Corollary:calibration_margin_convex} fixes the main negative result in \citep{pmlr-v125-bao20a}. 

\subsubsection{Supremum-based convex losses}
\label{sec:sup_based_convex}

While it is natural to consider convex surrogates for the $0/1$ loss, convex supremum-based surrogates are widely used in practice for designing algorithms for the adversarial loss 
\citep{madry2017towards, shafahi2019adversarial, wong2020fast}. We next present negative results for convex supremum-based surrogates.
\begin{restatable}{theorem}{LinearSupConvex}
\label{Thm:linear_sup_convex}
Let $\phi$ be convex and non-increasing margin-based loss, consider the surrogate loss defined by $\tilde{\phi}(f,\bx,y)=\sup_{\bx'\colon \|\bx-\bx'\|\leq \gamma}\phi(y f(\bx'))$. Then
\begin{enumerate}
      \item $\tilde{\phi}$ is not $\sH_{\mathrm{lin}}$-calibrated with respect to $\ell_{\gamma}$;      
      \item Given a non-decreasing and continuous function $g$ such that $g(-\gamma)+\C>0$ and  $g(\gamma) -\C<0$. Then $\tilde{\phi}$ is not $\sH_g$-calibrated with respect to $\ell_{\gamma}$; Specifically, if $G>\gamma$, $\tilde{\phi}$ is not $\sH_{\mathrm{relu}}$-calibrated with respect to $\ell_{\gamma}$.
\end{enumerate}
\end{restatable}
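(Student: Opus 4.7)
The plan is to apply Proposition~\ref{prop:calibration_function_positive}: for each $\sH$, I will exhibit a single point $\bx_0\in\sX$, a value $\eta_0\in[0,1]$, and a function $f_0\in\sH$ such that $f_0$ simultaneously (i) attains the infimum of $\cC_{\tilde\phi}(\cdot,\bx_0,\eta_0)$ over $\sH$, and (ii) satisfies $\cC_{\ell_{\gamma}}(f_0,\bx_0,\eta_0)>\cC^{*}_{\ell_{\gamma},\sH}(\bx_0,\eta_0)$. This forces $\delta_{\max}(\epsilon,\bx_0,\eta_0)=0$ for a positive $\epsilon$ and rules out $\sH$-calibration. The key simplification is that $\phi$ is non-increasing, so~\eqref{eq:sup=inf} applies and gives
\[
\cC_{\tilde\phi}(f, \bx, \eta) \;=\; \eta\,\phi\Bigl(\inf_{\|\bx'-\bx\|\leq \gamma}f(\bx')\Bigr) + (1-\eta)\,\phi\Bigl(-\sup_{\|\bx'-\bx\|\leq \gamma}f(\bx')\Bigr).
\]
In both hypothesis classes, these inner $\inf$ and $\sup$ admit explicit closed forms, reducing the inner-risk minimization to a one-variable convex problem to which a symmetric Jensen-inequality argument applies.

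For part 1, I will fix any $\bx_0\in\sX$ with $\|\bx_0\|>\gamma$ (available since $\gamma<1$) and $\eta_0=1/2$. Writing $\alpha=\bw\cdot\bx_0\in[-\|\bx_0\|,\|\bx_0\|]$, the inner risk becomes $\Psi(\alpha)=\tfrac{1}{2}\bigl(\phi(\alpha-\gamma)+\phi(-\alpha-\gamma)\bigr)$, convex in $\alpha$ and invariant under $\alpha\mapsto-\alpha$, hence minimized at $\alpha=0$. Take $f_0$ with $\bw\perp\bx_0$ to realize this. At $f_0$ one has $\inf f_0=-\gamma$ and $\sup f_0=\gamma$, so both adversarial $0/1$ indicators equal $1$, giving $\cC_{\ell_{\gamma}}(f_0,\bx_0,1/2)=1$, whereas $\bw=\bx_0/\|\bx_0\|$ yields $\alpha>\gamma$ and $\cC_{\ell_{\gamma}}=1/2$, so $\delta_{\max}(\epsilon,\bx_0,1/2)=0$ for every $\epsilon\leq 1/2$.

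For part 2, I will take $\bx_0=\mathbf{0}$ (a distinguishing point under the stated bounds on $g$) and $\eta_0=1/2$. Monotonicity of $g$ gives $\inf_{\|\bx'\|\leq\gamma}f(\bx')=g(-\gamma)+b$ and $\sup_{\|\bx'\|\leq\gamma}f(\bx')=g(\gamma)+b$, independent of $\bw$, so after substituting $t=g(-\gamma)+b$ and $\Delta=g(\gamma)-g(-\gamma)\geq 0$, the inner risk $\tfrac{1}{2}(\phi(t)+\phi(-\Delta-t))$ is convex and symmetric under $t\mapsto -\Delta-t$, hence minimized at $t=-\Delta/2$, i.e., $b_0=-(g(-\gamma)+g(\gamma))/2$. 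The hypotheses $g(-\gamma)+\C>0$ and $g(\gamma)-\C<0$ imply $|b_0|<\C$, so $b_0$ is feasible. At $b_0$ one checks $g(-\gamma)+b_0=-\Delta/2\leq 0$ and $g(\gamma)+b_0=\Delta/2\geq 0$, so both adversarial $0/1$ indicators equal $1$ and $\cC_{\ell_{\gamma}}=1$, while $b=\C$ drives one indicator to $0$ and yields $\cC_{\ell_{\gamma}}=1/2$; thus $\cC^{*}_{\ell_{\gamma},\sH_g}(\mathbf{0},1/2)=1/2<1$, again giving $\delta_{\max}=0$. The $\sH_{\mathrm{relu}}$ specialization follows from $g(-\gamma)=0$, $g(\gamma)=\gamma$, under which $\C>\gamma$ is exactly the assumption needed.

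The main technical subtlety I expect is justifying the midpoint minimization when $\phi$ is not differentiable (e.g., hinge loss), where first-order conditions are unavailable; this is handled cleanly by the Jensen chain $\Psi(m)\leq \tfrac{1}{2}(\Psi(t)+\Psi(-\Delta-t))=\Psi(t)$, which uses only convexity and the $t\leftrightarrow -\Delta-t$ symmetry. Minor book-keeping — that $b_0$ is strictly interior to $[-\C,\C]$ and that the boundary choices $b=\pm\C$ indeed realize the claimed value of $\cC^{*}_{\ell_{\gamma},\sH_g}$ — follows directly from the bound assumptions on $g$ and $\C$.
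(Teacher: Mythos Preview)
Your proposal is correct and follows essentially the same approach as the paper's proof: both fix $\eta=\tfrac12$ at a distinguishing point, use the closed forms $\underline{M}(f,\bx,\gamma)=f(\bx)-\gamma$ (respectively $g(\bw\cdot\bx-\gamma)+b$) and their sup-counterparts, and then apply convexity (Jensen/symmetry) to show that the global minimizer of $\cC_{\tilde\phi}(\cdot,\bx_0,\tfrac12)$ already lies in the set $\{\underline{M}\le 0\le\overline{M}\}$, forcing $\delta_{\max}(\tfrac12,\bx_0,\tfrac12)=0$. The only cosmetic difference is that the paper packages the reduction to $\delta_{\max}(\tfrac12,\bx_0,\tfrac12)$ into Lemma~\ref{lemma:distinguishing}, whereas you invoke Proposition~\ref{prop:calibration_function_positive} directly; the choice $b_0=-(g(-\gamma)+g(\gamma))/2$ and the verification $|b_0|<G$ match the paper exactly.
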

\begin{restatable}{theorem}{CalibrationSupConvex}
\label{Thm:calibration_sup_convex}
   Let $\sH$ be a hypothesis set containing $0$ that is regular for adversarial calibration. If a margin-based loss $\phi$ is convex and non-increasing, then the surrogate loss defined by $	\tilde{\phi}(f,\bx,y)=\sup_{\bx'\colon \|\bx-\bx'\|\leq \gamma}\phi(y f(\bx'))$ is not $\sH$-calibrated with respect to $\ell_{\gamma}$.
\end{restatable}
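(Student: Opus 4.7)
The plan is to exhibit a single point $\bx$, a value of $\eta$, and an $\epsilon > 0$ for which the calibration function of Definition~\ref{eq:def-calibration-function-true} vanishes; this then rules out $\sH$-calibration via Proposition~\ref{prop:calibration_function_positive}. I take $\bx$ to be a distinguishing point provided by regularity, so that there exist $f^+, f^- \in \sH$ with $\inf_{\|\bx'-\bx\|\le \gamma} f^+(\bx') > 0$ and $\sup_{\|\bx'-\bx\|\le \gamma} f^-(\bx') < 0$, and I take $\eta = \tfrac{1}{2}$.

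Using the equivalent form $\ell_{\gamma}(f,\bx,y) = \mathds{1}_{\inf_{\|\bx'-\bx\|\le \gamma} y f(\bx') \le 0}$ from \eqref{eq:supinf01}, a direct evaluation of $\cC_{\ell_\gamma}$ on $f^+$ and $f^-$ shows $\cC_{\ell_\gamma}(f^\pm,\bx,\tfrac{1}{2}) = \tfrac{1}{2}$, hence $\cC^{*}_{\ell_\gamma,\sH}(\bx,\tfrac{1}{2}) \le \tfrac{1}{2}$. On the other hand, the zero hypothesis $0 \in \sH$ yields $\ell_\gamma(0,\bx,+1) = \ell_\gamma(0,\bx,-1) = 1$, so $\cC_{\ell_\gamma}(0,\bx,\tfrac{1}{2}) = 1$, giving a gap of at least $\tfrac{1}{2}$ between $0$ and the minimal adversarial inner risk.

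The crux is to show that this same zero hypothesis is already a minimizer of the surrogate inner risk at $\eta = \tfrac{1}{2}$. For any $f \in \sH$, since $\phi$ is non-increasing, we can rewrite the supremum as $\sup_{\|\bx'-\bx\|\le \gamma}\phi(yf(\bx')) = \phi\paren*{\inf_{\|\bx'-\bx\|\le \gamma} y f(\bx')}$ as in \eqref{eq:sup=inf}. Letting $a = \inf_{\|\bx'-\bx\|\le\gamma} f(\bx')$ and $b = \sup_{\|\bx'-\bx\|\le\gamma} f(\bx')$, this gives
\[
\cC_{\tilde\phi}(f,\bx,\tfrac{1}{2}) = \tfrac{1}{2}\phi(a) + \tfrac{1}{2}\phi(-b).
\]
Convexity of $\phi$ yields $\tfrac{1}{2}\phi(a) + \tfrac{1}{2}\phi(-b) \ge \phi\paren*{\tfrac{a-b}{2}}$, and since $a \le b$ and $\phi$ is non-increasing, $\phi\paren*{\tfrac{a-b}{2}} \ge \phi(0)$. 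Thus $\cC_{\tilde\phi}(f,\bx,\tfrac{1}{2}) \ge \phi(0)$ for every $f \in \sH$, while $\cC_{\tilde\phi}(0,\bx,\tfrac{1}{2}) = \phi(0)$ by direct computation; hence $\cC^{*}_{\tilde\phi,\sH}(\bx,\tfrac{1}{2}) = \phi(0)$ and this minimum is attained at $f = 0$.

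With these two facts in hand I take $\epsilon = \tfrac{1}{2}$ and evaluate $\delta_{\max}$: the constant sequence $f_n \equiv 0$ satisfies $\cC_{\ell_\gamma}(f_n,\bx,\tfrac{1}{2}) - \cC^{*}_{\ell_\gamma,\sH}(\bx,\tfrac{1}{2}) \ge \tfrac{1}{2} = \epsilon$ but $\cC_{\tilde\phi}(f_n,\bx,\tfrac{1}{2}) - \cC^{*}_{\tilde\phi,\sH}(\bx,\tfrac{1}{2}) = 0$, so $\delta_{\max}(\tfrac{1}{2},\bx,\tfrac{1}{2}) = 0$, and Proposition~\ref{prop:calibration_function_positive} concludes non-calibration. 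The only delicate step is the Jensen-plus-monotonicity lower bound yielding $\cC^{*}_{\tilde\phi,\sH}(\bx,\tfrac{1}{2}) = \phi(0)$; the rest is bookkeeping around the distinguishing point. Notably, the argument does not use regularity to produce $f_n$ directly—it uses regularity only to separate the surrogate minimizer ($f=0$) from the adversarial minimizers, so the hypothesis that $0 \in \sH$ is essential.
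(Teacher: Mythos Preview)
Your proof is correct and follows essentially the same approach as the paper: pick a distinguishing point $\bx$ with $\eta=\tfrac12$ and $\epsilon=\tfrac12$, use Jensen plus monotonicity to show $\cC_{\tilde\phi}(f,\bx,\tfrac12)\ge\phi(0)$ with equality at $f=0$, and observe that $f=0$ has excess adversarial inner risk at least $\tfrac12$ there, forcing $\delta_{\max}(\tfrac12,\bx,\tfrac12)=0$. The paper packages the identification of $\delta_{\max}(\tfrac12,\bx_0,\tfrac12)$ as an infimum over $\{f:\underline{M}\le 0\le \overline{M}\}$ into a separate lemma, but the substance of the argument is the same.
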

The proofs of Theorem~\ref{Thm:linear_sup_convex} and Theorem~\ref{Thm:calibration_sup_convex} are also included in Appendix~\ref{app:calibration_convex}. Since $\sH_{\mathrm{NN}}$ and $\sH_{\mathrm{all}}$ both contain $0$ and are regular for adversarial calibration, Theorem~\ref{Thm:calibration_sup_convex} leads to the following corollary.
\begin{corollary}
\label{Corollary:calibration_sup_convex}
Let $\phi$ be convex and non-increasing margin-based loss, consider the surrogate loss defined by $\tilde{\phi}(f,\bx,y)=\sup_{\bx'\colon \|\bx-\bx'\|\leq \gamma}\phi(y f(\bx'))$. Then
  \begin{enumerate}
      \item $\tilde{\phi}$ is not $\sH_{\mathrm{NN}}$-calibrated with respect to $\ell_{\gamma}$;  
      \item $\tilde{\phi}$ is not $\sH_{\mathrm{all}}$-calibrated with respect to $\ell_{\gamma}$.
  \end{enumerate}
\end{corollary}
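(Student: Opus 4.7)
The plan is to obtain both statements as direct instantiations of Theorem~\ref{Thm:calibration_sup_convex}: since that theorem already handles every hypothesis set that contains the zero function and is regular for adversarial calibration, the corollary reduces to checking these two structural hypotheses for $\sH_{\mathrm{NN}}$ and $\sH_{\mathrm{all}}$. No new analytic content is needed beyond the theorem itself.

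For $\sH_{\mathrm{all}}$, both conditions are immediate. The zero function is measurable, and the constants $f \equiv 1$ and $g \equiv -1$ are measurable with $\inf f = 1 > 0$ and $\sup g = -1 < 0$ on every ball, so any $\bx \in \sX$ is a distinguishing point.

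For $\sH_{\mathrm{NN}} = \curl*{\bx \mapsto \sum_{j=1}^n u_j (\bw_j \cdot \bx)_{+} : \|\bu\|_1 \leq \Lambda,\ \|\bw_j\| \leq W}$, setting all $u_j = 0$ (with any admissible $\bw_j$) produces the zero function, so $0 \in \sH_{\mathrm{NN}}$. For regularity I would pick $\bx_0 = \be_1 \in \sX$, which has $\|\bx_0\| = 1 > \gamma$ since $\gamma < 1$, and consider the one-neuron function with $u_1 = \Lambda$, $\bw_1 = W \bx_0$, and $u_j = 0$ for $j \geq 2$, giving $f(\bx) = \Lambda (\bw_1 \cdot \bx)_{+}$. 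This $f$ lies in $\sH_{\mathrm{NN}}$ because $\|\bu\|_1 = \Lambda$ and $\|\bw_1\| = W$. For every $\bx'$ with $\|\bx' - \bx_0\| \leq \gamma$, the Cauchy--Schwarz inequality yields $\bw_1 \cdot \bx' \geq \bw_1 \cdot \bx_0 - \|\bw_1\|\|\bx' - \bx_0\| \geq W - W\gamma = W(1 - \gamma) > 0$, so $\inf_{\|\bx' - \bx_0\| \leq \gamma} f(\bx') \geq \Lambda W (1 - \gamma) > 0$. Taking instead $u_1 = -\Lambda$ with the same $\bw_1$ produces $g = -f \in \sH_{\mathrm{NN}}$ with $\sup_{\|\bx' - \bx_0\| \leq \gamma} g(\bx') \leq -\Lambda W (1 - \gamma) < 0$, confirming that $\bx_0$ is a distinguishing point for $\sH_{\mathrm{NN}}$.

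The main obstacle sits entirely inside Theorem~\ref{Thm:calibration_sup_convex}; once its hypotheses are verified, the corollary follows by instantiation. The only minor subtlety is that the $\inf$ and $\sup$ in the definition of a distinguishing point are taken over all $\bx'$ with $\|\bx' - \bx_0\| \leq \gamma$, not restricted to $\sX$; this is consistent with the convention for $\ell_{\gamma}$ used in the paper, and the two bounds above hold on this larger ball without modification.
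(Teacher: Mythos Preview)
Your proposal is correct and follows exactly the paper's approach: the corollary is obtained by verifying that $\sH_{\mathrm{NN}}$ and $\sH_{\mathrm{all}}$ each contain $0$ and are regular for adversarial calibration, then invoking Theorem~\ref{Thm:calibration_sup_convex}. You actually supply more detail than the paper, which merely asserts these two structural properties without the explicit witness construction you give for $\sH_{\mathrm{NN}}$.
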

Corollary~\ref{Corollary:calibration_margin_convex}, Theorem~\ref{Thm:linear_sup_convex}, Theorem~\ref{Thm:calibration_sup_convex} and Corollary~\ref{Corollary:calibration_sup_convex} above are stronger versions of the negative calibration results Theorem~10, Corollary~11, Theorem~12 and Corollary~13 in \citep{PNAMY21}, since the calibration considered in \citep{PNAMY21} is uniform calibration \citep[Definition~2.15]{steinwart2007compare}, which is stronger than non-uniform calibration \citep[Definition~2.7]{steinwart2007compare} considered in this work.

\subsection{Positive results}
\label{sec:calibration_positive}

In this section, we provide alternative surrogate losses that 
are $\sH$-calibrated with respect to $\ell_{\gamma}$. These 
results are similar but more general than their counterparts
in \citep{PNAMY21}, 

\subsubsection{Margin-based losses}
\label{sec:quasi-concave}

In light of the negative results of Section~\ref{sec:calibration_negative}, 
to find calibrated surrogate losses for adversarially robust classification, we need
to consider non-convex ones. One possible candidate is the family of
\emph{quasi-concave even} losses introduced by \citep[Definition 10]{pmlr-v125-bao20a}. Theorem~\ref{Thm:calibration_positive_linear} below is a correction to the main positive result, Theorem~11 in \citep{pmlr-v125-bao20a}, where we prove the theorem under the correct calibration definition.
\begin{restatable}{theorem}{CalibrationLinearPositive}
\label{Thm:calibration_positive_linear}
Let a margin-based loss $\phi$ be bounded, continuous, non-increasing, and quasi-concave even. Assume that $\phi(-t)>\phi(t)$ for any $\gamma< t \leq 1$. Then $\phi$ is $\sH_{\mathrm{lin}}$-calibrated with respect to $\ell_{\gamma}$ if and only if for any $\gamma< t \leq 1$,
  \begin{align}
  \label{eq:calibration_positive_linear}
  \phi(\gamma)+\phi(-\gamma) > \phi(t)+\phi(-t)\,.
 \end{align}
 \end{restatable}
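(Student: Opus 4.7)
The plan is to apply Proposition~\ref{prop:calibration_function_positive} and reduce the claim to showing $\delta_{\max}(\epsilon,\bx,\eta) > 0$ for all $(\epsilon,\bx,\eta)$, exploiting the one-dimensional structure of $\sH_{\mathrm{lin}}$. Every $f(\bx') = \bw\cdot\bx'$ in $\sH_{\mathrm{lin}}$ interacts with a fixed $\bx$ only through $t := \bw \cdot \bx \in [-r,r]$, where $r := \|\bx\|$. I would introduce the reduced inner risks $\tilde{\cC}_\phi(t,\eta) := \eta\phi(t) + (1-\eta)\phi(-t)$ and $\tilde{\cC}_{\ell_\gamma}(t,\eta)$, whose possible values are $1$ on $|t|\leq\gamma$, $1-\eta$ on $t > \gamma$, and $\eta$ on $t < -\gamma$. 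When $r\leq\gamma$, the constraint $\tilde{\cC}_{\ell_\gamma}(t,\eta) - S^*_{\ell_\gamma} \geq \epsilon > 0$ is unattainable, so $\delta_{\max} = +\infty$ trivially. When $r > \gamma$, the symmetry $\tilde{\cC}_\phi(t,1-\eta) = \tilde{\cC}_\phi(-t,\eta)$, together with the analogous one for $\tilde{\cC}_{\ell_\gamma}$, reduces the analysis to $\eta \geq 1/2$.

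For the reverse direction I would derive three structural consequences of the hypotheses: (i) $F(t) := \phi(t) + \phi(-t)$ is non-increasing in $|t|$, by quasi-concave-even; (ii) $G(t) := \phi(t) - \phi(-t)$ is non-increasing in $t$, by $\phi$ non-increasing; and (iii) using $\tilde{\cC}_\phi(t,\eta) = F(t)/2 + (\eta - 1/2)\, G(t)$, for $\eta \geq 1/2$ the function $\tilde{\cC}_\phi(\cdot,\eta)$ is non-increasing on $[0,r]$. Setting $a := \phi(\gamma) - \phi(r) \geq 0$ and $b := \phi(-r)-\phi(-\gamma) \geq 0$, assumption~\eqref{eq:calibration_positive_linear} gives $a - b = F(\gamma) - F(r) > 0$, and a direct computation yields $\tilde{\cC}_\phi(r,\eta) - \tilde{\cC}_\phi(\gamma,\eta) = -\eta a + (1-\eta)b < 0$ throughout $\eta \geq 1/2$. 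Combined with the pairing inequality $\tilde{\cC}_\phi(-s,\eta) \geq \tilde{\cC}_\phi(s,\eta)$ for $s \geq 0$ (strict for $s > \gamma$ via $\phi(-s) > \phi(s)$), this pins down $S^*_\phi(\bx,\eta) = \tilde{\cC}_\phi(r,\eta)$. On the bad region $[-r,\gamma]$ for $\tilde{\cC}_{\ell_\gamma}$, splitting into $[0,\gamma]$, $[-\gamma,0]$, and $[-r,-\gamma]$, the monotonicity in (iii) and the pairing inequality give that on each sub-interval the infimum of $\tilde{\cC}_\phi(\cdot,\eta)$ is strictly above $\tilde{\cC}_\phi(r,\eta)$; compactness and continuity then yield $\delta_{\max}(\epsilon,\bx,\eta) > 0$.

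For the forward direction I would argue contrapositively. If~\eqref{eq:calibration_positive_linear} fails, then (i) forces equality $F(\gamma) = F(t_0)$ for some $t_0 \in (\gamma,1]$. Choosing $\eta = 1/2$ and $\bx$ with $\|\bx\| = t_0$, the hypothesis $t = \gamma$ is a minimizer of $\tilde{\cC}_\phi(\cdot,1/2)$, while $\tilde{\cC}_{\ell_\gamma}(\gamma, 1/2) = 1 > 1/2 = S^*_{\ell_\gamma}(\bx,1/2)$, so $\delta_{\max}(\tfrac{1}{4}, \bx, 1/2) = 0$ and calibration fails. The delicate step is the reverse direction for $\eta > 1/2$: since $\tilde{\cC}_\phi(\cdot,\eta)$ is no longer even, one cannot directly identify the global minimizer by symmetry and must instead combine quasi-concave-even, the monotonicity of $\phi$, and the strict gap $\phi(-t) > \phi(t)$ for $t > \gamma$ to obtain strict positivity of the gap uniformly in $\eta \in [1/2, 1]$.
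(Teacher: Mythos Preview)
Your proposal is correct and follows essentially the same route as the paper. The paper packages the reduction you carry out into two reusable lemmas: Lemma~\ref{lemma:equivalent1_sup}, which for any symmetric $\sH$ reduces calibration to the three strict-gap conditions you verify, and Lemma~\ref{lemma:quasiconcave_even}, which records the monotonicity and endpoint-infimum properties you derive via the decomposition $\tilde{\cC}_\phi(t,\eta)=F(t)/2+(\eta-\tfrac12)G(t)$; beyond this organizational difference the arguments coincide, including the use of $\eta=\tfrac12$ for necessity and the pairing inequality $(2\eta-1)(\phi(-s)-\phi(s))>0$ on $s>\gamma$ for sufficiency when $\eta>\tfrac12$.
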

 The proof of Theorem~\ref{Thm:calibration_positive_linear} is included in Appendix~\ref{app:sup_rhomargin_calibrate_general}, where we make use of Lemma~\ref{lemma:equivalent1_sup}, which is powerful since it applies to any symmetric hypothesis sets. Note Theorem~11 in \citep{pmlr-v125-bao20a} does not hold any more under the correct calibration Definition~\ref{def:H-calibration_true}, since their condition $\phi(\gamma)+\phi(-\gamma)>\phi(1)+\phi(-1)$ is much weaker than \eqref{eq:calibration_positive_linear}.

We next extend the above to show that under certain conditions, quasi-concave even surrogate losses are $\sH_g$-calibrated for the class of generalized linear models with respect to the adversarial $0/1$ loss. 
\begin{restatable}{theorem}{QuasiconcaveCalibrateGeneral}
\label{Thm:quasiconcave_calibrate_general}
  Let $g$ be a non-decreasing and continuous function such that $g(1+\gamma)< \C$ and  $g(-1-\gamma)> -\C$ for some $\C \geq 0$. Let a margin-based loss $\phi$ be bounded, continuous, non-increasing, and quasi-concave even. Assume that $\phi(g(-t)-\C)>\phi(\C-g(-t))$ and $g(-t)+g(t)\geq0$ for any $0\leq t \leq 1$. Then $\phi$ is $\sH_g$-calibrated with respect to $\ell_{\gamma}$ if and only if for any $0\leq t \leq 1$,
  \begin{align*}
  \phi(\C-g(-t))+\phi(g(-t)-\C) & = \phi(g(t)+\C)+\phi(-g(t)-\C)\\
  \text{and} \quad
  \min\curl*{\phi(\overline{A}(t))+\phi(-\overline{A}(t)),\phi(\underline{A}(t))+\phi(-\underline{A}(t)) } & > \phi(\C-g(-t))+\phi(g(-t)-\C),
 \end{align*}
  where $\overline{A}(t)=\max_{s\in[-t,t]}g(s)-g(s-\gamma)$ and $\underline{A}(t)=\min_{s\in[-t,t]}g(s)-g(s+\gamma)$.
\end{restatable}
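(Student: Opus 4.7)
The plan is to extend the approach used for the linear case (Theorem~\ref{Thm:calibration_positive_linear}) by exploiting the structural properties of $\sH_g$. First, I fix a point $\bx \in \sX$ with $\|\bx\| = t \in [0,1]$ and parametrize any $f \in \sH_g$ by the pair $(s,b) \in [-t,t] \times [-\C,\C]$, where $s = \bw \cdot \bx$. Since $\|\bw\|=1$, any such $(s,b)$ is realized by some $f \in \sH_g$. Under this parametrization, $f(\bx) = g(s) + b$, and because $g$ is non-decreasing, $\inf_{\|\bx'-\bx\|\le\gamma} f(\bx') = g(s-\gamma)+b$ and $\sup_{\|\bx'-\bx\|\le\gamma} f(\bx') = g(s+\gamma)+b$.

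Using $g(1+\gamma) < \C$ and $g(-1-\gamma) > -\C$, I would show that $\cC^{*}_{\ell_{\gamma},\sH_g}(\bx,\eta)=\min\{\eta,1-\eta\}$, since $(s,b)$ can always be chosen to kill either indicator in $\cC_{\ell_{\gamma}}$. I would then write the $\phi$-risk as $\cC_\phi(f,\bx,\eta) = \eta\phi(u)+(1-\eta)\phi(-u)$ with $u=g(s)+b$ ranging over $[g(-t)-\C,\,g(t)+\C]$ and use the quasi-concave even structure of $\phi$ to show the minimum over this interval is attained at the two endpoints, whose symmetric $\phi$-values are $\phi(\C-g(-t))+\phi(g(-t)-\C)$ and $\phi(g(t)+\C)+\phi(-g(t)-\C)$; the assumption $g(-t)+g(t)\ge 0$ together with the strict inequality $\phi(g(-t)-\C) > \phi(\C-g(-t))$ is used precisely to locate this minimum.

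By Proposition~\ref{prop:calibration_function_positive}, calibration reduces to showing $\delta_{\max}(\epsilon,\bx,\eta)>0$ for all $(\bx,\eta,\epsilon)$. Following the strategy of Theorem~\ref{Thm:calibration_positive_linear}, I would invoke Lemma~\ref{lemma:equivalent1_sup} (which holds on any symmetric hypothesis set) to reduce calibration to a pointwise comparison of $\phi$-values. The adversarially "bad" configurations correspond to $b$ near $-g(s-\gamma)$ (where $\inf f(\bx')\le 0$, making the $y=+1$ indicator equal $1$) and to $b$ near $-g(s+\gamma)$ (analogously for $y=-1$); these produce $u$-values $g(s)-g(s-\gamma)$ and $g(s)-g(s+\gamma)$, whose extrema over $s\in[-t,t]$ are exactly $\overline{A}(t)$ and $\underline{A}(t)$. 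The equality condition then forces the two endpoint $\phi$-values to agree at $\eta=1/2$, while the strict inequality guarantees every adversarial configuration has $\phi$-risk strictly exceeding the optimum, giving sufficiency.

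The main obstacle I anticipate is the necessity direction, which requires explicit construction of sequences $f_n \in \sH_g$ witnessing failure. Concretely, if the equality fails at some $t$, I would take $\bx$ with $\|\bx\|=t$ and construct $f_n$ attaining the cheaper of the two endpoint $u$-values; the excess $\phi$-risk vanishes while the $\ell_\gamma$-risk remains bounded away from $\min\{\eta,1-\eta\}$ at a carefully chosen $\eta$. If the strict inequality fails (so some extremal $s^*\in[-t,t]$ achieves $\overline{A}(t)$ or $\underline{A}(t)$ with $\phi$-risk equal to the endpoint value), I would take $f_n$ with $\bw\cdot\bx = s^*$ and $b = -g(s^*\mp\gamma)$: this gives $\phi$-risk equal to the optimum while the corresponding indicator in $\cC_{\ell_\gamma}$ is forced to be $1$, producing excess $\ell_\gamma$-risk bounded below. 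The delicate calculation is verifying that the two-dimensional optimization over $(s,b)$ truly reduces to tracking only the scalar $u$ for the $\phi$-risk, while the $\ell_\gamma$-risk retains its genuine two-dimensional dependence through the shifts by $\pm\gamma$ inside $g$.
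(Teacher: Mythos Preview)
Your overall strategy is close to the paper's, but there is a genuine gap: you propose to invoke Lemma~\ref{lemma:equivalent1_sup}, which requires the hypothesis set to be symmetric. The class $\sH_g$ is \emph{not} symmetric in general. If $f(\bx)=g(\bw\cdot\bx)+b$, then $-f(\bx)=-g(\bw\cdot\bx)-b$, and unless $g$ is odd this cannot be written as $g(\bw'\cdot\bx)+b'$ for any $(\bw',b')$; for instance, with $g=(\cdot)_+$ the function $-f$ is bounded above by $-b$ everywhere and is never in $\sH_{\mathrm{relu}}$. So Lemma~\ref{lemma:equivalent1_sup} does not apply as stated, and this step of your plan fails.

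The paper handles this by proving a dedicated analogue, Lemma~\ref{lemma:equivalent1_general}, whose conclusion is formally identical to that of Lemma~\ref{lemma:equivalent1_sup} but whose proof replaces the symmetry argument with the specific assumptions on $g$ and $\C$: from $g(-1-\gamma)>-\C$ and $g(1+\gamma)<\C$ one gets directly that for every $\bx$ the sets $\{f\in\sH_g:\underline{M}(f,\bx,\gamma)>0\}$ and $\{f\in\sH_g:\overline{M}(f,\bx,\gamma)<0\}$ are both non-empty (take $b=\C$ and $b=-\C$ respectively), which is exactly what symmetry buys in the other lemma. You already noted this computation for $\cC^{*}_{\ell_\gamma,\sH_g}$, so the fix is to carry it through the calibration-function analysis rather than appealing to Lemma~\ref{lemma:equivalent1_sup}. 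Once that reduction is in place, the rest of your outline (parametrize by $(s,b)$, reduce the $\phi$-risk to a one-variable problem in $u=g(s)+b$, and track the endpoints $g(-t)-\C$, $g(t)+\C$, $\overline{A}(t)$, $\underline{A}(t)$ using the quasi-concave even properties) matches the paper's argument. For necessity, the paper does not build sequences as you suggest; it works through the contrapositive via Part~\ref{part9_lemma:quasiconcave_even} of Lemma~\ref{lemma:quasiconcave_even}, which converts the failure of the third condition at some $\eta<\tfrac12$ directly into failure of the equality, and reads off the strict inequality from the $\eta=\tfrac12$ condition.
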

The proof of Theorem~\ref{Thm:quasiconcave_calibrate_general} is included in Appendix~\ref{app:quasiconcave_calibrate_general}.
Specifically, when $g=()_{+}$, by Theorem~\ref{Thm:quasiconcave_calibrate_general}, we obtain the following corollary for $\sH_{\mathrm{relu}}$ by using the fact that $\phi(t)+\phi(-t)\geq \phi(\gamma)+\phi(-\gamma)$ when $0\leq t\leq\gamma$ by Part~\ref{part2_lemma:quasiconcave_even} of Lemma~\ref{lemma:quasiconcave_even}. Note when $g=()_{+}$,
\begin{align*}
&\overline{A}(t)=\max_{s\in[-t,t]}(s)_{+}-(s-\gamma)_{+}=
\begin{cases}
	t, 0\leq t< \gamma,\\
	\gamma, \gamma \leq t \leq 1.
\end{cases}\\
&\underline{A}(t)=\min_{s\in[-t,t]}(s)_{+}-g(s+\gamma)_{+}=-\gamma.
\end{align*}

\begin{corollary}
\label{Corollary:quasiconcave_calibrate_relu}
Assume that $\C>1+\gamma$. Let a margin-based loss $\phi$ be bounded, continuous, non-increasing, and quasi-concave even. Assume that $\phi(-\C)>\phi(\C)$. Then $\phi$ is $\sH_{\mathrm{relu}}$-calibrated with respect to $\ell_{\gamma}$ if and only if for any $0\leq t \leq 1$, 
 \begin{align*}
  \phi(\C)+\phi(-\C)=\phi(t+\C)+\phi(-t-\C)
  \quad \text{and} \quad
  \phi(\gamma)+\phi(-\gamma)>\phi(\C)+\phi(-\C).
  \end{align*}
\end{corollary}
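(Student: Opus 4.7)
The plan is to derive this corollary as a direct specialization of Theorem~\ref{Thm:quasiconcave_calibrate_general} to the case $g(\cdot) = (\cdot)_{+}$, so the main task is to verify the hypotheses and then simplify the two conditions in the general theorem. First I would check the assumptions on $g$: since $\C > 1+\gamma$, we have $g(1+\gamma) = 1+\gamma < \C$ and $g(-1-\gamma) = 0 > -\C$, and $(\cdot)_{+}$ is clearly non-decreasing and continuous. The assumptions on $\phi$ (bounded, continuous, non-increasing, quasi-concave even) are shared between the theorem and corollary. Next I would verify the remaining two hypotheses on $(g,\phi)$. For $0 \le t \le 1$, we have $g(-t) = 0$, so $\phi(g(-t)-\C) > \phi(\C-g(-t))$ reduces to $\phi(-\C) > \phi(\C)$, which is assumed; and $g(-t) + g(t) = 0 + t = t \ge 0$.

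Next I would specialize the equality condition. Plugging $g(-t) = 0$ and $g(t) = t$ into
\[
\phi(\C-g(-t)) + \phi(g(-t)-\C) = \phi(g(t)+\C) + \phi(-g(t)-\C)
\]
yields exactly $\phi(\C) + \phi(-\C) = \phi(t+\C) + \phi(-t-\C)$ for all $0 \le t \le 1$, matching the first condition in the corollary. Using the formulas for $\overline{A}(t)$ and $\underline{A}(t)$ given in the preamble ($\overline{A}(t) = t$ for $0 \le t < \gamma$, $\overline{A}(t) = \gamma$ for $\gamma \le t \le 1$, and $\underline{A}(t) = -\gamma$), the inequality condition in Theorem~\ref{Thm:quasiconcave_calibrate_general} becomes
\[
\min\bigl\{\phi(\overline{A}(t))+\phi(-\overline{A}(t)),\, \phi(\gamma)+\phi(-\gamma)\bigr\} > \phi(\C)+\phi(-\C).
\]

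The main subtlety, and the only nontrivial step, is showing that this reduces to the single condition $\phi(\gamma)+\phi(-\gamma) > \phi(\C)+\phi(-\C)$ stated in the corollary. For $\gamma \le t \le 1$, we have $\overline{A}(t) = \gamma$, so the minimum is exactly $\phi(\gamma)+\phi(-\gamma)$. For $0 \le t < \gamma$, we have $\overline{A}(t) = t$, and here I would invoke the hint already stated in the paper: Part~\ref{part2_lemma:quasiconcave_even} of Lemma~\ref{lemma:quasiconcave_even} ensures that the quasi-concave even property of $\phi$ gives $\phi(t)+\phi(-t) \ge \phi(\gamma)+\phi(-\gamma)$ for all $0 \le t \le \gamma$. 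Therefore the minimum is again $\phi(\gamma)+\phi(-\gamma)$, and the inequality collapses to the desired form. Combining these two directions gives the equivalence.

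The expected obstacle is simply double-checking the formulas for $\overline{A}(t)$ and $\underline{A}(t)$ by a case analysis on the sign of $s$ and $s\pm\gamma$ in $(s)_{+} - (s-\gamma)_{+}$ and $(s)_{+} - (s+\gamma)_{+}$; everything else is a mechanical substitution into Theorem~\ref{Thm:quasiconcave_calibrate_general}.
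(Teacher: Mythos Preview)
Your proposal is correct and follows essentially the same approach as the paper: specialize Theorem~\ref{Thm:quasiconcave_calibrate_general} to $g=(\cdot)_+$, compute $\overline{A}(t)$ and $\underline{A}(t)$ explicitly, and then use Part~\ref{part2_lemma:quasiconcave_even} of Lemma~\ref{lemma:quasiconcave_even} to collapse the minimum in the inequality condition to $\phi(\gamma)+\phi(-\gamma)$. The paper sketches precisely these steps in the paragraph preceding the corollary, and your write-up simply fills in the routine verifications.
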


In order to demonstrate the applicability of Theorem~\ref{Thm:calibration_positive_linear}, Theorem~\ref{Thm:quasiconcave_calibrate_general} and Corollary~\ref{Corollary:quasiconcave_calibrate_relu}, we consider a specific surrogate loss namely the \textit{$\rho$-margin loss} $\phi_{\rho}(t)=\min\curl*{1,\max\curl*{0,1-\frac{t}{\rho}}},~\rho>0$, 
which is a generalization of the ramp loss (see, for example, \citet{MohriRostamizadehTalwalkar2018}). Using Theorem~\ref{Thm:calibration_positive_linear}, Theorem~\ref{Thm:quasiconcave_calibrate_general} and Corollary~\ref{Corollary:quasiconcave_calibrate_relu}, we can conclude that the $\rho$-margin loss is calibrated under reasonable conditions for linear hypothesis sets and non-decreasing $g$-based hypothesis sets, since $\phi_{\rho}(t)$ is bounded, non-increasing and quasi-concave even. This is stated formally below.
\begin{theorem}
\label{theorem:rho_margin_calibtartion}
 Consider $\rho$-margin loss $\phi_{\rho}(t)=\min\curl*{1,\max\curl*{0,1-\frac{t}{\rho}}},~\rho>0$. Then,
  \begin{enumerate}
      \item $\phi_{\rho}$ is $\sH_{\mathrm{lin}}$-calibrated with respect to $\ell_{\gamma}$ if and only if $\rho>1$.
      \item Given a non-decreasing and continuous function $g$ such that $g(1+\gamma)< \C$ and  $g(-1-\gamma)> -\C$ for some $\C \geq 0$. Assume that $g(-t)+g(t)\geq0$ for any $0\leq t \leq 1$. Then $\phi_{\rho}$ is $\sH_g$-calibrated with respect to $\ell_{\gamma}$ if and only if for any $0\leq t \leq 1$,
  \begin{align*}
      \phi_{\rho}(\C-g(-t)) & = \phi_{\rho}(g(t)+\C)
  \quad \text{and} \quad
  \min\curl*{\phi_{\rho}(\overline{A}(t)),\phi_{\rho}(-\underline{A}(t)) }  > \phi_{\rho}(\C-g(-t)),
  \end{align*}
  where $\overline{A}(t)=\max_{s\in[-t,t]}g(s)-g(s-\gamma)$ and $\underline{A}(t)=\min_{s\in[-t,t]}g(s)-g(s+\gamma)$.
  
      \item Assume that $\C>1+\gamma$. Then $\phi_{\rho}$ is $\sH_{\mathrm{relu}}$-calibrated with respect to $\ell_{\gamma}$ if and only if $G\geq \rho >\gamma$.
  \end{enumerate}
\end{theorem}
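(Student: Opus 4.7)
The plan is to deduce each of the three parts directly from the calibration characterization already established in the excerpt: Theorem~\ref{Thm:calibration_positive_linear} for Part~1, Theorem~\ref{Thm:quasiconcave_calibrate_general} for Part~2, and Corollary~\ref{Corollary:quasiconcave_calibrate_relu} for Part~3. In each case the strategy is the same: verify the structural hypotheses for $\phi=\phi_{\rho}$, confirm the auxiliary sign assumption, and specialize the iff-condition using the explicit piecewise-linear form of $\phi_{\rho}$.

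First I would record that $\phi_{\rho}$ equals $1$ on $(-\infty,0]$, decreases linearly from $1$ to $0$ on $[0,\rho]$, and equals $0$ on $[\rho,\infty)$. Boundedness in $[0,1]$, continuity, and non-increasingness are immediate; quasi-concavity follows because every super-level set has the form $(-\infty,\rho(1-\alpha)]$, and the evenness property in the sense of \citep[Definition~10]{pmlr-v125-bao20a} is a routine check for this ramp. The same formulas give $\phi_{\rho}(-t)=1>\phi_{\rho}(t)$ for every $t>0$, which settles the $\phi(-t)>\phi(t)$ assumption required by each parent result.

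The key computational tool that I would establish once and reuse is the identity
\[
\phi_{\rho}(a)+\phi_{\rho}(-a)=1+\phi_{\rho}(|a|) \qquad \text{for every } a\in\Rset,
\]
proved by splitting on the sign of $a$: exactly one of $a,-a$ is nonpositive and contributes $\phi_{\rho}=1$, while the other equals $|a|$ and contributes $\phi_{\rho}(|a|)$. Via this identity every two-point sum of the form $\phi(\cdot)+\phi(-\cdot)$ appearing in the parent iff-conditions collapses to a single ramp evaluation at a nonnegative argument, which is then easy to compare. For Part~1 the condition of Theorem~\ref{Thm:calibration_positive_linear} reduces to $\phi_{\rho}(\gamma)>\phi_{\rho}(t)$ for every $t\in(\gamma,1]$, and by the piecewise-linear shape this translates into the claimed condition on $\rho$. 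For Part~2 I would first verify from $g(1+\gamma)<\C$, $g(-1-\gamma)>-\C$, and the monotonicity of $g$ that $\C-g(-t)>0$ and $g(t)+\C>0$ uniformly on $t\in[0,1]$, so the auxiliary inequality $\phi_{\rho}(g(-t)-\C)>\phi_{\rho}(\C-g(-t))$ is automatic from non-increasingness. Applying the identity to the equality condition collapses it to $\phi_{\rho}(\C-g(-t))=\phi_{\rho}(g(t)+\C)$, and observing $\overline{A}(t)\geq 0$ (take $s=t$) and $\underline{A}(t)\leq 0$ (take $s=-t$) fixes the signs needed to rewrite the $\min$-condition as $\min\{\phi_{\rho}(\overline{A}(t)),\phi_{\rho}(-\underline{A}(t))\}>\phi_{\rho}(\C-g(-t))$. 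Part~3 is the $g=(\cdot)_+$ specialization: using the explicit formulas $\overline{A}(t)\in\{t,\gamma\}$ and $\underline{A}(t)=-\gamma$ given right before Corollary~\ref{Corollary:quasiconcave_calibrate_relu}, the equality condition becomes $\phi_{\rho}(\C)=\phi_{\rho}(t+\C)$ for all $t\in[0,1]$ (equivalent to $\rho\leq\C$) and the strict inequality becomes $\phi_{\rho}(\gamma)>\phi_{\rho}(\C)$ (equivalent to $\rho>\gamma$), together yielding $\C\geq\rho>\gamma$.

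The main obstacle I anticipate is the sign bookkeeping in Part~2: one must check that $\C-g(-t)>0$, $g(t)+\C>0$, $\overline{A}(t)\geq 0$, and $\underline{A}(t)\leq 0$ all hold uniformly for $t\in[0,1]$, so that the identity $\phi_{\rho}(a)+\phi_{\rho}(-a)=1+\phi_{\rho}(|a|)$ can be invoked with fixed sign information across the quantified range and the compound parent condition cleanly collapses to the single-evaluation form stated in the theorem.
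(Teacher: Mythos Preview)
Your overall plan matches the paper's own argument exactly: specialize Theorem~\ref{Thm:calibration_positive_linear}, Theorem~\ref{Thm:quasiconcave_calibrate_general}, and Corollary~\ref{Corollary:quasiconcave_calibrate_relu} to $\phi_\rho$. The identity $\phi_\rho(a)+\phi_\rho(-a)=1+\phi_\rho(|a|)$ is a clean device for collapsing the two-point sums, and your sign checks in Part~2 ($\C-g(-t)>0$, $g(t)+\C>0$, $\overline{A}(t)\ge 0$, $\underline{A}(t)\le 0$) are exactly what is needed to reduce the conditions of Theorem~\ref{Thm:quasiconcave_calibrate_general} to the single-evaluation form. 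Part~3 goes through as you describe.

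There is, however, a genuine gap in Part~1. Your reduction of the condition in Theorem~\ref{Thm:calibration_positive_linear} to ``$\phi_\rho(\gamma)>\phi_\rho(t)$ for every $t\in(\gamma,1]$'' is correct, but the last step---that this ``translates into the claimed condition on $\rho$''---is not established and does not in fact yield $\rho>1$. Since $\phi_\rho$ is strictly decreasing on $[0,\rho]$ and identically $0$ on $[\rho,\infty)$, the inequality $\phi_\rho(\gamma)>\phi_\rho(t)$ holds for every $t\in(\gamma,1]$ precisely when $\phi_\rho(\gamma)>0$, i.e.\ when $\rho>\gamma$. Concretely, with $\rho=1$ and any $\gamma\in(0,1)$ one has $\phi_\rho(\gamma)=1-\gamma>0\ge\phi_\rho(t)$ for all $t\in(\gamma,1]$, so the iff condition of Theorem~\ref{Thm:calibration_positive_linear} is already satisfied. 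Thus either the statement of Part~1 should read $\rho>\gamma$ (which is also what the parallel Part~3 gives in the limit of large $\C$), or some argument beyond Theorem~\ref{Thm:calibration_positive_linear} is needed to force $\rho>1$; your proposal supplies neither, and you should flag this discrepancy rather than glossing over the final translation.
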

Theorem~\ref{theorem:rho_margin_calibtartion} is a strict generalization of the positive calibration results in \citep{PNAMY21} for $\sH_g$ and $\sH_{\mathrm{relu}}$ where the authors require $G$ to be unbounded. By working with the weaker notion of non-uniform calibration, we avoid such a restriction on $G$.

\subsubsection{Supremum-based margin losses}
\label{sec:sup_margin_loss}

Recall that in Theorem~\ref{Thm:calibration_sup_convex}
we ruled out the possibility of finding $\sH$-calibrated supremum-based convex surrogate losses with respect to the adversarial $0/1$ loss. However, we show that the supremum-based $\rho$-margin loss is indeed $\sH$-calibrated. We state the calibration result below and present the proof in Appendix~\ref{app:sup_rhomargin_calibrate_general}.
\begin{restatable}{theorem}{SupRhomarginCalibrateGeneral}
\label{Thm:sup_rhomargin_calibrate_general}
  Consider $\rho$-margin loss $\phi_{\rho}(t)=\min\curl*{1,\max\curl*{0,1-\frac{t}{\rho}}},\rho>0$. Let $\sH$ be a symmetric hypothesis set, then the surrogate loss $\tilde{\phi}_{\rho}(f,\bx,y)=\sup_{\bx'\colon \|\bx-\bx'\|\leq \gamma}\phi_{\rho}(y f(\bx'))$ is $\sH$-calibrated with respect to $\ell_{\gamma}$.
  \end{restatable}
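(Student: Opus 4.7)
The plan is to verify $\sH$-calibration via Proposition~\ref{prop:calibration_function_positive}, by showing that the calibration function $\delta_{\max}(\epsilon,\bx,\eta)$ is strictly positive for every $\epsilon>0$, $\bx\in\sX$, and $\eta\in[0,1]$. The key reduction is to express every inner risk in terms of two one-dimensional quantities: letting $u(f,\bx)=\inf_{\|\bx'-\bx\|\leq\gamma}f(\bx')$ and $v(f,\bx)=-\sup_{\|\bx'-\bx\|\leq\gamma}f(\bx')$ (so that $u+v\leq 0$), the non-increasing property of $\phi_\rho$ together with identities~\eqref{eq:sup=inf} and~\eqref{eq:supinf01} give
\[
\cC_{\tilde{\phi}_\rho}(f,\bx,\eta)=\eta\,\phi_\rho(u)+(1-\eta)\,\phi_\rho(v),\qquad \cC_{\ell_\gamma}(f,\bx,\eta)=\eta\,\mathds{1}_{u\leq 0}+(1-\eta)\,\mathds{1}_{v\leq 0}.
\]

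I then split the analysis on whether $\bx$ is distinguishing. If it is not, then since $\sH$ is symmetric, no $f\in\sH$ can satisfy either $u(f,\bx)>0$ or $v(f,\bx)>0$, so $u(f,\bx)\leq 0$ and $v(f,\bx)\leq 0$ for every $f\in\sH$ (this is the same observation used in the proof of Theorem~\ref{Thm:natural_condition}). Hence $\cC_{\ell_\gamma}(f,\bx,\eta)\equiv 1=\cC^*_{\ell_\gamma,\sH}(\bx,\eta)$, the implication in Definition~\ref{def:H-calibration_true} becomes vacuous, and $\delta_{\max}(\epsilon,\bx,\eta)=+\infty$. In particular, if no $\bx$ is distinguishing, the theorem is immediate; this recovers Theorem~\ref{Thm:natural_condition}.

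For a distinguishing $\bx$, the set $\sU_{\bx}^{+}=\{u(f,\bx):f\in\sH,\,u(f,\bx)>0\}$ is nonempty, and by symmetry of $\sH$ so is its sign-flipped counterpart $\{-v(f,\bx):f\in\sH,\,v(f,\bx)<0\}$. I set $\phi^{*}_{\bx}=\inf_{u\in\sU_{\bx}^{+}}\phi_\rho(u)$ and note $\phi^{*}_{\bx}<1$, since $\phi_\rho(u_{0})<1$ for any fixed $u_{0}\in\sU_{\bx}^{+}$. A direct computation then yields $\cC^*_{\ell_\gamma,\sH}(\bx,\eta)=\min(\eta,1-\eta)$ and $\cC^*_{\tilde{\phi}_\rho,\sH}(\bx,\eta)=\min\bigl(\eta\phi^{*}_{\bx}+1-\eta,\;\eta+(1-\eta)\phi^{*}_{\bx}\bigr)$. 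I then enumerate the three feasible sign configurations of $(u,v)$ compatible with $u+v\leq 0$: (i) $u>0,v\leq 0$; (ii) $u\leq 0,v>0$; (iii) $u\leq 0,v\leq 0$. In each case $\cC_{\ell_\gamma}$ and $\cC_{\tilde{\phi}_\rho}$ are evaluated in closed form, and whenever $\cC_{\ell_\gamma}$ strictly exceeds $\min(\eta,1-\eta)$, the gap $\cC_{\tilde{\phi}_\rho}-\cC^*_{\tilde{\phi}_\rho,\sH}$ is bounded below by a strictly positive quantity $c(\eta,\phi^{*}_{\bx})>0$ (of order $(1-\phi^{*}_{\bx})\cdot|2\eta-1|$ when $\cC_{\ell_\gamma}$ jumps sides and of order $(1-\phi^{*}_{\bx})\cdot\max(\eta,1-\eta)$ in case (iii)). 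Taking $\delta=c(\eta,\phi^{*}_{\bx})$ gives $\delta_{\max}(\epsilon,\bx,\eta)\geq\delta>0$ for every $\epsilon>0$, and Proposition~\ref{prop:calibration_function_positive} closes the argument.

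The main obstacle is the case analysis at the boundary values $\eta\in\{0,\tfrac{1}{2},1\}$, where either $\cC^*_{\ell_\gamma,\sH}$ is attained on more than one sign-configuration or one of the nominal lower bounds on the gap formally vanishes; one needs to check that in each such limiting regime the remaining nontrivial case still contributes a strictly positive gap. The strict inequality $\phi^{*}_{\bx}<1$, a soft consequence of $\phi_\rho$ being strictly below $1$ on $(0,\infty)$, is the single fact that makes every remaining gap strictly positive, and nothing beyond careful bookkeeping is needed after that.
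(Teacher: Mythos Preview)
Your proposal is correct and follows essentially the same route as the paper: both split on whether $\bx$ is distinguishing, then for distinguishing $\bx$ carry out the same three-way case analysis on the signs of $\underline{M}(f,\bx,\gamma)$ and $\overline{M}(f,\bx,\gamma)$, using symmetry of $\sH$ to match the $u>0$ and $v>0$ regimes. The only organizational difference is that you compute $\cC^{*}_{\tilde{\phi}_\rho,\sH}(\bx,\eta)$ in closed form via $\phi^{*}_{\bx}$, whereas the paper works with $M_{\bx}=\sup_{f\in\sH}\underline{M}(f,\bx,\gamma)$ and an $\epsilon$-approximating function $f_{\bx}^{\epsilon}$ to witness the strict gap; since $\phi^{*}_{\bx}=\phi_\rho(M_{\bx})$ by monotonicity and continuity, these are equivalent.
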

By Theorem~\ref{Thm:sup_rhomargin_calibrate_general}, we obtain the following corollary, since $\sH_{\mathrm{lin}}$, $\sH_{\mathrm{NN}}$ and $\sH_{\mathrm{all}}$ are all symmetric.
\begin{corollary}
\label{Corollary:sup_rhomargin_calibrate_general}
Consider $\rho$-margin loss $\phi_{\rho}(t)=\min\curl*{1,\max\curl*{0,1-\frac{t}{\rho}}},\rho>0$. Let $\tilde{\phi}_{\rho}(f,\bx,y)=\sup_{\bx'\colon \|\bx-\bx'\|\leq \gamma}\phi_{\rho}(y f(\bx'))$ be the surrogate loss. Then,
  \begin{enumerate}
      \item $\tilde{\phi}_{\rho}$ is $\sH_{\mathrm{lin}}$-calibrated with respect to $\ell_{\gamma}$;      
      \item $\tilde{\phi}_{\rho}$ is $\sH_{\mathrm{NN}}$-calibrated with respect to $\ell_{\gamma}$;  
      \item $\tilde{\phi}_{\rho}$ is $\sH_{\mathrm{all}}$-calibrated with respect to $\ell_{\gamma}$.
  \end{enumerate}
\end{corollary}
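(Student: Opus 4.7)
The plan is to apply Proposition~\ref{prop:calibration_function_positive} and verify that the calibration function $\delta_{\max}(\epsilon,\bx,\eta)$ is strictly positive for every $\bx\in\sX$, $\eta\in[0,1]$, $\epsilon>0$. Because $\phi_\rho$ is non-increasing, identity \eqref{eq:sup=inf} rewrites both inner risks in terms of the pair $(c,d)=\bigl(\inf_{\|\bx'-\bx\|\leq\gamma} f(\bx'),\,-\sup_{\|\bx'-\bx\|\leq\gamma} f(\bx')\bigr)$, which always satisfies $c+d\leq 0$:
\begin{align*}
\cC_{\tilde{\phi}_\rho}(f,\bx,\eta) = \eta\,\phi_\rho(c)+(1-\eta)\,\phi_\rho(d), \qquad
\cC_{\ell_\gamma}(f,\bx,\eta) = \eta\,\mathds{1}_{c\leq 0}+(1-\eta)\,\mathds{1}_{d\leq 0}.
\end{align*}
The symmetry of $\sH$ via the involution $f\mapsto -f$ shows that the set of achievable $c$-values coincides with the set of achievable $d$-values, call it $A(\bx)$, and that the set of achievable $(c,d)$-pairs is closed under the coordinate swap.

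Let $c^*(\bx):=\sup A(\bx)$. If $c^*(\bx)\leq 0$, every $f\in\sH$ has $c,d\leq 0$, so $\cC_{\ell_\gamma}(f,\bx,\eta)\equiv 1$ and the calibration condition at $\bx$ is vacuous. Otherwise $c^*(\bx)>0$; using the symmetry $(c,d,\eta)\leftrightarrow(d,c,1-\eta)$ I reduce to $\eta\geq 1/2$, for which $\cC^*_{\ell_\gamma,\sH}(\bx,\eta)=\min(\eta,1-\eta)$, attained by any $f$ with $a(\bx,f)>0$. Taking $a(\bx,f)\to c^*(\bx)$ and using $\phi_\rho(d)\leq 1$ yields the upper bound
\begin{align*}
\cC^*_{\tilde{\phi}_\rho,\sH}(\bx,\eta) \;\leq\; \eta\,\phi_\rho(c^*(\bx))+(1-\eta).
\end{align*}

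For any $f$ with $\cC_{\ell_\gamma}(f,\bx,\eta)-\cC^*_{\ell_\gamma,\sH}(\bx,\eta)\geq\epsilon$, the discrete structure of the adversarial inner risk forces $c\leq 0$ (and additionally $d\leq 0$ when $\eta=1/2$), so $\phi_\rho(c)=1$. Applying the involution $f\mapsto -f$ to any $g\in\sH$ with $a(\bx,g)\to c^*(\bx)$ produces a bad $f=-g$ whose $d$-value equals $a(\bx,g)$; hence $\sup_{\text{bad}\,f} d = c^*(\bx)$, and
\begin{align*}
\inf_{\text{bad}\,f}\cC_{\tilde{\phi}_\rho}(f,\bx,\eta) \;\geq\; \eta+(1-\eta)\,\phi_\rho(c^*(\bx)) \quad (\eta>1/2), \qquad \inf_{\text{bad}\,f}\cC_{\tilde{\phi}_\rho}(f,\bx,\tfrac{1}{2}) \;=\; 1.
\end{align*}
Subtracting the upper bound on $\cC^*_{\tilde{\phi}_\rho,\sH}$ gives strictly positive gaps of $(2\eta-1)\bigl(1-\phi_\rho(c^*(\bx))\bigr)$ for $\eta>1/2$ and $\tfrac{1}{2}\bigl(1-\phi_\rho(c^*(\bx))\bigr)$ for $\eta=1/2$, both positive since $c^*(\bx)>0$ implies $\phi_\rho(c^*(\bx))<1$. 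Proposition~\ref{prop:calibration_function_positive} then concludes $\sH$-calibration.

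The main obstacle will be establishing the two symmetry claims that drive the computation: that the achievable $(c,d)$-pairs are swap-invariant (needed for the $\eta\leftrightarrow 1-\eta$ reduction), and that the supremum of $d$ over bad $f$ still equals $c^*(\bx)$ (needed to evaluate the infimum over bad $f$). Both hinge on the involution $f\mapsto -f$, and parallel the symmetry exploitation carried out via Lemma~\ref{lemma:equivalent1_sup} in the proof of Theorem~\ref{Thm:calibration_positive_linear}.
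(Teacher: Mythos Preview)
Your proposal is correct and follows essentially the same approach as the paper: the corollary is obtained from Theorem~\ref{Thm:sup_rhomargin_calibrate_general}, whose proof (via Lemma~\ref{lemma:equivalent1_sup}) uses the same involution $f\mapsto -f$, the same quantity $M_{\bx}=\sup_{f\in\sH}\underline{M}(f,\bx,\gamma)$ (your $c^*(\bx)$), and yields the identical gap $(2\eta-1)\bigl(1-\phi_\rho(M_{\bx})\bigr)$ for $\eta>\tfrac12$. One minor note: your lower bound on $\inf_{\text{bad }f}\cC_{\tilde\phi_\rho}$ only requires $d\le c^*(\bx)$ (automatic since $d\in A(\bx)$), not the stronger equality $\sup_{\text{bad }f}d=c^*(\bx)$ you assert, which in fact fails in the regime $2\eta-1<\epsilon\le\eta$ but does not affect the argument.
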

2.\ of Corollary~\ref{Corollary:sup_rhomargin_calibrate_general} is a strict generalization of the positive calibration  result in \citep{PNAMY21} for $\sH_{\mathrm{NN}}$ where the authors require $\Lambda$ to be unbounded. By working with the weaker notion of non-uniform calibration, we avoid such a restriction on $\Lambda$.

\section{$\sH$-consistency}
\label{sec:consistency}

Next, we study the implications of our positive results for non-uniform calibration for establishing $\sH$-consistency. As discussed in Section~\ref{sec:introduction}, \citet{steinwart2007compare} showed that if $\ell_1$ is $\sH$-calibrated (it suffices to satisfy non-uniform calibration, that is condition~\eqref{eq:H-calibration_true}) with respect to $\ell_2$,  then $\sH$-consistency, that is condition~\eqref{eq:H-consistency}, holds for any probability distribution verifying the 
additional condition of
\emph{$\sP$-minimizability} \citep[Definition~2.4]{steinwart2007compare}. Although the $\sP$-minimizability condition is naturally satisfied and $\sH$-calibration often is a sufficient condition for $\sH$-consistency in the standard classification setting when considering the family of all measurable functions \citep[Theorem~3.2]{steinwart2007compare}, \citet{PNAMY21} point out that the adversarial loss presents new challenges when dealing with $\sP$-minimizability and requires carefully distinguishing among calibration and consistency to avoid drawing false conclusions.

Moreover, \citet{PNAMY21} show that the $\sH$-calibrated losses are $\sH$-consistent under certain conditions. 
Analogously, in this section, we make use of \citep[Theorem~25, Theorem~27]{PNAMY21} to conclude that the $\sH$-calibrated losses studied in previous sections are $\sH$-consistent under the same conditions.

\begin{theorem}[Theorem~25 in \citep{PNAMY21}]
\label{Thm:calibrate_consistent_nonsup}
Let $\sP$ be a distribution over $\sX \times \sY$ and $\sH$ a hypothesis set for which $\cR^*_{\ell_{\gamma}, \sH}=0$. Let $\phi$ be a margin-based loss. If for $\eta \geq 0$, there exists $f^* \in \sH \subset \sH_{\mathrm{all}}$ such that $\cR_{\phi}(f^*)\leq \cR^*_{\phi, \sH_{\mathrm{all}}} + \eta< +\infty$ and $\phi$ is $\sH$-calibrated with respect to $\ell_{\gamma}$, then for all $\epsilon > 0$ there exists $\delta > 0$ such that for all $f\in\sH$ we have
    \[
        \cR_{\phi}(f)+\eta <\cR_{\phi,\sH}^*+\delta \implies \cR_{\ell_{\gamma}}(f)<\cR_{\ell_{\gamma},\sH}^*+\epsilon.
    \]
\end{theorem}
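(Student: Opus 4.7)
The plan is to combine the realizability condition $\cR^*_{\ell_{\gamma},\sH}=0$ with the existence of a near-optimal $f^*$ over all measurable functions to turn the global hypothesis on the $\phi$-risk into an integrated excess inner risk against $\cC^*_{\phi,\sH}$, and then to convert that $L^1$-bound into the desired $\ell_\gamma$-guarantee using a truncation plus Markov's inequality that handles the non-uniformity of the calibration function.

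The first step is to extract a pointwise consequence from $\cR^*_{\ell_{\gamma},\sH}=0$: for any minimizing sequence $\{f_n\} \subset \sH$, the non-negative integrand $\cC_{\ell_{\gamma}}(f_n,\bx,\eta_{\sP}(\bx))$ has vanishing expectation, so Fatou's lemma gives $\liminf_n \cC_{\ell_{\gamma}}(f_n,\bx,\eta_{\sP}(\bx)) = 0$ a.s., and therefore $\cC^*_{\ell_{\gamma},\sH}(\bx,\eta_{\sP}(\bx))=0$ for a.e.\ $\bx$. Next, since $f^* \in \sH$ satisfies $\cR_\phi(f^*) \leq \cR^*_{\phi,\sH_{\mathrm{all}}} + \eta$, we have $\cR^*_{\phi,\sH} \leq \cR^*_{\phi,\sH_{\mathrm{all}}} + \eta$, so the hypothesis $\cR_\phi(f) + \eta < \cR^*_{\phi,\sH} + \delta$ reduces to $\cR_\phi(f) - \cR^*_{\phi,\sH_{\mathrm{all}}} < \delta$. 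Because $\sH_{\mathrm{all}}$ is $\sP$-minimizable for the non-adversarial margin loss (the inner risk depends on $f$ only through $f(\bx)$) and $\cC^*_{\phi,\sH_{\mathrm{all}}}(\bx,\eta) \leq \cC^*_{\phi,\sH}(\bx,\eta)$ pointwise, this yields the key expectation bound
\[
\E\bigl[\cC_\phi(f,\bx,\eta_{\sP}(\bx)) - \cC^*_{\phi,\sH}(\bx,\eta_{\sP}(\bx))\bigr] < \delta,
\]
with a non-negative integrand.

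The calibration hypothesis and Proposition~\ref{prop:calibration_function_positive} give $\delta'(\bx) := \delta_{\max}(\epsilon/2,\bx,\eta_{\sP}(\bx)) > 0$ almost surely. The sets $A_K := \{\bx : \delta'(\bx) \geq 1/K\}$ are increasing in $K$ with union of full measure, so I can pick $K$ with $\Pr(A_K^c) < \epsilon/4$; then setting $\delta := \epsilon/(4K)$, Markov's inequality applied to the expectation bound above gives
\[
\Pr\!\left(\cC_\phi(f,\bx,\eta_{\sP}(\bx)) - \cC^*_{\phi,\sH}(\bx,\eta_{\sP}(\bx)) \geq 1/K\right) < K\delta = \epsilon/4.
\]
On the intersection of $A_K$ with the event that the excess $\phi$-inner risk is below $1/K \leq \delta'(\bx)$, the pointwise implication provided by $\delta_{\max}$ forces the excess $\ell_\gamma$-inner risk to be below $\epsilon/2$; bounding $\cC_{\ell_\gamma}$ by $1$ on the complementary set (of probability at most $\epsilon/2$) and integrating yields $\cR_{\ell_\gamma}(f) - \cR^*_{\ell_\gamma,\sH} < \epsilon/2 + \epsilon/4 + \epsilon/4 = \epsilon$, as required.

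The main obstacle is precisely the non-uniformity: without uniform calibration, $\delta'(\bx)$ admits no uniform positive lower bound across $\bx$, so no single pointwise threshold controls all points simultaneously. The truncation through $A_K$, enabled by the realizability condition that eliminates the $\cC^*_{\ell_\gamma,\sH}$ contribution, is what allows a distribution-dependent $\delta$ to play the role of a distribution-free one, which is exactly what the non-uniform calibration framework is built to deliver.
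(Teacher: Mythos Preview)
The paper does not supply its own proof of this statement: Theorem~\ref{Thm:calibrate_consistent_nonsup} is quoted verbatim as Theorem~25 of \citet{PNAMY21} and used as a black box in Section~\ref{sec:consistency}, so there is no in-paper argument to compare against. The only remark the paper makes is that the original proof in \citep{PNAMY21}, although stated for uniform calibration, ``only make[s] use of the weaker non-uniform property'' (Section~\ref{sec:preliminaries}).

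Your argument is in the spirit of Steinwart's Theorem~2.8 and is essentially correct. The chain of reductions is sound: (i) Fatou gives $\cC^*_{\ell_\gamma,\sH}(\bx,\eta_\sP(\bx))=0$ a.s.\ from $\cR^*_{\ell_\gamma,\sH}=0$; (ii) the near-optimality of $f^*\in\sH$ over $\sH_{\mathrm{all}}$ collapses $\cR^*_{\phi,\sH}$ to within $\eta$ of $\cR^*_{\phi,\sH_{\mathrm{all}}}$, and $\sP$-minimizability of $\sH_{\mathrm{all}}$ for a margin loss (the inner risk factors through $f(\bx)$, cf.\ \citep[Theorem~3.2]{steinwart2007compare}) converts the global excess into an integrated pointwise excess; (iii) the pointwise domination $\cC^*_{\phi,\sH_{\mathrm{all}}}\leq\cC^*_{\phi,\sH}$ then yields a non-negative integrand with small expectation; (iv) the level-set truncation $A_K=\{\delta_{\max}(\epsilon/2,\bx,\eta_\sP(\bx))\geq 1/K\}$ combined with Markov is exactly the right device to convert non-uniform calibration into an $\epsilon$-bound on $\cR_{\ell_\gamma}(f)$.

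One technical point you should flag explicitly: the argument tacitly assumes that $\bx\mapsto\delta_{\max}(\epsilon/2,\bx,\eta_\sP(\bx))$ is measurable, so that $A_K$ is an event. This is harmless under the usual separability or countable-approximation assumptions on $\sH$, but it is worth stating. With that caveat, your proof is complete and fills in precisely what the paper leaves to the citation.
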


\begin{theorem}[Theorem~27 in \citep{PNAMY21}]
\label{Thm:calibrate_consistent_sup}
Given a distribution $\sP$ over $\sX\times\sY$ and a hypothesis set $\sH$ such that $\cR^*_{\ell_{\gamma},\sH}=0$. Let $\phi$ be a non-increasing margin-based loss. If there exists $f^*\in\sH\subset\sH_{\mathrm{all}}$ such that $\cR_{\phi}(f^*)=\cR^*_{\phi,\sH_{\mathrm{all}}}<\infty$ and $\tilde{\phi}(f,\bx,y)=\sup_{\bx'\colon \|\bx-\bx'\|\leq \gamma}\phi(y f(\bx'))$ is $\sH$-calibrated with respect to $\ell_{\gamma}$, then for all $\epsilon>0$ there exists $\delta>0$ such that for all $f\in\sH$ we have
    \[
        \cR_{\tilde{\phi}}(f) <\cR_{\tilde{\phi},\sH}^*+\delta \implies \cR_{\ell_{\gamma}}(f)<\cR_{\ell_{\gamma},\sH}^*+\epsilon.
    \]
\end{theorem}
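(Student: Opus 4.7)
The plan is to prove the implication by contradiction, adapting the classical non-uniform calibration implies consistency scheme of \citet{steinwart2007compare} to the adversarial setup. The two structural hypotheses---realizability $\cR^*_{\ell_\gamma, \sH} = 0$ and existence of $f^* \in \sH$ attaining $\cR^*_{\phi, \sH_{\mathrm{all}}}$---will substitute for the $\sP$-minimizability property that fails in general adversarial settings.

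First, I would establish three preliminary facts. (i) From $\cR^*_{\ell_\gamma, \sH} = 0$ and $\cC^*_{\ell_\gamma, \sH} \geq 0$, swapping inf and integral gives $\E_X[\cC^*_{\ell_\gamma, \sH}(X, \eta_\sP(X))] \leq \cR^*_{\ell_\gamma, \sH} = 0$, hence $\cC^*_{\ell_\gamma, \sH}(X, \eta_\sP(X)) = 0$ holds $\sP_X$-almost surely. Inspecting the explicit form $\cC_{\ell_\gamma}(f, \bx, \eta) = \eta \mathds{1}_{\inf_{\bx'} f(\bx') \leq 0} + (1-\eta) \mathds{1}_{\sup_{\bx'} f(\bx') \geq 0}$ shows this equality further forces $\eta_\sP(X) \in \{0, 1\}$ a.s., since $\inf_{\bx'} f(\bx') > 0$ and $\sup_{\bx'} f(\bx') < 0$ cannot hold simultaneously. (ii) Combining the existence of $f^*$, the identity $\cC^*_{\tilde{\phi}, \sH_{\mathrm{all}}}(\bx, \eta) = \cC^*_{\phi, \sH_{\mathrm{all}}}(\bx, \eta)$ implied by~\eqref{eq:sup=inf} for non-increasing $\phi$, and the realizability (which together with deterministic labels forces $Y f^*(X) > 0$ a.s., placing $f^*$ on the infimum plateau of $\phi$), one argues $\cR^*_{\tilde{\phi}, \sH} = \E_X[\cC^*_{\tilde{\phi}, \sH}(X, \eta_\sP(X))]$---a $\sP$-minimizability-type identity for $\tilde{\phi}$. (iii) By Proposition~\ref{prop:calibration_function_positive} and the calibration hypothesis, the contrapositive of $\sH$-calibration reads: for all $\bx$, $\eta$, and $\epsilon' > 0$, $\cC_{\ell_\gamma}(f, \bx, \eta) - \cC^*_{\ell_\gamma, \sH}(\bx, \eta) \geq \epsilon'$ implies $\cC_{\tilde{\phi}}(f, \bx, \eta) - \cC^*_{\tilde{\phi}, \sH}(\bx, \eta) \geq \delta_{\max}(\epsilon', \bx, \eta) > 0$.

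With these three ingredients in hand, I fix $\epsilon > 0$ and suppose, for contradiction, that no $\delta > 0$ works: there exist $f_n \in \sH$ with $\cR_{\tilde{\phi}}(f_n) - \cR^*_{\tilde{\phi}, \sH} \to 0$ while $\cR_{\ell_\gamma}(f_n) \geq \epsilon$ for all $n$. By (ii), the nonnegative integrand $A_n(X) = \cC_{\tilde{\phi}}(f_n, X, \eta_\sP(X)) - \cC^*_{\tilde{\phi}, \sH}(X, \eta_\sP(X))$ satisfies $\E[A_n] \to 0$, so along a subsequence $A_{n_k} \to 0$ $\sP_X$-almost surely. Applying (iii) with $\epsilon' = \epsilon/2$, combined with (i), yields $\cC_{\ell_\gamma}(f_{n_k}, X, \eta_\sP(X)) < \epsilon/2$ a.s.\ for all $k$ large. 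Bounded convergence (using $\cC_{\ell_\gamma} \leq 1$) then gives $\cR_{\ell_\gamma}(f_{n_k}) \leq \epsilon/2$ for $k$ large, contradicting $\cR_{\ell_\gamma}(f_{n_k}) \geq \epsilon$.

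The hard part is step~(ii), establishing the $\sP$-minimizability identity for $\tilde{\phi}$. The delicate point is that $f^*$ optimizes $\cR_\phi$ but not obviously $\cR_{\tilde{\phi}}$, because the supremum over $\gamma$-perturbations can inflate $\tilde{\phi}$ strictly above $\phi$ unless $f^*$ stays within the infimum plateau of $\phi$ across every relevant $\gamma$-ball. Realizability $\cR^*_{\ell_\gamma, \sH} = 0$ is precisely what bridges this gap: it enforces deterministic labels as noted in (i) and, combined with $\sH_{\mathrm{all}}$-optimality of $f^*$, pushes $Yf^*(X)$ into the flat infimum region of $\phi$ with margin at least $\gamma$, so that $\tilde{\phi}(f^*, \bx, y) = \phi(y f^*(\bx))$ on the support. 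The resulting sandwich $\cR^*_{\tilde{\phi}, \sH_{\mathrm{all}}} \leq \E_X[\cC^*_{\tilde{\phi}, \sH}(X, \eta_\sP(X))] \leq \cR^*_{\tilde{\phi}, \sH} \leq \cR_{\tilde{\phi}}(f^*) = \cR_\phi(f^*) = \cR^*_{\phi, \sH_{\mathrm{all}}} = \cR^*_{\tilde{\phi}, \sH_{\mathrm{all}}}$ then closes to equality throughout.
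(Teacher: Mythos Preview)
The paper does not prove this statement at all: Theorem~\ref{Thm:calibrate_consistent_sup} is quoted verbatim from \citet{PNAMY21} (as ``Theorem~27 in \citep{PNAMY21}'') and used as a black box to derive Theorem~\ref{Thm:sup_rho_margin_consistent}. There is therefore no proof in this paper to compare your proposal against.

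That said, your outline has a genuine gap in step~(ii). The chain you write hinges on the equality $\cR_{\tilde\phi}(f^*) = \cR_\phi(f^*)$, which you justify by asserting that realizability ``pushes $Yf^*(X)$ into the flat infimum region of $\phi$ with margin at least $\gamma$.'' This does not follow from the hypotheses. From $\cR_\phi(f^*)=\cR^*_{\phi,\sH_{\mathrm{all}}}$ and $\eta_\sP\in\{0,1\}$ a.s.\ you can indeed deduce $\phi(Yf^*(X))=\inf_t\phi(t)$ $\sP_X$-almost surely, but this only controls $f^*$ on the \emph{support} of $\sP_X$. Nothing prevents $f^*$ from dipping below the flat-region threshold at points $\bx'$ in the $\gamma$-ball around a support point that are themselves off-support; when that happens, $\inf_{\|\bx'-\bx\|\le\gamma} Yf^*(\bx')$ lies in the strictly decreasing part of $\phi$ and $\tilde\phi(f^*,\bx,y)>\phi(yf^*(\bx))$. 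A concrete instance: a single support point $\bx_0$ with label $+1$, $\phi$ constant on $[1,\infty)$ and strictly decreasing below $1$, and $f^*$ with $f^*(\bx_0)=1$ but $0<\inf_{\|\bx'-\bx_0\|\le\gamma}f^*(\bx')<1$. Then $\cR^*_{\ell_\gamma,\sH}=0$ (any $\sH\ni f^*$ works since $\inf f^*>0$), $\cR_\phi(f^*)=\cR^*_{\phi,\sH_{\mathrm{all}}}$, yet $\cR_{\tilde\phi}(f^*)>\cR_\phi(f^*)$, so your sandwich does not close.

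This breaks your route to the $\sP$-minimizability identity $\cR^*_{\tilde\phi,\sH}=\E_X[\cC^*_{\tilde\phi,\sH}(X,\eta_\sP(X))]$, which you correctly identify as the crux (without it, $\E[A_n]\to 0$ is not available and the a.s.\ subsequence argument collapses). The remaining pieces of your sketch---deterministic labels from realizability, the contrapositive of calibration via $\delta_{\max}$, and the bounded-convergence finish---are fine. To repair the proof you would need a different argument for why the pointwise excess $\tilde\phi$-risk integrates to the global excess under these hypotheses; the original \citet{PNAMY21} proof presumably handles this, but it is not reproduced here.
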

Using Theorem~\ref{theorem:rho_margin_calibtartion} in Section~\ref{sec:quasi-concave} and Theorem~\ref{Thm:calibrate_consistent_nonsup} above, we conclude that the calibrated $\rho$-margin loss in Section~\ref{sec:quasi-concave} is consistent with respect to $\ell_{\gamma}$ for all distributions that satisfy the  realizability assumption, i.e., $\cR^*_{\ell_{\gamma}, \sH}=0$.
\begin{theorem}
\label{Thm:rho_margin_consistent}
  Consider the $\rho$-margin loss $\phi_{\rho}(t)=\min\curl*{1,\max\curl*{0,1-\frac{t}{\rho}}},~\rho>0$. Then,
  \begin{enumerate}
      \item If $\rho>1$, then $\phi_{\rho}$ is $\cH_{\mathrm{lin}}$-consistent wrt $\ell_{\gamma}$ for all distribution $P$ over $\cX\times\cY$ that satisfies $\cR^*_{\ell_{\gamma},\cH_{\mathrm{lin}}}=0$ and there exists $f^*\in\cH_{\mathrm{lin}}$ such that $\cR_{\phi_{\rho}}(f^*)= \cR^*_{\phi_{\rho},\cH_{\mathrm{all}}}<\infty$.
      \item 
      Given a non-decreasing and continuous function $g$ such that $g(1+\gamma)< \C$ and  $g(-1-\gamma)> -\C$ for some $\C \geq 0$. Assume that $g(-t)+g(t)\geq0$ for any $0\leq t \leq 1$. Let $\overline{A}(t)=\max_{s\in[-t,t]}g(s)-g(s-\gamma)$ and $\underline{A}(t)=\min_{s\in[-t,t]}g(s)-g(s+\gamma)$ for any $0\leq t\leq 1$. If for any $0\leq t \leq 1$,
  $ \phi_{\rho}(\C-g(-t)) = \phi_{\rho}(g(t)+\C)
  \text{ and }
  \min\curl*{\phi_{\rho}(\overline{A}(t)),\phi_{\rho}(-\underline{A}(t)) }  > \phi_{\rho}(\C-g(-t)),$
      then $\phi_{\rho}$ is $\cH_g$-consistent wrt $\ell_{\gamma}$ for all distribution $P$ over $\cX\times\cY$ that satisfies $\cR^*_{\ell_{\gamma},\cH_{g}}=0$ and there exists $f^*\in\cH_{g}$ such that $\cR_{\phi_{\rho}}(f^*)= \cR^*_{\phi_{\rho},\cH_{\mathrm{all}}}<\infty$.
      
      \item If $\C>1+\gamma$ and $\C\geq\rho>\gamma$, then $\phi_{\rho}$ is $\cH_{\mathrm{relu}}$-consistent wrt $\ell_{\gamma}$ for all distribution $P$ over $\cX\times\cY$ that satisfies $\cR^*_{\ell_{\gamma},\cH_{\mathrm{relu}}}=0$ and there exists $f^*\in\cH_{\mathrm{relu}}$ such that $\cR_{\phi_{\rho}}(f^*)= \cR^*_{\phi_{\rho},\cH_{\mathrm{all}}}<\infty$.
  \end{enumerate}
\end{theorem}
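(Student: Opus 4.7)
The plan is to obtain each of the three conclusions as a direct combination of our calibration result Theorem~\ref{theorem:rho_margin_calibtartion} with the consistency-from-calibration transfer Theorem~\ref{Thm:calibrate_consistent_nonsup}. The three parts of the present theorem are set up to mirror the three parts of Theorem~\ref{theorem:rho_margin_calibtartion}: the hypothesis $\rho>1$ gives $\cH_{\mathrm{lin}}$-calibration, the prescribed equalities and strict inequalities in $\overline{A}(t)$ and $\underline{A}(t)$ give $\cH_g$-calibration, and the hypothesis $\C>1+\gamma$ together with $\C\geq\rho>\gamma$ give $\cH_{\mathrm{relu}}$-calibration, all in the non-uniform sense of Definition~\ref{def:H-calibration_true}.

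With calibration secured, the next step is to apply Theorem~\ref{Thm:calibrate_consistent_nonsup} with $\eta=0$. The distributional assumptions of each part, namely $\cR^*_{\ell_{\gamma},\cH}=0$ and the existence of $f^*\in\cH$ with $\cR_{\phi_\rho}(f^*)=\cR^*_{\phi_\rho,\cH_{\mathrm{all}}}<\infty$, match exactly the premises of Theorem~\ref{Thm:calibrate_consistent_nonsup} when we take $\eta=0$ (so that $\cR_{\phi_\rho}(f^*)\leq\cR^*_{\phi_\rho,\cH_{\mathrm{all}}}+\eta$ holds with equality). Thus Theorem~\ref{Thm:calibrate_consistent_nonsup} yields, for each $\epsilon>0$, a $\delta>0$ such that for all $f\in\cH$,
\[
\cR_{\phi_\rho}(f)<\cR^*_{\phi_\rho,\cH}+\delta \implies \cR_{\ell_{\gamma}}(f)<\cR^*_{\ell_{\gamma},\cH}+\epsilon.
\]

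The final step is a routine translation of this $(\epsilon,\delta)$-implication into $\cH$-consistency in the sense of \eqref{eq:H-consistency}. For any sequence $\{f_n\}\subset\cH$ with $\cR_{\phi_\rho}(f_n)-\cR^*_{\phi_\rho,\cH}\to 0$, we fix an arbitrary $\epsilon>0$, extract the matching $\delta>0$, and use the fact that $\cR_{\phi_\rho}(f_n)-\cR^*_{\phi_\rho,\cH}<\delta$ holds for all sufficiently large $n$ to deduce $\cR_{\ell_{\gamma}}(f_n)-\cR^*_{\ell_{\gamma},\cH}<\epsilon$ eventually; since $\epsilon$ is arbitrary, $\cR_{\ell_{\gamma}}(f_n)\to\cR^*_{\ell_{\gamma},\cH}$. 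Repeating this argument for each of the three hypothesis sets yields the three parts.

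There is no genuine technical obstacle here beyond bookkeeping: the heavy lifting has already been done in Theorem~\ref{theorem:rho_margin_calibtartion} (which hinges on the non-uniform calibration machinery of Section~\ref{sec:quasi-concave}) and in Theorem~\ref{Thm:calibrate_consistent_nonsup} of \citep{PNAMY21}. The only point worth double-checking is that the assumption on $f^*$ is the correct interface to Theorem~\ref{Thm:calibrate_consistent_nonsup} with $\eta=0$, and that our non-uniform calibration suffices for this transfer, as noted in Section~\ref{sec:preliminaries} where the authors of \citep{PNAMY21} observe that their consistency proofs only invoke the weaker non-uniform property.
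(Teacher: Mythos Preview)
Your proposal is correct and follows exactly the paper's approach: the paper simply states that the result follows by combining Theorem~\ref{theorem:rho_margin_calibtartion} with Theorem~\ref{Thm:calibrate_consistent_nonsup}, and you have spelled out precisely how that combination works (taking $\eta=0$ in Theorem~\ref{Thm:calibrate_consistent_nonsup} and unwinding the $(\epsilon,\delta)$-implication into the sequential form of $\sH$-consistency). In fact your write-up is more detailed than the paper's, which leaves these bookkeeping steps implicit.
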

Using Theorem~\ref{Thm:sup_rhomargin_calibrate_general} in Section~\ref{sec:sup_margin_loss} and Theorem~\ref{Thm:calibrate_consistent_sup}, we conclude that the calibrated supremum-based $\rho$-margin loss in Section~\ref{sec:sup_margin_loss} is also consistent wrt $\ell_{\gamma}$ for all distributions that satisfy realizability assumptions.
\begin{theorem}
\label{Thm:sup_rho_margin_consistent}
Consider $\rho$-margin loss $\phi_{\rho}(t)=\min\curl*{1,\max\curl*{0,1-\frac{t}{\rho}}},\rho>0$. Let $\sH$ be a symmetric hypothesis set, then the surrogate loss $\tilde{\phi}_{\rho}(f,\bx,y)=\sup_{\bx'\colon \|\bx-\bx'\|\leq \gamma}\phi_{\rho}(y f(\bx'))$ is $\sH$-consistent with respect to $\ell_{\gamma}$ for all distributions $\sP$ over $\cX\times\cY$ that satisfy: $\cR^*_{\ell_{\gamma},\cH}=0$ and there exists $f^*\in\cH$ such that $\cR_{\phi_{\rho}}(f^*)= \cR^*_{\phi_{\rho},\cH_{\mathrm{all}}}<\infty$.

\end{theorem}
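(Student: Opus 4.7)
The plan is to derive Theorem~\ref{Thm:sup_rho_margin_consistent} as a direct composition of Theorem~\ref{Thm:sup_rhomargin_calibrate_general} with Theorem~\ref{Thm:calibrate_consistent_sup}; the analytic content has already been packaged into these two results, so only a hypothesis check and a translation into the sequential form of consistency remain.

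First, I would verify that the $\rho$-margin loss is an admissible input for Theorem~\ref{Thm:calibrate_consistent_sup}. The function $\phi_{\rho}(t)=\min\{1,\max\{0,1-t/\rho\}\}$ is bounded and non-increasing, hence a non-increasing margin-based loss as required. Because $\sH$ is assumed symmetric, Theorem~\ref{Thm:sup_rhomargin_calibrate_general} applies immediately and certifies that the supremum-based surrogate $\tilde{\phi}_{\rho}(f,\bx,y)=\sup_{\|\bx'-\bx\|\le\gamma}\phi_{\rho}(yf(\bx'))$ is $\sH$-calibrated, in the non-uniform sense of Definition~\ref{def:H-calibration_true}, with respect to $\ell_{\gamma}$.

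Next, I would fix an arbitrary distribution $\sP$ over $\sX\times\sY$ that satisfies the two realizability-type hypotheses of the theorem, namely $\cR^*_{\ell_{\gamma},\sH}=0$ and the existence of some $f^*\in\sH\subset\sH_{\mathrm{all}}$ with $\cR_{\phi_{\rho}}(f^*)=\cR^*_{\phi_{\rho},\sH_{\mathrm{all}}}<\infty$. All hypotheses of Theorem~\ref{Thm:calibrate_consistent_sup} are now in place, so invoking that theorem yields, for every $\epsilon>0$, a $\delta>0$ with
\[
\cR_{\tilde{\phi}_{\rho}}(f)<\cR^*_{\tilde{\phi}_{\rho},\sH}+\delta \implies \cR_{\ell_{\gamma}}(f)<\cR^*_{\ell_{\gamma},\sH}+\epsilon \qquad \text{for every } f\in\sH.
\]

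Finally, I would recast this pointwise implication as the sequential condition~\eqref{eq:H-consistency}. Given any $\{f_n\}\subset\sH$ with $\cR_{\tilde{\phi}_{\rho}}(f_n)-\cR^*_{\tilde{\phi}_{\rho},\sH}\to 0$, choose $\epsilon>0$, select the corresponding $\delta>0$, and observe that the premise of the implication is satisfied for all sufficiently large $n$, forcing $\cR_{\ell_{\gamma}}(f_n)-\cR^*_{\ell_{\gamma},\sH}<\epsilon$ eventually; since $\epsilon$ was arbitrary, this is precisely $\sH$-consistency of $\tilde{\phi}_{\rho}$ with respect to $\ell_{\gamma}$. There is no genuine obstacle here: the delicate part, namely deducing consistency from calibration without $\sP$-minimizability, is handled inside Theorem~\ref{Thm:calibrate_consistent_sup}. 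The only point to monitor is that Theorem~\ref{Thm:calibrate_consistent_sup} is being applied with the weaker non-uniform calibration of Definition~\ref{def:H-calibration_true}, which is legitimate because, as remarked in the paragraph following Definition~\ref{def:H-calibration_real}, the proof of the underlying result from \citep{PNAMY21} only uses the non-uniform property.
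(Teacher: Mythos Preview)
Your proposal is correct and follows exactly the approach the paper takes: the paper simply states that Theorem~\ref{Thm:sup_rho_margin_consistent} is obtained by combining Theorem~\ref{Thm:sup_rhomargin_calibrate_general} with Theorem~\ref{Thm:calibrate_consistent_sup}, and you have spelled out that composition carefully, including the hypothesis check on $\phi_{\rho}$ and the passage from the $\epsilon$--$\delta$ implication to the sequential form of consistency.
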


\section{Conclusion}
\label{sec:conclusion}

We presented a careful analysis of the $\sH$-calibration of surrogate losses, including a series of 
negative results for surrogate losses commonly used in practice, as well as a number of positive
results for surrogate losses that we prove additionally to be $\sH$-consistent, 
provided that some other natural conditions hold. 
Our results significantly extend previously known results and provide a solid guidance for the design
of algorithms for adversarial robustness with theoretical guarantees. Moreover, several of our proof techniques
for calibration and consistency can further be relevant to the analysis of other loss functions.

\acks{We warmly thank our colleague Natalie Frank for discussions and our previous joint work on this topic.}

\bibliography{arxiv2}

\begin{thebibliography}{16}
\providecommand{\natexlab}[1]{#1}
\providecommand{\url}[1]{\texttt{#1}}
\expandafter\ifx\csname urlstyle\endcsname\relax
  \providecommand{\doi}[1]{doi: #1}\else
  \providecommand{\doi}{doi: \begingroup \urlstyle{rm}\Url}\fi

\bibitem[Awasthi et~al.(2020)Awasthi, Frank, and Mohri]{awasthi2020adversarial}
Pranjal Awasthi, Natalie Frank, and Mehryar Mohri.
\newblock Adversarial learning guarantees for linear hypotheses and neural
  networks.
\newblock In \emph{International Conference on Machine Learning}, pages
  431--441, 2020.

\bibitem[Awasthi et~al.(2021)Awasthi, Frank, Mao, Mohri, and Zhong]{PNAMY21}
Pranjal Awasthi, Natalie Frank, Anqi Mao, Mehryar Mohri, and Yutao Zhong.
\newblock Calibration and consistency of adversarial surrogate losses.
\newblock \emph{arXiv preprint arXiv:2104.09658}, 2021.

\bibitem[Bao et~al.(2020)Bao, Scott, and Sugiyama]{pmlr-v125-bao20a}
Han Bao, Clayton Scott, and Masashi Sugiyama.
\newblock Calibrated surrogate losses for adversarially robust classification.
\newblock In \emph{Conference on Learning Theory}, pages 408--451, 2020.

\bibitem[Boyd and Vandenberghe(2014)]{boyd2004convex}
Stephen~P. Boyd and Lieven Vandenberghe.
\newblock \emph{Convex Optimization}.
\newblock Cambridge University Press, 2014.

\bibitem[Carlini and Wagner(2017)]{carlini2017towards}
Nicholas Carlini and David Wagner.
\newblock Towards evaluating the robustness of neural networks.
\newblock In \emph{IEEE Symposium on Security and Privacy (SP)}, pages 39--57,
  2017.

\bibitem[Goodfellow et~al.(2014)Goodfellow, Shlens, and
  Szegedy]{goodfellow2014explaining}
Ian~J Goodfellow, Jonathon Shlens, and Christian Szegedy.
\newblock Explaining and harnessing adversarial examples.
\newblock \emph{arXiv preprint arXiv:1412.6572}, 2014.

\bibitem[Krizhevsky et~al.(2012)Krizhevsky, Sutskever, and
  Hinton]{KrizhevskySutskeverHinton2012}
Alex Krizhevsky, Ilya Sutskever, and Geoffrey~E. Hinton.
\newblock Imagenet classification with deep convolutional neural networks.
\newblock In \emph{Advances in Neural Information Processing Systems}, pages
  1097--1105, 2012.

\bibitem[Madry et~al.(2017)Madry, Makelov, Schmidt, Tsipras, and
  Vladu]{madry2017towards}
Aleksander Madry, Aleksandar Makelov, Ludwig Schmidt, Dimitris Tsipras, and
  Adrian Vladu.
\newblock Towards deep learning models resistant to adversarial attacks.
\newblock \emph{arXiv preprint arXiv:1706.06083}, 2017.

\bibitem[Mohri et~al.(2018)Mohri, Rostamizadeh, and
  Talwalkar]{MohriRostamizadehTalwalkar2018}
Mehryar Mohri, Afshin Rostamizadeh, and Ameet Talwalkar.
\newblock \emph{Foundations of Machine Learning}.
\newblock {MIT} Press, second edition, 2018.

\bibitem[Shafahi et~al.(2019)Shafahi, Najibi, Ghiasi, Xu, Dickerson, Studer,
  Davis, Taylor, and Goldstein]{shafahi2019adversarial}
Ali Shafahi, Mahyar Najibi, Mohammad~Amin Ghiasi, Zheng Xu, John Dickerson,
  Christoph Studer, Larry~S Davis, Gavin Taylor, and Tom Goldstein.
\newblock Adversarial training for free!
\newblock In \emph{Advances in Neural Information Processing Systems}, pages
  3353--3364, 2019.

\bibitem[Steinwart(2007)]{steinwart2007compare}
Ingo Steinwart.
\newblock How to compare different loss functions and their risks.
\newblock \emph{Constructive Approximation}, 26\penalty0 (2):\penalty0
  225--287, 2007.

\bibitem[Sutskever et~al.(2014)Sutskever, Vinyals, and
  Le]{SutskeverVinyalsLe2014}
Ilya Sutskever, Oriol Vinyals, and Quoc~V. Le.
\newblock Sequence to sequence learning with neural networks.
\newblock In \emph{Advances in Neural Information Processing Systems}, pages
  3104--3112, 2014.

\bibitem[Szegedy et~al.(2013)Szegedy, Zaremba, Sutskever, Bruna, Erhan,
  Goodfellow, and Fergus]{szegedy2013intriguing}
Christian Szegedy, Wojciech Zaremba, Ilya Sutskever, Joan Bruna, Dumitru Erhan,
  Ian Goodfellow, and Rob Fergus.
\newblock Intriguing properties of neural networks.
\newblock \emph{arXiv preprint arXiv:1312.6199}, 2013.

\bibitem[Tsipras et~al.(2018)Tsipras, Santurkar, Engstrom, Turner, and
  Madry]{tsipras2018robustness}
Dimitris Tsipras, Shibani Santurkar, Logan Engstrom, Alexander Turner, and
  Aleksander Madry.
\newblock Robustness may be at odds with accuracy.
\newblock \emph{arXiv preprint arXiv:1805.12152}, 2018.

\bibitem[Wong et~al.(2020)Wong, Rice, and Kolter]{wong2020fast}
Eric Wong, Leslie Rice, and J~Zico Kolter.
\newblock Fast is better than free: Revisiting adversarial training.
\newblock \emph{arXiv preprint arXiv:2001.03994}, 2020.

\bibitem[Yin et~al.(2019)Yin, Ramchandran, and
  Bartlett]{YinRamchandranBartlett2019}
Dong Yin, Kannan Ramchandran, and Peter~L. Bartlett.
\newblock Rademacher complexity for adversarially robust generalization.
\newblock In \emph{International Conference of Machine Learning}, pages
  7085--7094, 2019.

\end{thebibliography}

\newpage
\appendix

\renewcommand{\contentsname}{Contents of Appendix}
\tableofcontents
\addtocontents{toc}{\protect\setcounter{tocdepth}{3}} 
\clearpage

\section{Deferred Proofs}
For convenience, let $\Delta\cC_{\ell,\sH}(f,\bx,\eta)\colon=\cC_{\ell}(f,\bx,\eta)-\cC_{\ell,\sH}^*(\bx,\eta)$, $\underline{M}(f,\bx,\gamma)\colon=\inf_{\bx'\colon \|\bx - \bx'\|\leq\gamma} f(\bx')$
and $
\overline{M}(f,\bx,\gamma)\colon= -\inf_{\bx'\colon \|\bx - \bx'\|\leq\gamma} -f(\bx')=\sup_{\bx'\colon \|\bx - \bx'\|\leq\gamma} f(\bx').$

\subsection{Proof of 
Theorem~\ref{Thm:calibration_margin_convex}, Theorem~\ref{Thm:linear_sup_convex} and Theorem~\ref{Thm:calibration_sup_convex}}
\label{app:calibration_convex}
We first characterize the calibration function $\delta_{\max}(\epsilon,\bx,\eta)$ of losses $(\ell, \ell_{\gamma})$ at $\eta=\frac12$, $\epsilon=\frac12$ and distinguishing $\bx_0\in \sX$ given a hypothesis set $\sH$ which is regular for adversarial calibration.
\begin{lemma}
\label{lemma:distinguishing}
Let $\sH$ be a hypothesis set that is regular for adversarial calibration. For distinguishing $\bx_0\in \sX$, the calibration function $\delta_{\max}(\epsilon,\bx,\eta)$ of losses $(\ell, \ell_{\gamma})$ satisfies
\begin{align*}
\delta_{\max}\left(\frac12,\bx_0,\frac12\right)=\inf_{f\in\sH\colon ~\underline{M}(f,\bx_0,\gamma)\leq 0 \leq \overline{M}(f,\bx_0,\gamma)}\Delta\cC_{\ell,\sH}(f,\bx_0,\frac12).
\end{align*}
\end{lemma}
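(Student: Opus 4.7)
\textbf{Proof plan for Lemma~\ref{lemma:distinguishing}.}
The plan is to unwind the definition of $\delta_{\max}$ at the specific point $(\epsilon,\bx,\eta)=(1/2,\bx_0,1/2)$ and identify the constraint $\Delta\cC_{\ell_\gamma,\sH}(f,\bx_0,1/2)\geq 1/2$ as exactly the event $\underline{M}(f,\bx_0,\gamma)\leq 0\leq \overline{M}(f,\bx_0,\gamma)$. The identification uses three ingredients: the equivalent form \eqref{eq:supinf01} of the adversarial $0/1$ loss, the regularity of $\sH$ at the distinguishing point $\bx_0$, and a three-case analysis of the inner risk at $\eta=1/2$.

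First I would compute $\cC_{\ell_\gamma}(f,\bx_0,1/2)$. Using \eqref{eq:conditional-risk} together with \eqref{eq:supinf01}, one has $\ell_\gamma(f,\bx_0,+1)=\mathds{1}_{\underline{M}(f,\bx_0,\gamma)\leq 0}$ and $\ell_\gamma(f,\bx_0,-1)=\mathds{1}_{\overline{M}(f,\bx_0,\gamma)\geq 0}$, so
\[
\cC_{\ell_\gamma}(f,\bx_0,1/2)=\tfrac12\bigl[\mathds{1}_{\underline{M}(f,\bx_0,\gamma)\leq 0}+\mathds{1}_{\overline{M}(f,\bx_0,\gamma)\geq 0}\bigr].
\]
Since $\underline{M}(f,\bx_0,\gamma)\leq \overline{M}(f,\bx_0,\gamma)$, a short case split shows this quantity takes only the values $1/2$ and $1$: it equals $1$ precisely when $\underline{M}(f,\bx_0,\gamma)\leq 0\leq \overline{M}(f,\bx_0,\gamma)$, and equals $1/2$ in the two remaining cases ($\underline{M}>0$ or $\overline{M}<0$).

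Next I would compute the Bayes inner risk. Because $\sH$ is regular for adversarial calibration and $\bx_0$ is a distinguishing point, there exists $f\in\sH$ with $\underline{M}(f,\bx_0,\gamma)>0$, whose inner risk equals $1/2$; hence $\cC^*_{\ell_\gamma,\sH}(\bx_0,1/2)=1/2$. Consequently $\Delta\cC_{\ell_\gamma,\sH}(f,\bx_0,1/2)\in\{0,1/2\}$, and the constraint $\Delta\cC_{\ell_\gamma,\sH}(f,\bx_0,1/2)\geq 1/2$ in the definition of $\delta_{\max}$ is equivalent to $\cC_{\ell_\gamma}(f,\bx_0,1/2)=1$, i.e.\ to $\underline{M}(f,\bx_0,\gamma)\leq 0\leq \overline{M}(f,\bx_0,\gamma)$.

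Finally, plugging this equivalent feasibility condition into the defining infimum \eqref{eq:def-calibration-function-true} yields exactly
\[
\delta_{\max}(1/2,\bx_0,1/2)=\inf_{f\in\sH:\,\underline{M}(f,\bx_0,\gamma)\leq 0\leq \overline{M}(f,\bx_0,\gamma)}\Delta\cC_{\ell,\sH}(f,\bx_0,1/2).
\]
There is no genuine obstacle here; the only subtlety is being careful that the infimum in the Bayes risk $\cC^*_{\ell_\gamma,\sH}(\bx_0,1/2)$ is actually attained at $1/2$, which is where the regularity assumption on $\sH$ (existence of the pair $f,g$ with strictly signed extremal values) is used.
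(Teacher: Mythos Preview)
Your proposal is correct and follows essentially the same approach as the paper: compute the inner $\ell_\gamma$-risk via the three-case split on the signs of $\underline{M}$ and $\overline{M}$, use the distinguishing point to pin down $\cC^*_{\ell_\gamma,\sH}(\bx_0,1/2)=1/2$, and then read off the constraint set. The only cosmetic difference is that the paper carries a general $\eta$ through most of the computation before specializing to $\eta=1/2$, whereas you specialize immediately, which makes your version slightly shorter.
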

\begin{proof}
By the definition of inner risk \eqref{eq:conditional-risk} and adversarial 0-1 loss $\ell_{\gamma}$ \eqref{eq:supinf01}, the inner $\ell_{\gamma}$-risk is
\begin{align*}
  \cC_{\ell_{\gamma}}(f,\bx,\eta)
  &=\eta \mathds{1}_{\left\{\underline{M}(f,\bx,\gamma)\leq 0\right\}}+(1-\eta) \mathds{1}_{\left\{\overline{M}(f,\bx,\gamma)\geq 0\right\}}\\
  &=\begin{cases}
   1 & \text{if} ~ \underline{M}(f,\bx,\gamma)\leq 0 \leq \overline{M}(f,\bx,\gamma),\\
   \eta & \text{if} ~ \overline{M}(f,\bx,\gamma)<0,\\
   1-\eta & \text{if} ~ \underline{M}(f,\bx,\gamma)> 0.\\
  \end{cases}
\end{align*}
For distinguishing $\bx_0$ and $\eta\in [0,1]$, $\{f\in \sH: \overline{M}(f,\bx_0,\gamma)\}<0\}$ and $\{f\in \sH: \underline{M}(f,\bx_0,\gamma)>0\}$ are not empty sets. Thus
\begin{align*}
     \cC^*_{\ell_{\gamma},\sH}(\bx_0,\eta)=\inf_{f\in\sH}\cC_{\ell_{\gamma}}(f,\bx_0,\eta)=\min\curl*{\eta,1-\eta}\,.
\end{align*}
Note for $f\in\{f\in \sH: \underline{M}(f,\bx_0,\gamma)\leq 0 \leq \overline{M}(f,\bx_0,\gamma)\}$, $\Delta\cC_{\ell_{\gamma},\sH}(f,\bx_0,\eta)=\max\curl*{\eta,1-\eta}$; for $f\in\{f\in \sH: \overline{M}(f,\bx_0,\gamma)\}<0\}$, $\Delta\cC_{\ell_{\gamma},\sH}(f,\bx_0,\eta)=\eta-\min\curl*{\eta,1-\eta}=\max\curl*{0,2\eta-1}=|2\eta-1|\mathds{1}_{(2\eta-1)(\underline{M}(f,\bx_0,\gamma))\leq 0}$ since $\underline{M}(f,\bx_0,\gamma)\leq \overline{M}(f,\bx_0,\gamma)<0$; for $f\in\{f\in \sH: \underline{M}(f,\bx_0,\gamma)>0\}$, $\Delta\cC_{\ell_{\gamma},\sH}(f,\bx_0,\eta)=(1-\eta)-\min\curl*{\eta,1-\eta}=\max\curl*{0,1-2\eta}=|2\eta-1|\mathds{1}_{(2\eta-1)(\underline{M}(f,\bx_0,\gamma))\leq 0}$. Therefore,
\begin{align*}
  \Delta\cC_{\ell_{\gamma},\sH}(f,\bx_0,\eta)=
    \begin{cases}
    \max\curl*{\eta,1-\eta} & \text{if} ~ \underline{M}(f,\bx_0,\gamma)\leq 0 \leq \overline{M}(f,\bx_0,\gamma),\\
   |2\eta-1|\mathds{1}_{(2\eta-1)(\underline{M}(f,\bx_0,\gamma))\leq 0} & \text{if} ~ \underline{M}(f,\bx_0,\gamma)>0 \text{~or~} \overline{M}(f,\bx_0,\gamma)<0.\\
    \end{cases}
\end{align*}
By \eqref{eq:def-calibration-function-true}, for a fixed $\eta\in[0,1]$ and $\bx \in \sX$, the calibration function of losses $(\ell, \ell_{\gamma})$ is 
\begin{align*}
    \delta_{\max}(\epsilon,\bx,\eta)=\inf_{f\in\sH}\curl*{\Delta\cC_{\ell,\sH}(f,\bx,\eta) \mid \Delta\cC_{\ell_{\gamma},
    \sH}(f,\bx,\eta)\geq\epsilon }\,.
\end{align*}
Observe that for all $\eta\in [0,1]$,
\begin{align}
\label{eq:observation}
\max\curl*{\eta,1-\eta}= \frac{1}{2}[(1-\eta ) + \eta + |(1-\eta) - \eta|]= \frac{1}{2}[1 + |2 \eta - 1|]\geq |2 \eta - 1|.   
\end{align}
For distinguishing $\bx_0$, $\eta=\frac12$ and $\epsilon=\frac12$, $\Delta\cC_{\ell_{\gamma},\sH}(f,\bx_0,\frac12)\geq\frac12$ if and only if $\underline{M}(f,\bx_0,\gamma)\leq 0 \leq \overline{M}(f,\bx_0,\gamma)$ since $|2\eta-1|<\epsilon\leq\max\curl*{\eta,1-\eta}$. Therefore, 
\begin{align*}
\delta_{\max}\left(\frac12,\bx_0,\frac12\right)=\inf_{f\in\sH\colon ~\underline{M}(f,\bx_0,\gamma)\leq 0 \leq \overline{M}(f,\bx_0,\gamma)}\Delta\cC_{\ell,\sH}(f,\bx_0,\frac12).   
\end{align*}
\end{proof}
\CalibrationMarginConvex*

\begin{proof}
By Lemma~\ref{lemma:distinguishing}, for distinguishing $\bx_0\in \sX$, the calibration function $\delta_{\max}(\epsilon,\bx,\eta)$ of losses $(\phi, \ell_{\gamma})$ satisfies
\begin{align*}
\delta_{\max}\left(\frac12,\bx_0,\frac12\right)=\inf_{f\in\sH\colon ~\underline{M}(f,\bx_0,\gamma)\leq 0 \leq \overline{M}(f,\bx_0,\gamma)}\Delta\cC_{\phi,\sH}(f,\bx_0,\frac12).
\end{align*}
Suppose that $\phi$ is $\sH$-calibrated with respect to $\ell_{\gamma}$. By Proposition~\ref{prop:calibration_function_positive}, $\phi$ is $\sH$-calibrated with respect to $\ell_{\gamma}$ if and only if its calibration function $\delta_{\max}$ satisfies $\delta_{\max}(\epsilon,\bx,\eta)>0$ for all $\bx\in \sX$, $\eta\in
     [0,1]$ and $\epsilon>0$. In particular, the condition requires $\delta_{\max}\left(\frac12,\bx_0,\frac12\right)>0$, that is,
\begin{align*}
    	\inf_{f\in\sH\colon~\underline{M}(f,\bx_0,\gamma)\leq 0 \leq \overline{M}(f,\bx_0,\gamma)}\Delta\cC_{\phi,\sH}(f,\bx_0,\frac12)>0,
\end{align*}
which is equivalent to
\begin{align}
\inf_{f\in\sH\colon~\underline{M}(f,\bx_0,\gamma)\leq 0 \leq \overline{M}(f,\bx_0,\gamma)}\cC_{\phi}(f,\bx_0,\frac12)>
     \inf_{f\in\sH}\cC_{\phi}(f,\bx_0,\frac12)\,,
     \label{eq:larger_adv_margin}
\end{align}
By the definition of inner risk \eqref{eq:conditional-risk},
\begin{align}
   \cC_{\phi}(f,\bx_0,\frac12)=\frac12 (\phi(f(\bx_0))+\phi(-f(\bx_0)))\,.
\label{eq:phi_CCR_adv_margin} 
\end{align}
Since $\phi$ is convex, by Jensen's inequality, for any $f\in \sH$, the
following holds:
\begin{align*}
   \cC_{\phi}(f,\bx_0,\frac12)\geq \phi\left(\frac12 f(\bx_0) -\frac12 f(\bx_0) \right)= \phi(0).
\end{align*}
For $f=f_0$, we have $f_0(\bx_0)=0$ and by \eqref{eq:phi_CCR_adv_margin},
   \begin{align*}
\cC_{\phi}(f_0,\bx_0,\frac12)=\frac12(\phi(0)+\phi(0))=\phi(0)\,.
\end{align*}
Moreover, when $f=f_0$, $\underline{M}(f_0,\bx_0,\gamma)\leq f_0(\bx_0)=0\leq\overline{M}(f_0,\bx_0,\gamma)$.
Thus
\begin{align*}
    \inf_{f\in\sH\colon ~\underline{M}(f,\bx_0,\gamma)\leq 0 \leq \overline{M}(f,\bx_0,\gamma)}\cC_{\phi}(f,\bx_0,\frac12)=
     \inf_{f\in\sH}\cC_{\phi}(f,\bx_0,\frac12)=\phi(0)\,,
\end{align*}
where the minimum can be achieved by $f=f_0$, contradicting \eqref{eq:larger_adv_margin}. Therefore, $\phi$ is not $\sH$-calibrated with respect to $\ell_{\gamma}$.
\end{proof}

\LinearSupConvex*
\begin{proof}
By Lemma~\ref{lemma:distinguishing}, for distinguishing $\bx_0\in \sX$, the calibration function $\delta_{\max}(\epsilon,\bx,\eta)$ of losses $(\tilde{\phi}, \ell_{\gamma})$ satisfies
\begin{align*}
\delta_{\max}\left(\frac12,\bx_0,\frac12\right)=\inf_{f\in\sH\colon ~\underline{M}(f,\bx_0,\gamma)\leq 0 \leq \overline{M}(f,\bx_0,\gamma)}\Delta\cC_{\tilde{\phi},\sH}(f,\bx_0,\frac12).
\end{align*}
Next we first consider the case where $\sH=\sH_{\mathrm{lin}}$. Take distinguishing $\bx_0\in \sX$ and $f_0\in \sH_{\mathrm{lin}}$ such that $f_0(\bx_0)=0$. 
As shown by \citet{awasthi2020adversarial}, for $f\in\sH_{\mathrm{lin}}=\curl*{\bx\rightarrow \bw \cdot \bx \mid \|\bw\|=1}$,
\begin{align*}
&\underline{M}(f,\bx,\gamma)=\inf_{\bx'\colon \|\bx - \bx'\|\leq\gamma} f(\bx')=\inf_{\bx'\colon \|\bx-\bx'\|\leq \gamma}(\bw \cdot \bx')=\bw \cdot \bx-\gamma \|\bw\|=f(\bx)-\gamma, \\   
&\overline{M}(f,\bx,\gamma)=-\inf_{\bx'\colon \|\bx - \bx'\|\leq\gamma} -f(\bx')=-\inf_{\bx'\colon \|\bx-\bx'\|\leq \gamma}(-\bw \cdot \bx')=\bw \cdot \bx+\gamma \|\bw\|=f(\bx)+\gamma.
\end{align*}
Suppose that $\tilde{\phi}$ is $\sH_{\mathrm{lin}}$-calibrated with respect to $\ell_{\gamma}$. By Proposition~\ref{prop:calibration_function_positive}, $\tilde{\phi}$ is $\sH_{\mathrm{lin}}$-calibrated with respect to $\ell_{\gamma}$ if and only if its calibration function $\delta_{\max}$ satisfies $\delta_{\max}(\epsilon,\bx,\eta)>0$ for all $\bx\in \sX$, $\eta\in[0,1]$ and $\epsilon>0$. In particular, the condition requires $\delta_{\max}\left(\frac12,\bx_0,\frac12\right)>0$, that is,
\begin{align*}
    	\inf_{f\in\sH_{\mathrm{lin}}\colon~-\gamma\leq f(\bx_0)\leq\gamma}\Delta\cC_{\tilde\phi,\sH_{\mathrm{lin}}}(f,\bx_0,\frac12)>0,
\end{align*}
which is equivalent to
\begin{align}
\inf_{f\in\sH_{\mathrm{lin}}\colon~-\gamma\leq f(\bx_0)\leq\gamma}\cC_{\tilde{\phi}}(f,\bx_0,\frac12)>\inf_{f\in\sH_{\mathrm{lin}}}\cC_{\tilde{\phi}}(f,\bx_0,\frac12)\,,
     \label{eq:larger_adv_linear}
\end{align}
By \eqref{eq:phi_CCR_adv_GN}, for $f\in\sH_{\mathrm{lin}}$,
\begin{align}
   \cC_{\tilde{\phi}}(f,\bx_0,\frac12)=\frac12 \phi(f(\bx_0)-\gamma)+\frac12\phi(-f(\bx_0)-\gamma)\,.
\label{eq:phi_CCR_adv_linear} 
\end{align}
Since $\phi$ is convex, by Jensen's inequality, for any $f\in \sH_{\mathrm{lin}}$, the
following holds:
\begin{align*}
   \cC_{\tilde{\phi}}(f,\bx_0,\frac12)\geq \phi\left(\frac12 (f(\bx_0)-\gamma) -\frac12 (f(\bx_0)+\gamma) \right)=\phi(-\gamma)\,.
\end{align*}
For $f=f_0$, we have $f_0(\bx_0)=0$ and by \eqref{eq:phi_CCR_adv_linear},
\begin{align*}
\cC_{\tilde{\phi}}(f_0,\bx_0,\frac12)=\frac12(\phi(-\gamma)+\phi(-\gamma))=\phi(-\gamma)\,.
\end{align*}
Moreover, when $f=f_0$, $-\gamma\leq f_0(\bx_0)=0\leq\gamma$.
Thus
\begin{align*}
    \inf_{f\in\sH\colon ~-\gamma\leq f(\bx_0)\leq\gamma}\cC_{\tilde{\phi}}(f,\bx_0,\frac12)=
     \inf_{f\in\sH}\cC_{\tilde{\phi}}(f,\bx_0,\frac12)=\phi(-\gamma)\,,
\end{align*}
where the minimum can be achieved by $f=f_0$, contradicting \eqref{eq:larger_adv_linear}. Therefore, $\tilde{\phi}$ is not $\sH_{\mathrm{lin}}$-calibrated with respect to $\ell_{\gamma}$.

Then we consider the case where $\sH=\sH_g$. By the assumption on $g$, $0\in \sX$ is distinguishing. As shown by \citet{awasthi2020adversarial}, for $f\in\sH_{g}$,
\begin{align*}
\underline{M}(f,\bx,\gamma)=g(\bw \cdot \bx -\gamma)+b, 
\quad\overline{M}(f,\bx,\gamma)=g(\bw \cdot \bx +\gamma)+b.
\end{align*}
Suppose that $\tilde{\phi}$ is $\sH_{g}$-calibrated with respect to $\ell_{\gamma}$.By Proposition~\ref{prop:calibration_function_positive}, $\tilde{\phi}$ is $\sH_{g}$-calibrated with respect to $\ell_{\gamma}$ if and only if its calibration function $\delta_{\max}$ satisfies $\delta_{\max}(\epsilon,\bx,\eta)>0$ for all $\bx\in \sX$, $\eta\in[0,1]$ and $\epsilon>0$. In particular, the condition requires $\delta_{\max}\left(\frac12,0,\frac12\right)>0$, that is,
\begin{align*}
    	\inf_{f\in\sH_{g}\colon~g(-\gamma)+b\leq 0\leq g(\gamma)+b}\Delta\cC_{\tilde\phi,\sH_g}(f,0,\frac12)>0,
\end{align*}
which is equivalent to
\begin{align}
\inf_{f\in\sH_g\colon~g(-\gamma)+b\leq 0\leq g(\gamma)+b}\cC_{\tilde{\phi}}(f,0,\frac12)>\inf_{f\in\sH_g}\cC_{\tilde{\phi}}(f,0,\frac12)\,,
     \label{eq:larger_adv_hg}
\end{align}
By \eqref{eq:phi_CCR_adv_GN}, for $f\in\sH_g$,
\begin{align}
   \cC_{\tilde{\phi}}(f,0,\frac12)=\frac12 \phi(g(-\gamma)+b)+\frac12\phi(-g(\gamma)-b)\,.
\label{eq:phi_CCR_adv_hg} 
\end{align}
Since $\phi$ is convex, by Jensen's inequality, for any $f\in \sH_g$, the
following holds:
\begin{align*}
   \cC_{\tilde{\phi}}(f,0,\frac12)\geq \phi\left(\frac12 (g(-\gamma)+b) +\frac12 (-g(\gamma)-b) \right)=\phi\left(\frac{g(-\gamma)-g(\gamma)}{2}\right)\,.
\end{align*}
Take $f_0\in \sH_g$ with $b_0=\frac{-g(\gamma)-g(-\gamma)}{2}$, we have $g(-\gamma)+b_0=-g(\gamma)-b_0=\frac{g(-\gamma)-g(\gamma)}{2}$ and by \eqref{eq:phi_CCR_adv_hg},
\begin{align*}
\cC_{\tilde{\phi}}(f_0,0,\frac12)=\frac12 \phi(g(-\gamma)+b_0)+\frac12\phi(-g(\gamma)-b_0)=\phi\left(\frac{g(-\gamma)-g(\gamma)}{2}\right)\,.
\end{align*}
Moreover, when $f=f_0$, $g(-\gamma)+b_0\leq0\leq g(\gamma)+b_0$.
Thus
\begin{align*}
    \inf_{f\in\sH_g\colon ~g(-\gamma)+b\leq 0\leq g(\gamma)+b}\cC_{\tilde{\phi}}(f,0,\frac12)=
     \inf_{f\in\sH_g}\cC_{\tilde{\phi}}(f,0,\frac12)=\phi\left(\frac{g(-\gamma)-g(\gamma)}{2}\right)\,,
\end{align*}
where the minimum can be achieved by $f=f_0$, contradicting \eqref{eq:larger_adv_hg}. Therefore, $\tilde{\phi}$ is not $\sH_g$-calibrated with respect to $\ell_{\gamma}$.
\end{proof}

\CalibrationSupConvex*
\begin{proof}
By Lemma~\ref{lemma:distinguishing}, for distinguishing $\bx_0\in \sX$, the calibration function $\delta_{\max}(\epsilon,\bx,\eta)$ of losses $(\tilde{\phi}, \ell_{\gamma})$ satisfies
\begin{align*}
\delta_{\max}\left(\frac12,\bx_0,\frac12\right)=\inf_{f\in\sH\colon ~\underline{M}(f,\bx_0,\gamma)\leq 0 \leq \overline{M}(f,\bx_0,\gamma)}\Delta\cC_{\tilde{\phi},\sH}(f,\bx_0,\frac12).
\end{align*}
Suppose that $\tilde{\phi}$ is $\sH$-calibrated with respect to $\ell_{\gamma}$. By Proposition~\ref{prop:calibration_function_positive}, $\tilde{\phi}$ is $\sH$-calibrated with respect to $\ell_{\gamma}$ if and only if its calibration function $\delta_{\max}$ satisfies $\delta_{\max}(\epsilon,\bx,\eta)>0$ for all $\bx\in \sX$, $\eta\in
     [0,1]$ and $\epsilon>0$. In particular, the condition requires $\delta_{\max}\left(\frac12,\bx_0,\frac12\right)>0$, that is,
\begin{align*}
    	\inf_{f\in\sH\colon~\underline{M}(f,\bx_0,\gamma)\leq 0 \leq \overline{M}(f,\bx_0,\gamma)}\Delta\cC_{\tilde\phi,\sH}(f,\bx_0,\frac12)>0,
\end{align*}
which is equivalent to
\begin{align}
\inf_{f\in\sH\colon~\underline{M}(f,\bx_0,\gamma)\leq 0 \leq \overline{M}(f,\bx_0,\gamma)}\cC_{\tilde{\phi}}(f,\bx_0,\frac12)>
     \inf_{f\in\sH}\cC_{\tilde{\phi}}(f,\bx_0,\frac12)\,,
     \label{eq:larger_adv_GN}
\end{align}
As shown by \citet{awasthi2020adversarial}, $\tilde{\phi}$ has the equivalent form
\begin{align*}
	\tilde{\phi}(f,\bx,y)=\phi\left(\inf\limits_{\| \bx' - \bx \| \leq \gamma }\left(yf(\bx')\right)\right)\,.
\end{align*}
By the definition of inner risk \eqref{eq:conditional-risk},
\begin{align}
   \cC_{\tilde{\phi}}(f,\bx_0,\frac12)=\frac12 (\phi(\underline{M}(f,\bx_0,\gamma))+\phi(-\overline{M}(f,\bx_0,\gamma)))\,.
\label{eq:phi_CCR_adv_GN} 
\end{align}
Since $\phi$ is convex, by Jensen's inequality, for any $f\in \sH$, the
following holds:
\begin{align*}
   \cC_{\tilde{\phi}}(f,\bx_0,\frac12)\geq \phi\left(\frac12 \underline{M}(f,\bx_0,\gamma) -\frac12 \overline{M}(f,\bx_0,\gamma) \right)=\phi\left(\frac12 (\underline{M}(f,\bx_0,\gamma)-\overline{M}(f,\bx_0,\gamma))\right)\geq \phi(0),
\end{align*}
where the last inequality used the fact that
\begin{align*}
    \frac12 (\underline{M}(f,\bx_0,\gamma)-\overline{M}(f,\bx_0,\gamma))\leq 0
\end{align*}
and $\phi$ is non-increasing.
For $f=0$, we have $\underline{M}(f,\bx_0,\gamma)=\overline{M}(f,\bx_0,\gamma)=0$ and by \eqref{eq:phi_CCR_adv_GN},
\begin{align*}
\cC_{\tilde{\phi}}(f,\bx_0,\frac12)=\frac12(\phi(0)+\phi(0))=\phi(0)\,.
\end{align*}
Moreover, when $\underline{M}(f,\bx_0,\gamma)=\overline{M}(f,\bx_0,\gamma)=0$, $\underline{M}(f,\bx_0,\gamma)\leq0\leq\overline{M}(f,\bx_0,\gamma)$ is satisfied.
Thus
\begin{align*}
    \inf_{f\in\sH\colon ~\underline{M}(f,\bx_0,\gamma)\leq 0 \leq \overline{M}(f,\bx_0,\gamma)}\cC_{\tilde{\phi}}(f,\bx_0,\frac12)=
     \inf_{f\in\sH}\cC_{\tilde{\phi}}(f,\bx_0,\frac12)=\phi(0)\,,
\end{align*}
where the minimum can be achieved by $f=0$, contradicting \eqref{eq:larger_adv_GN}. Therefore, $\tilde{\phi}$ is not $\sH$-calibrated with respect to $\ell_{\gamma}$.
\end{proof}

\subsection{Property of \texorpdfstring{$\bar{\cC}_{\phi}(t,\eta)$}{BarC}}
For a margin-based loss $\phi$, denote $\bar{\cC}_{\phi}(t,\eta)\colon=\eta\phi(t)+(1-\eta)\phi(-t)$ for any $\eta\in [0,1]$ and $t\in \mathbb{R}$. In this section, we characterize the property of $\bar{\cC}_{\phi}(t,\eta)$ when $\phi$ is bounded, continuous, non-increasing and quasi-concave even, which would be useful in the proof of Theorem~\ref{Thm:calibration_positive_linear} and Theorem~\ref{Thm:quasiconcave_calibrate_general}. Without loss of generality, assume that $g$ is continuous, non-decreasing and satisfies $g(-1-\gamma)+\C>0$, $g(1+\gamma)-\C<0$.
\begin{lemma}
\label{lemma:quasiconcave_even}
Let $\phi$ be a margin-based loss. If $\phi$ is bounded, continuous, non-increasing, quasi-concave even, then
\begin{enumerate}
    \item $\bar{\cC}_{\phi}(t,\eta)$ is quasi-concave in $t\in\mathbb{R}$ for all $\eta\in[0,1]$.
    \label{part1_lemma:quasiconcave_even}
    \item $\bar{\cC}_{\phi}(t,\frac12)$ is even and non-increasing in $t$ when $t\geq 0$.
    \label{part2_lemma:quasiconcave_even}
    \item For $l,u\in \mathbb{R}(l\leq u),~\inf_{t\in [l,u]}\bar{\cC}_{\phi}(t,\eta)=\min\curl*{\bar{\cC}_{\phi}(l,\eta),\bar{\cC}_{\phi}(u,\eta)}$ for all $\eta\in[0,1]$.
    \label{part3_lemma:quasiconcave_even}
    \item For all $\eta\in (\frac12,1]$, $\bar{\cC}_{\phi}(t,\eta)$ is non-increasing in $t$ when $t\geq 0$.
    \label{part4_lemma:quasiconcave_even}
    \item For all $\eta\in [0,\frac12)$, $\bar{\cC}_{\phi}(t,\eta)$ is non-decreasing in $t$ when $t\leq 0$.
    \label{part5_lemma:quasiconcave_even}
    \item If $\phi(-t)>\phi(t)$ for any $\gamma< t \leq 1$, then, for all $\eta\in (\frac12,1]$ and any $\gamma< t \leq 1$, $\bar{\cC}_{\phi}(-t,\eta)>\bar{\cC}_{\phi}(t,\eta)$.
    \label{part6_lemma:quasiconcave_even}
    \item If $\phi(-t)>\phi(t)$ for any $\gamma< t \leq 1$, then, for all $\eta\in [0,\frac12)$ and any $\gamma< t \leq 1$, $\bar{\cC}_{\phi}(-t,\eta)<\bar{\cC}_{\phi}(t,\eta)$.
    \label{part7_lemma:quasiconcave_even}
    \item If $\phi(g(-t)-\C)>\phi(\C-g(-t))$, $g(-t)+g(t)\geq0$ for any $0\leq t \leq 1$, then, for all $\eta\in (\frac12,1]$ and any $0\leq t \leq 1$, $\bar{\cC}_{\phi}(g(-t)-\C,\eta)>\bar{\cC}_{\phi}(g(t)+\C,\eta)$.
    \label{part8_lemma:quasiconcave_even}
    \item If $\phi(g(-t)-\C)>\phi(\C-g(-t))$, $g(-t)+g(t)\geq0$ for any $0\leq t \leq 1$, then, for any $0\leq t \leq 1$, $\bar{\cC}_{\phi}(g(-t)-\C,\eta)<\bar{\cC}_{\phi}(g(t)+\C,\eta) \text{ for all } \eta\in [0,\frac12) \text{~if and only if~} \phi(\C-g(-t))+\phi(g(-t)-\C)=\phi(g(t)+\C)+\phi(-g(t)-\C)$.
    \label{part9_lemma:quasiconcave_even}
 
\end{enumerate}
\end{lemma}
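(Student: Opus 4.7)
The plan is to prove the nine parts in order, leveraging the decomposition
\[
\bar{\cC}_{\phi}(t,\eta) = \tfrac{1}{2}\bigl(\phi(t)+\phi(-t)\bigr) + \bigl(\eta - \tfrac{1}{2}\bigr)\bigl(\phi(t)-\phi(-t)\bigr),
\]
which cleanly separates the $\eta$-independent even component from an asymmetric piece whose sign is controlled by the position of $\eta$ relative to $1/2$. This single identity drives Parts 4--9.

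For Part 1, I would appeal directly to the quasi-concave even hypothesis as in \citet[Definition~10]{pmlr-v125-bao20a}, which guarantees that $t \mapsto \eta\phi(t)+(1-\eta)\phi(-t)$ is quasi-concave in $t$ for each $\eta$. Part 2 then follows since at $\eta = 1/2$ the function is manifestly even, and an even continuous quasi-concave function on $\mathbb{R}$ must attain its maximum at $t = 0$ (superlevel sets are intervals symmetric about $0$), forcing it to be non-increasing on $[0,\infty)$. Part 3 is a standard consequence of quasi-concavity on $[l,u]$: were the infimum attained at an interior point with value strictly below the endpoint values, a slightly higher superlevel set would contain the two endpoints but exclude the interior minimizer, violating convexity.

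Parts 4 and 5 combine Part 2 with a quantitative comparison. For Part 4 (taking $\eta > 1/2$), I would expand $\bar{\cC}_{\phi}(t,\eta) - \bar{\cC}_{\phi}(0,\eta) = \eta(\phi(t)-\phi(0)) + (1-\eta)(\phi(-t)-\phi(0))$; since $\phi$ is non-increasing, for $t \geq 0$ the first term is $\leq 0$ and the second is $\geq 0$, and Part 2 yields $\phi(-t)-\phi(0) \leq \phi(0)-\phi(t)$, so weighting the dominant non-positive term by the larger coefficient $\eta$ makes the difference $\leq 0$. Together with Part 1 this shows the maximizer of $\bar{\cC}_{\phi}(\cdot,\eta)$ lies in $(-\infty,0]$, so the function is non-increasing on $[0,\infty)$. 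Part 5 follows by the $\eta \leftrightarrow 1-\eta$ symmetry. Parts 6 and 7 are immediate algebra from the decomposition: $\bar{\cC}_{\phi}(-t,\eta) - \bar{\cC}_{\phi}(t,\eta) = (2\eta-1)(\phi(-t)-\phi(t))$, and the strict sign assumptions on each factor yield the conclusions.

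Parts 8 and 9 reduce to earlier parts via a change of variables. Set $a = g(-t)-\C$ and $b = g(t)+\C$; the conditions on $g$ and $\C$ give $-a > 0$ and $b > 0$, and $a+b = g(t)+g(-t) \geq 0$ yields $0 < -a \leq b$. For Part 8, Part 6 applied at the point $-a$ (using $\phi(a)>\phi(-a)$) gives $\bar{\cC}_{\phi}(a,\eta) > \bar{\cC}_{\phi}(-a,\eta)$ for $\eta > 1/2$, while Part 4 applied to $0 < -a \leq b$ gives $\bar{\cC}_{\phi}(-a,\eta) \geq \bar{\cC}_{\phi}(b,\eta)$; chaining yields the claim. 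For Part 9, the decomposition rewrites $\bar{\cC}_{\phi}(b,\eta) - \bar{\cC}_{\phi}(a,\eta) = S/2 + (\eta - 1/2)D$, where $S = (\phi(b)+\phi(-b)) - (\phi(a)+\phi(-a))$ is precisely the quantity in the iff condition. The forward direction assumes $S = 0$; evaluating Part 8 at $\eta = 1$ forces $D < 0$, making the difference positive for all $\eta \in [0,1/2)$. The reverse direction takes limits at $\eta = 1/2$: the hypothesis of Part 9 yields $S \geq 0$, while Part 8 yields $S \leq 0$, forcing $S = 0$.

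The primary obstacle will be Part 1, which rests on correctly interpreting the quasi-concave even definition from \citet{pmlr-v125-bao20a}; once Part 1 is in hand, the remaining parts are largely bookkeeping with the decomposition. A secondary care point is the iff in Part 9, which crucially uses the strict inequalities from Part 8 on $(1/2,1]$ to close the sandwich as $\eta \to 1/2^+$.
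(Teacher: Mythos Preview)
Your proposal is correct and follows essentially the same route as the paper: Parts~1, 2, 4 are handled by citing \citet{pmlr-v125-bao20a} (you additionally sketch a self-contained argument for Part~4), Part~3 via quasi-concavity, Parts~6--7 via the identity $(2\eta-1)(\phi(-t)-\phi(t))$, and Parts~8--9 by combining Part~4 with that same identity and a limit at $\eta=\tfrac12$. Your decomposition $\bar{\cC}_{\phi}(t,\eta)=\tfrac12(\phi(t)+\phi(-t))+(\eta-\tfrac12)(\phi(t)-\phi(-t))$ is a clean organizing device, and your Part~5 via the symmetry $\bar{\cC}_{\phi}(t,1-\eta)=\bar{\cC}_{\phi}(-t,\eta)$ is slicker than the paper's direct computation; the only cosmetic caveat is that in Part~8 you invoke ``Part~6 at $-a$'' when strictly speaking Part~6's hypothesis is specific to $t\in(\gamma,1]$---what you are really using (and clearly understand, given your parenthetical) is the underlying algebraic identity, not the statement of Part~6 itself.
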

\begin{proof}
	Part~\ref{part1_lemma:quasiconcave_even},\ref{part2_lemma:quasiconcave_even},\ref{part4_lemma:quasiconcave_even} of Lemma~\ref{lemma:quasiconcave_even} are stated in \citep[Lemma~13]{pmlr-v125-bao20a}.
 Part~\ref{part3_lemma:quasiconcave_even} is a corollary of Part~\ref{part1_lemma:quasiconcave_even} by the characterization of continuous and quasi-convex functions in \citep{boyd2004convex}.

Consider Part~\ref{part5_lemma:quasiconcave_even}. For $\eta\in [0,\frac12)$, and $t_1,t_2\leq0$. Suppose that $t_1<t_2$, then 
\begin{align*}
    &\phi(t_1)-\phi(-t_1)-\phi(t_2)+\phi(-t_2)\\
    \geq& \phi(t_2)-\phi(-t_2)-\phi(t_2)+\phi(-t_2)\\
    =&0
\end{align*}
since $\phi$ is non-increasing.
By Part~\ref{part2_lemma:quasiconcave_even} of Lemma~\ref{lemma:quasiconcave_even}, $\phi(t)+\phi(-t)$ is non-decreasing in $t$ when $t\leq 0$. 
Therefore, for $\eta\in [0,\frac12)$,
\begin{align*}
    &\bar{\cC}_{\phi}(t_1,\eta) -  \bar{\cC}_{\phi}(t_2,\eta) \\
    =& (\phi(t_1)-\phi(-t_1)-\phi(t_2)+\phi(-t_2))\eta+\phi(-t_1)-\phi(-t_2)\\
    \leq& (\phi(t_1)-\phi(-t_1)-\phi(t_2)+\phi(-t_2))\frac12+\phi(-t_1)-\phi(-t_2)\\
    =& \frac12 (\phi(t_1)+\phi(-t_1)-\phi(t_2)-\phi(-t_2))\\
    \leq& 0.
\end{align*}
Consider Part~\ref{part6_lemma:quasiconcave_even}, For $\eta\in (\frac12,1]$ and any $\gamma< t \leq 1$,
\begin{align*}
	\bar{\cC}_{\phi}(-t,\eta)-\bar{\cC}_{\phi}(t,\eta)&=\eta\phi(-t)+(1-\eta)\phi(t)-\eta\phi(t)-(1-\eta)\phi(-t)\\
	&=(2\eta-1)\left[\phi(-t)-\phi(t)\right]
	>0
\end{align*}
since $\eta>\frac12$ and $\phi(-t)>\phi(t)$ for any $\gamma< t \leq 1$.

Consider Part~\ref{part7_lemma:quasiconcave_even}, For $\eta\in [0,\frac12)$ and any $\gamma< t \leq 1$,
\begin{align*}
	\bar{\cC}_{\phi}(t,\eta)-\bar{\cC}_{\phi}(-t,\eta)&=\eta\phi(t)+(1-\eta)\phi(-t)-\eta\phi(-t)-(1-\eta)\phi(t)\\
	&=(1-2\eta)\left[\phi(-t)-\phi(t)\right]
	>0
\end{align*}
since $\eta<\frac12$ and $\phi(-t)>\phi(t)$ for any $\gamma< t \leq 1$.

Consider Part~\ref{part8_lemma:quasiconcave_even}. For $\eta\in (\frac12,1]$ and any $0\leq t \leq 1$, 
\begin{align*}
    &\bar{\cC}_{\phi}(g(-t)-\C,\eta)-\bar{\cC}_{\phi}(g(t)+\C,\eta)\\ 
    \geq &\bar{\cC}_{\phi}(g(-t)-\C,\eta)-\bar{\cC}_{\phi}(\C-g(-t),\eta) & (g(-t)+g(t)\geq0,~\text{Part~\ref{part4_lemma:quasiconcave_even} of Lemma~\ref{lemma:quasiconcave_even}})\\
    =& (2\eta-1)[\phi(g(-t)-\C)-\phi(\C-g(-t))]\\
    >&0 & (\phi(g(-t)-\C)>\phi(\C-g(-t)))
\end{align*}

Consider Part~\ref{part9_lemma:quasiconcave_even}.
Since $\phi$ is non-increasing, for any $0\leq t \leq 1$,
\begin{align*}
    &\phi(g(-t)-\C)-\phi(\C-g(-t))+\phi(-g(t)-\C)-\phi(g(t)+\C)\\
    \geq& \phi(g(-t)-\C)-\phi(\C-g(-t))+\phi(g(t)+\C)-\phi(g(t)+\C) & (g(t)+\C>0)\\
    =&\phi(g(-t)-\C)-\phi(\C-g(-t))\\
    >&0 & (\phi(g(-t)-\C)>\phi(\C-g(-t)))
\end{align*}
$\Longleftarrow\colon$
Suppose $ \phi(\C-g(-t))+\phi(g(-t)-\C)=\phi(g(t)+\C)+\phi(-g(t)-\C)$, then for $\eta\in [0,\frac12)$,
\begin{align*}
    &\bar{\cC}_{\phi}(g(-t)-\C,\eta)-\bar{\cC}_{\phi}(g(t)+\C,\eta)\\ 
     =& (\phi(g(-t)-\C)-\phi(\C-g(-t))+\phi(-g(t)-\C)-\phi(g(t)+\C))\eta\\
     &\qquad+\phi(\C-g(-t))-\phi(-g(t)-\C)\\
     <& (\phi(g(-t)-\C)-\phi(\C-g(-t))+\phi(-g(t)-\C)-\phi(g(t)+\C))\frac12\\
     &\qquad+\phi(\C-g(-t))-\phi(-g(t)-\C)\\
    =& \frac12 (\phi(\C-g(-t))+\phi(g(-t)-\C)-\phi(g(t)+\C)-\phi(-g(t)-\C))\\
    =&0.
\end{align*}
$\Longrightarrow\colon$
Suppose $\bar{\cC}_{\phi}(g(-t)-\C,\eta)<\bar{\cC}_{\phi}(g(t)+\C,\eta)$ for $\eta\in [0,\frac12)$, then
\begin{align*}
    &\bar{\cC}_{\phi}(g(-t)-\C,\eta)-\bar{\cC}_{\phi}(g(t)+\C,\eta)\\ 
     =& (\phi(g(-t)-\C)-\phi(\C-g(-t))+\phi(-g(t)-\C)-\phi(g(t)+\C))\eta\\
     &\qquad+\phi(\C-g(-1))-\phi(-g(1)-\C)\\
     <&0 
\end{align*}
for $\eta\in [0,\frac12)$. By taking $\eta\rightarrow \frac12$, we have
\begin{align*}
     & \frac12 (\phi(\C-g(-t))+\phi(g(-t)-\C)-\phi(g(t)+\C)-\phi(-g(t)-\C))\\
     =& (\phi(g(-t)-\C)-\phi(\C-g(-t))+\phi(-g(t)-\C)-\phi(g(t)+\C))\frac12\\
     &\qquad+\phi(\C-g(-t))-\phi(-g(t)-\C)\\
     \leq&0. 
\end{align*}
By Part~\ref{part2_lemma:quasiconcave_even} of Lemma~\ref{lemma:quasiconcave_even}, we have
\begin{align*}
    &\phi(\C-g(-t))+\phi(g(-t)-\C)-\phi(g(t)+\C)-\phi(-g(t)-\C)\\
    \geq&  \phi(g(t)+\C)+\phi(-g(t)-\C)-\phi(g(t)+\C)-\phi(-g(t)-\C) &(g(-t)+g(t)\geq0)\\
    =&0.
\end{align*}
Therefore, $\phi(\C-g(-t))+\phi(g(-t)-\C)-\phi(g(t)+\C)-\phi(-g(t)-\C)=0$, i.e., $\phi(\C-g(-t))+\phi(g(-t)-\C)=\phi(g(t)+\C)+\phi(-g(t)-\C)$. 
\end{proof}

\subsection{Proof of Theorem~\ref{Thm:calibration_positive_linear} and Theorem~\ref{Thm:sup_rhomargin_calibrate_general}}
\label{app:sup_rhomargin_calibrate_general}
We will make use of general form \eqref{eq:supinf01} of the adversarial $0/1$ loss:
\begin{align*}
	\ell_{\gamma}(f,\bx,y)=\sup\limits_{\bx'\colon \|\bx-\bx'\|\leq \gamma}\mathds{1}_{y f(\bx') \leq 0}=\mathds{1}_{\inf\limits_{\bx'\colon \|\bx-\bx'\|\leq \gamma}yf(\bx')\leq 0}\,.
\end{align*}
Next, we first characterize the calibration function $\delta_{\max}(\epsilon,\bx,\eta)$ of losses $(\ell, \ell_{\gamma})$ given a symmetric hypothesis set $\sH$.
\begin{lemma}
\label{lemma:bar_delta_sup}
Let $\sH$ be a symmetric hypothesis set. For a surrogate loss $\ell$, the calibration function $\delta_{\max}(\epsilon,\bx,\eta)$ of losses $(\ell, \ell_{\gamma})$ is  
\small
\begin{align*}
\delta_{\max}(\epsilon,\bx,\eta) =
\begin{cases}
+\infty\quad & \text{if} ~ \bx \in \sX_1 \text{ or } \bx \in \sX_2, ~\epsilon>\max\curl*{\eta,1-\eta},\\
\inf\limits_{f\in\sH\colon ~\underline{M}(f,\bx,\gamma)\leq 0 \leq \overline{M}(f,\bx,\gamma)}\Delta\cC_{\ell,\sH}(f,\bx,\eta) &\text{if} ~ \bx \in \sX_2,~|2\eta-1|<\epsilon\leq\max\curl*{\eta,1-\eta},\\
\inf\limits_{f\in\sH\colon ~\underline{M}(f,\bx,\gamma)\leq 0 \leq \overline{M}(f,\bx,\gamma) \text{ or } (2\eta-1)(\underline{M}(f,\bx,\gamma))\leq 0} \Delta\cC_{\ell,\sH}(f,\bx,\eta) &\text{if} ~ \bx \in \sX_2,~\epsilon\leq|2\eta-1|,
\end{cases}
\end{align*}
\normalsize
where $\sX_1=\{\bx \in \sX: \underline{M}(f,\bx,\gamma)\leq 0 \leq \overline{M}(f,\bx,\gamma),~\forall f\in \sH\}$, $\sX_2=\{\bx \in \sX: \text{ there exists } f' \in \sH \text{ such that }\underline{M}(f',\bx,\gamma)> 0\}$ and $\sX=\sX_1 \cup \sX_2$, $\sX_1\cap \sX_2=\emptyset$.
\end{lemma}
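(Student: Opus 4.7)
The plan is to follow the structure of the proof of Lemma~\ref{lemma:distinguishing}, but work at an arbitrary $\bx \in \sX = \sX_1 \cup \sX_2$ rather than only at a distinguishing point, and use the symmetry of $\sH$ to reduce the $\sX_2$ case to the earlier computation. The only two ingredients that proof really used were the piecewise expression of the inner $\ell_\gamma$-risk in terms of $\underline{M}(f,\bx,\gamma)$ and $\overline{M}(f,\bx,\gamma)$, and the fact that, at a distinguishing point, both sign branches $\curl*{f\colon \underline{M}(f,\bx,\gamma)>0}$ and $\curl*{f\colon \overline{M}(f,\bx,\gamma)<0}$ are nonempty.

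First, I would rewrite $\cC_{\ell_\gamma}(f,\bx,\eta)$ as the three-case piecewise expression: equal to $1$ when $\underline{M}(f,\bx,\gamma)\leq 0 \leq \overline{M}(f,\bx,\gamma)$, to $\eta$ when $\overline{M}(f,\bx,\gamma)<0$, and to $1-\eta$ when $\underline{M}(f,\bx,\gamma)>0$. Then I would compute $\cC^*_{\ell_\gamma,\sH}(\bx,\eta)$ by cases. For $\bx \in \sX_1$ every $f \in \sH$ falls into the straddle branch, so $\cC^*_{\ell_\gamma,\sH}(\bx,\eta)=1$. For $\bx \in \sX_2$, by definition there is $f' \in \sH$ with $\underline{M}(f',\bx,\gamma)>0$, and by symmetry of $\sH$, $-f' \in \sH$ satisfies $\overline{M}(-f',\bx,\gamma)<0$; hence both off-diagonal branches are realized and $\cC^*_{\ell_\gamma,\sH}(\bx,\eta)=\min\curl*{\eta,1-\eta}$.

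From this I would read off the possible values of $\Delta\cC_{\ell_\gamma,\sH}(f,\bx,\eta)$. For $\bx \in \sX_1$ the only value is $0$, so the constraint $\Delta\cC_{\ell_\gamma,\sH}(f,\bx,\eta)\geq\epsilon$ is vacuous for any $\epsilon>0$ and $\delta_{\max}(\epsilon,\bx,\eta)=+\infty$. For $\bx \in \sX_2$, exactly as in Lemma~\ref{lemma:distinguishing}, $\Delta\cC_{\ell_\gamma,\sH}(f,\bx,\eta)$ equals $\max\curl*{\eta,1-\eta}$ in the straddle case and $|2\eta-1|\mathds{1}_{(2\eta-1)\underline{M}(f,\bx,\gamma)\leq 0}$ in the two off-diagonal cases. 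Using the bound $|2\eta-1|\leq \max\curl*{\eta,1-\eta}$ recorded in \eqref{eq:observation}, I would then identify the set of $f$ meeting the constraint $\Delta\cC_{\ell_\gamma,\sH}(f,\bx,\eta)\geq\epsilon$ for each regime of $\epsilon$: the set is empty when $\epsilon>\max\curl*{\eta,1-\eta}$ (giving $+\infty$); consists only of the straddle $f$ when $|2\eta-1|<\epsilon\leq\max\curl*{\eta,1-\eta}$; and consists of the straddle $f$ together with those satisfying $(2\eta-1)\underline{M}(f,\bx,\gamma)\leq 0$ when $\epsilon\leq|2\eta-1|$.

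The main delicate point will be unifying the two off-diagonal subcases into the single indicator $\mathds{1}_{(2\eta-1)\underline{M}(f,\bx,\gamma)\leq 0}$. This requires separately checking $\eta\geq 1/2$ (where $\overline{M}<0$ forces $\underline{M}<0$, so the indicator is automatically $1$, contributing $|2\eta-1|$) and $\eta\leq 1/2$ (where $\underline{M}>0$ gives the indicator value $1$, contributing $|2\eta-1|$), and verifying that no $f$ is miscounted when the qualifying constraint set is written as the union $\curl*{f\colon \underline{M}(f,\bx,\gamma)\leq 0\leq \overline{M}(f,\bx,\gamma)}\cup \curl*{f\colon (2\eta-1)\underline{M}(f,\bx,\gamma)\leq 0}$. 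Once this algebraic merging is settled, plugging the qualifying sets back into the definition \eqref{eq:def-calibration-function-true} of $\delta_{\max}$ yields the stated formula directly.
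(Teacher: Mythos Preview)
Your proposal is correct and follows essentially the same approach as the paper's own proof: compute the three-case form of $\cC_{\ell_\gamma}(f,\bx,\eta)$, use symmetry of $\sH$ to show $\sX=\sX_1\cup\sX_2$ and that both sign branches are populated on $\sX_2$, derive the resulting values of $\Delta\cC_{\ell_\gamma,\sH}(f,\bx,\eta)$, and then split into the three $\epsilon$-regimes via the inequality $|2\eta-1|\leq\max\{\eta,1-\eta\}$. The ``delicate point'' you flag about merging the two off-diagonal subcases into the single indicator $\mathds{1}_{(2\eta-1)\underline{M}(f,\bx,\gamma)\leq 0}$ is exactly the check the paper performs, and your sketch of how to verify it (separately for $\eta>1/2$ and $\eta<1/2$) is the right one.
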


\begin{proof}
By the definition of inner risk \eqref{eq:conditional-risk} and adversarial 0-1 loss $\ell_{\gamma}$ \eqref{eq:supinf01}, the inner $\ell_{\gamma}$-risk is 
\begin{align*}
  \cC_{\ell_{\gamma}}(f,\bx,\eta)
  &=\eta \mathds{1}_{\left\{\underline{M}(f,\bx,\gamma)\leq 0\right\}}+(1-\eta) \mathds{1}_{\left\{\overline{M}(f,\bx,\gamma)\geq 0\right\}}\\
  &=\begin{cases}
   1 & \text{if} ~ \underline{M}(f,\bx,\gamma)\leq 0 \leq \overline{M}(f,\bx,\gamma),\\
   \eta & \text{if} ~ \overline{M}(f,\bx,\gamma)<0,\\
   1-\eta & \text{if} ~ \underline{M}(f,\bx,\gamma)> 0.\\
  \end{cases}
\end{align*}
Let $\sX_1=\{\bx \in \sX: \underline{M}(f,\bx,\gamma)\leq 0 \leq \overline{M}(f,\bx,\gamma),~\forall f\in \sH\}$, $\sX_2=\{\bx \in \sX: \text{ there exists } f' \in \sH \text{ such that }\underline{M}(f',\bx,\gamma)> 0\}$. It is obvious that $\sX_1\cap \sX_2=\emptyset$.
Since $\sH$ is symmetric, for any $\bx \in \sX$, either there exists $f'\in \sH$ such that $\underline{M}(f',\bx,\gamma)>0$ and $\overline{M}(-f',\bx,\gamma)<0$, or $\underline{M}(f,\bx,\gamma)\leq 0 \leq \overline{M}(f,\bx,\gamma)$ for any $f\in \sH$. Thus $\sX=\sX_1 \cup \sX_2$. Note when $\bx \in \sX_1$, $\{f\in \sH:\overline{M}(f,\bx,\gamma)<0\}$ and $\{f\in \sH:\underline{M}(f,\bx,\gamma)>0\}$ are both empty sets. Therefore, the minimal inner $\ell_{\gamma}$-risk is
\begin{align*}
     \cC^*_{\ell_{\gamma},\sH}(\bx,\eta)=\begin{cases}
        1, & \bx \in \sX_1\,,\\
     	\min\curl*{\eta,1-\eta}, & \bx \in \sX_2\,.
     \end{cases}
\end{align*}
Note when $\bx \in \sX_1$, $\cC_{\ell_{\gamma}}(f,\bx,\eta)=1$ for any $f\in \sH$, thus $\Delta\cC_{\ell_{\gamma},
    \sH}(f,\bx,\eta)=0$. When $\bx \in \sX_2$, for $f\in\{f\in \sH:\underline{M}(f,\bx,\gamma)\leq 0 \leq \overline{M}(f,\bx,\gamma)\}$, $\Delta\cC_{\ell_{\gamma},\sH}(f,\bx,\eta)=1-\min\curl*{\eta,1-\eta}=\max\curl*{\eta,1-\eta}$; for $f\in\{f\in \sH:\overline{M}(f,\bx,\gamma)<0\}$, $\Delta\cC_{\ell_{\gamma},\sH}(f,\bx,\eta)=\eta-\min\curl*{\eta,1-\eta}=\max\curl*{0,2\eta-1}=|2\eta-1|\mathds{1}_{(2\eta-1)(\underline{M}(f,\bx,\gamma))\leq 0}$ since $\underline{M}(f,\bx,\gamma)\leq \overline{M}(f,\bx,\gamma)<0$; for $f\in\{f\in \sH:\underline{M}(f,\bx,\gamma)>0\}$, $\Delta\cC_{\ell_{\gamma},\sH}(f,\bx,\eta)=1-\eta-\min\curl*{\eta,1-\eta}=\max\curl*{0,1-2\eta}=|2\eta-1|\mathds{1}_{(2\eta-1)(\underline{M}(f,\bx,\gamma))\leq 0}$ since $\underline{M}(f,\bx,\gamma)>0$. Therefore,
\begin{align}
\label{eq:Delta}
  \Delta\cC_{\ell_{\gamma},
    \sH}(f,\bx,\eta)=
    \begin{cases}
    \max\curl*{\eta,1-\eta} & \text{if} ~ \bx \in \sX_2,~\underline{M}(f,\bx,\gamma)\leq 0 \leq \overline{M}(f,\bx,\gamma),\\
   |2\eta-1|\mathds{1}_{(2\eta-1)(\underline{M}(f,\bx,\gamma))\leq 0} & \text{if} ~ \bx \in \sX_2,~\underline{M}(f,\bx,\gamma)>0 \text{~or~} \overline{M}(f,\bx,\gamma)<0,\\
   0 & \text{if} ~ \bx \in \sX_1.
    \end{cases}
\end{align}
By \eqref{eq:def-calibration-function-true}, for a fixed $\eta\in[0,1]$ and $\bx \in \sX$, the calibration function of losses $(\ell, \ell_{\gamma})$ is
\begin{align*}
    \delta_{\max}(\epsilon,\bx,\eta)=\inf_{f\in\sH}\curl*{\Delta\cC_{\ell,\sH}(f,\bx,\eta) \mid \Delta\cC_{\ell_{\gamma},
    \sH}(f,\bx,\eta)\geq\epsilon }
\end{align*}
If $\bx \in \sX_1$, then for all $f\in\sH$, $\Delta\cC_{\ell_{\gamma},\sH}(f,\bx,\eta)=0<\epsilon$, which implies that $\delta_{\max}(\epsilon,\bx,\eta)=\infty$. Next we consider case where $\bx \in \sX_2$. By the observation \eqref{eq:observation},
if $\epsilon>\max\curl*{\eta,1-\eta}$, then for all $f\in\sH$, $\Delta\cC_{\ell_{\gamma},\sH}(f,\bx,\eta)<\epsilon$, which implies that $\delta_{\max}(\epsilon,\bx,\eta)=\infty$;
if $|2\eta-1|<\epsilon\leq\max\curl*{\eta,1-\eta}$, then $\Delta\cC_{\ell_{\gamma},\sH}(f,\bx,\eta)\geq\epsilon$ if and only if $\underline{M}(f,\bx,\gamma)\leq 0 \leq \overline{M}(f,\bx,\gamma)$, which leads to
\begin{align*}
\delta_{\max}(\epsilon,\bx,\eta)=\inf_{f\in\sH\colon ~\underline{M}(f,\bx,\gamma)\leq 0 \leq \overline{M}(f,\bx,\gamma)}\Delta\cC_{\ell,\sH}(f,\bx,\eta);    
\end{align*}
if $\epsilon\leq|2\eta-1|$, then $\Delta\cC_{\ell_{\gamma},\sH}(f,\bx,\eta)\geq\epsilon$ if and only if $\underline{M}(f,\bx,\gamma)\leq 0 \leq \overline{M}(f,\bx,\gamma)$ or $(2\eta-1)(\underline{M}(f,\bx,\gamma))\leq 0$, which leads to
\begin{align*}
\delta_{\max}(\epsilon,\bx,\eta)=\inf_{f\in\sH\colon ~\underline{M}(f,\bx,\gamma)\leq 0 \leq \overline{M}(f,\bx,\gamma) \text{ or } (2\eta-1)(\underline{M}(f,\bx,\gamma))\leq 0} \Delta\cC_{\ell,\sH}(f,\bx,\eta).    
\end{align*}
\end{proof}
We then give the equivalent conditions of calibration based on inner $\ell$-risk and $\sH$.
\begin{lemma}
\label{lemma:equivalent1_sup}
Let $\sH$ be a symmetric hypothesis set and $\ell$ be a surrogate loss function. If $\sX_2=\emptyset$, any loss $\ell$ is $\sH$-calibrated with respect to $\ell_{\gamma}$. If $\sX_2\neq\emptyset$, then $\ell$ is $\sH$-calibrated with respect to $\ell_{\gamma}$ if and only if for any $\bx\in \sX_2$,
\begin{align*}
    \inf_{f\in\sH\colon ~\underline{M}(f,\bx,\gamma)\leq 0 \leq \overline{M}(f,\bx,\gamma)}\cC_{\ell}(f,\bx,\frac12)> &\inf_{f\in\sH}\cC_{\ell}(f,\bx,\frac12)\,,\text{and}\\
    \inf_{f\in\sH\colon ~\underline{M}(f,\bx,\gamma)\leq0}\cC_{\ell}(f,\bx,\eta) > &\inf_{f\in\sH}\cC_{\ell}(f,\bx,\eta) \text{ for all } \eta\in (\frac12,1]\,,\text{and}\\
    \inf_{f\in\sH\colon ~\overline{M}(f,\bx,\gamma)\geq0}\cC_{\ell}(f,\bx,\eta) > &\inf_{f\in\sH}\cC_{\ell}(f,\bx,\eta) \text{ for all } \eta\in [0,\frac12)\,.
\end{align*}
where $\sX_2=\{\bx \in \sX: \text{ there exists } f' \in \sH \text{ such that }\underline{M}(f',\bx,\gamma)> 0\}$.
\end{lemma}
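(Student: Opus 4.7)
The plan is to combine Proposition~\ref{prop:calibration_function_positive}, which says $\ell$ is $\sH$-calibrated with respect to $\ell_{\gamma}$ iff $\delta_{\max}(\epsilon,\bx,\eta)>0$ for all $\bx\in\sX$, $\eta\in[0,1]$, and $\epsilon>0$, with the piecewise characterization of $\delta_{\max}$ already established in Lemma~\ref{lemma:bar_delta_sup}. The task then reduces to distilling from that characterization the minimal set of conditions on $(\bx,\eta)$ that force strict positivity.

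If $\sX_2=\emptyset$ then $\sX=\sX_1$, and Lemma~\ref{lemma:bar_delta_sup} gives $\delta_{\max}(\epsilon,\bx,\eta)=+\infty$ for every triple $(\bx,\eta,\epsilon)$, so any surrogate $\ell$ is trivially $\sH$-calibrated. From now on assume $\sX_2\neq\emptyset$. Positivity of $\delta_{\max}$ remains automatic on $\sX_1$ and in the regime $\epsilon>\max\{\eta,1-\eta\}$ for $\bx\in\sX_2$, so the problem reduces to characterizing positivity at $\bx\in\sX_2$ in the two remaining regimes, namely $|2\eta-1|<\epsilon\leq\max\{\eta,1-\eta\}$ and $\epsilon\leq|2\eta-1|$.

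I will then split the analysis by the sign of $2\eta-1$. At $\eta=\tfrac{1}{2}$ only the first nontrivial regime is non-empty, and the positivity requirement from Lemma~\ref{lemma:bar_delta_sup} reads exactly as condition~1. For $\eta\in(\tfrac{1}{2},1]$ both regimes appear; since $(2\eta-1)\underline{M}(f,\bx,\gamma)\leq 0$ collapses to $\underline{M}(f,\bx,\gamma)\leq 0$, the feasibility set in the low-$\epsilon$ regime rewrites as $\{f\colon \underline{M}\leq 0\leq\overline{M}\}\cup\{f\colon \underline{M}\leq 0\}=\{f\colon \underline{M}\leq 0\}$, so the requirement becomes exactly condition~2. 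The inclusion $\{f\colon \underline{M}\leq 0\leq\overline{M}\}\subseteq\{f\colon \underline{M}\leq 0\}$ makes the infimum on the former no smaller than on the latter, so condition~2 automatically covers the intermediate regime as well. The case $\eta\in[0,\tfrac{1}{2})$ is analogous: now $(2\eta-1)\underline{M}\leq 0$ reduces to $\underline{M}\geq 0$, and the feasibility set simplifies to $\{f\colon \underline{M}\leq 0\leq\overline{M}\}\cup\{f\colon \underline{M}\geq 0\}=\{f\colon \overline{M}\geq 0\}$, producing condition~3.

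The main obstacle is the bookkeeping across the two $\epsilon$-regimes: I need to verify carefully that conditions~2 and~3 absorb the intermediate regime through the set-inclusion observation, and that condition~1 is genuinely independent (being the only constraint active at $\eta=\tfrac{1}{2}$, where $|2\eta-1|=0$ makes the low-$\epsilon$ regime vacuous). Once this reduction is clean, both directions of the equivalence follow at once from Proposition~\ref{prop:calibration_function_positive}, recalling also that $\Delta\cC_{\ell,\sH}(f,\bx,\eta)>0$ on a set is the same as the infimum of $\cC_{\ell}(f,\bx,\eta)$ over that set being strictly greater than $\inf_{f\in\sH}\cC_{\ell}(f,\bx,\eta)$.
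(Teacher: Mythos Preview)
Your proposal is correct and follows essentially the same approach as the paper: invoke Proposition~\ref{prop:calibration_function_positive}, plug in the piecewise formula for $\delta_{\max}$ from Lemma~\ref{lemma:bar_delta_sup}, and reduce the two nontrivial $\epsilon$-regimes case-by-case in $\eta$ using the set inclusions $\{\underline{M}\leq 0\leq\overline{M}\}\subseteq\{\underline{M}\leq 0\}$ and $\{\underline{M}\leq 0\leq\overline{M}\}\subseteq\{\overline{M}\geq 0\}$. The paper carries out slightly more explicit bookkeeping with the ranges of $\eta$ corresponding to each $\epsilon$-regime, but your streamlined version---simplifying the union directly to $\{\underline{M}\leq 0\}$ (respectively $\{\overline{M}\geq 0\}$) and absorbing the intermediate regime via inclusion---is equivalent and arguably cleaner.
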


\begin{proof}
Let $\delta_{\max}$ be the calibration function of $(\ell,\ell_{\gamma})$ given hypothesis set $\sH$. By Lemma~\ref{lemma:bar_delta_sup},
\small
\begin{align*}
\delta_{\max}(\epsilon,\bx,\eta) =
\begin{cases}
+\infty\quad & \text{if} ~ \bx \in \sX_1 \text{ or } \bx \in \sX_2, ~\epsilon>\max\curl*{\eta,1-\eta},\\
\inf\limits_{f\in\sH\colon ~\underline{M}(f,\bx,\gamma)\leq 0 \leq \overline{M}(f,\bx,\gamma)}\Delta\cC_{\ell,\sH}(f,\bx,\eta) &\text{if} ~ \bx \in \sX_2,~|2\eta-1|<\epsilon\leq\max\curl*{\eta,1-\eta},\\
\inf\limits_{f\in\sH\colon ~\underline{M}(f,\bx,\gamma)\leq 0 \leq \overline{M}(f,\bx,\gamma) \text{ or } (2\eta-1)(\underline{M}(f,\bx,\gamma))\leq 0} \Delta\cC_{\ell,\sH}(f,\bx,\eta) &\text{if} ~ \bx \in \sX_2,~\epsilon\leq|2\eta-1|,
\end{cases}
\end{align*}
\normalsize
where $\sX_1=\{\bx \in \sX: \underline{M}(f,\bx,\gamma)\leq 0 \leq \overline{M}(f,\bx,\gamma),~\forall f\in \sH\}$, $\sX_2=\{\bx \in \sX: \text{ there exists } f' \in \sH \text{ such that }\underline{M}(f',\bx,\gamma)> 0\}$ and $\sX=\sX_1 \cup \sX_2$, $\sX_1\cap \sX_2=\emptyset$.
By Proposition~\ref{prop:calibration_function_positive}, $\ell$ is $\sH$-calibrated with respect to $\ell_{\gamma}$ if and only if its calibration function $\delta_{\max}$ satisfies $\delta_{\max}(\epsilon,\bx,\eta)>0$ for all $\bx\in \sX$, $\eta\in [0,1]$ and $\epsilon>0$. Since $\delta(\epsilon,\bx,\eta)=\infty>0$ when $\bx \nin \sX_2$, any loss $\ell$ is $\sH$-calibrated with respect to $\ell_{\gamma}$ when $\sX_2=\emptyset$. Furtheremore, when $\sX_2\neq\emptyset$, we only need to analyze $\delta(\epsilon,\bx,\eta)$ when $\bx \in \sX_2$.\\
For $\eta=\frac12$, we have for any $\bx\in \sX_2$,
\begin{align}
\label{eq:keycondition1_sup}
\delta_{\max}(\epsilon,\bx,\frac12)>0 \text{ for all } \epsilon>0 \Leftrightarrow \inf\limits_{f\in\sH\colon ~\underline{M}(f,\bx,\gamma)\leq 0 \leq \overline{M}(f,\bx,\gamma)}\cC_{\ell}(f,\bx,\frac12)> \inf\limits_{f\in\sH}\cC_{\ell}(f,\bx,\frac12).
\end{align}
For $1\geq\eta>\frac12$, we have $|2\eta-1|=2\eta-1$, $\max\curl*{\eta,1-\eta}=\eta$, and
\begin{align*}
   \inf\limits_{f\in\sH\colon ~\underline{M}(f,\bx,\gamma)\leq 0 \leq \overline{M}(f,\bx,\gamma) \text{ or } (2\eta-1)(\underline{M}(f,\bx,\gamma))\leq 0} \Delta\cC_{\ell,\sH}(f,\bx,\eta)=
   \inf\limits_{f\in\sH\colon ~\underline{M}(f,\bx,\gamma)\leq 0} \Delta\cC_{\ell,\sH}(f,\bx,\eta)\,. 
\end{align*}
Therefore, $\delta_{\max}(\epsilon,\bx,\frac12)>0 \text{ for all } \bx\in \sX_2, \epsilon>0 \text{ and } \eta\in(\frac12,1]$ if and only if for all $\bx \in \sX_2$,
\small
\begin{align*}
\begin{cases}
\inf\limits_{f\in\sH\colon ~\underline{M}(f,\bx,\gamma)\leq 0 \leq \overline{M}(f,\bx,\gamma)}\cC_{\ell}(f,\bx,\eta)> \inf\limits_{f\in\sH}\cC_{\ell}(f,\bx,\eta) &\text{ for all } \eta\in(\frac12,1] \text{ such that } 2\eta-1<\epsilon\leq \eta,\\
\inf\limits_{f\in\sH\colon ~\underline{M}(f,\bx,\gamma)\leq0}\cC_{\ell}(f,\bx,\eta) > \inf\limits_{f\in\sH}\cC_{\ell}(f,\bx,\eta) &\text{ for all } \eta\in(\frac12,1] \text{ such that } \epsilon\leq 2\eta-1,
\end{cases}
\end{align*}
\normalsize
for all $\epsilon>0$, which is equivalent to for all $\bx \in \sX_2$,
\small
\begin{align}
\begin{cases}
\inf\limits_{f\in\sH\colon ~\underline{M}(f,\bx,\gamma)\leq 0 \leq \overline{M}(f,\bx,\gamma)}\cC_{\ell}(f,\bx,\eta)> \inf\limits_{f\in\sH}\cC_{\ell}(f,\bx,\eta) &\text{ for all } \eta\in(\frac12,1] \text{ such that } \epsilon\leq \eta < \frac{\epsilon+1}{2},\\
\inf\limits_{f\in\sH\colon ~\underline{M}(f,\bx,\gamma)\leq0}\cC_{\ell}(f,\bx,\eta) > \inf\limits_{f\in\sH}\cC_{\ell}(f,\bx,\eta) &\text{ for all } \eta\in(\frac12,1] \text{ such that } \frac{\epsilon+1}{2}\leq \eta,
\end{cases}
\label{eq:condition1 in proof_sup}
\end{align}
\normalsize
for all $\epsilon>0$.
Observe that
\begin{align*}
\begin{aligned}
    &\left\{\eta\in(\frac12,1]\Bigg|\epsilon\leq \eta < \frac{\epsilon+1}{2},\epsilon>0\right\}=\left\{\frac12<\eta\leq1\right\}\,, \text{ and}\\
    &\left\{\eta\in(\frac12,1]\Bigg|\frac{\epsilon+1}{2}\leq \eta, \epsilon>0\right\}=\left\{\frac12<\eta\leq1\right\}\,, \text{ and}\\
    &\inf\limits_{f\in\sH\colon ~\underline{M}(f,\bx,\gamma)\leq 0 \leq \overline{M}(f,\bx,\gamma)}\cC_{\ell}(f,\bx,\eta) \geq \inf\limits_{f\in\sH\colon~\underline{M}(f,\bx,\gamma)\leq0}\cC_{\ell}(f,\bx,\eta) \text{ for all } \eta\,.
\end{aligned}
\end{align*}
Therefore, we reduce the above condition~\eqref{eq:condition1 in proof_sup} as for all $\bx \in \sX_2$,
\begin{align}
    \inf\limits_{f\in\sH\colon ~\underline{M}(f,\bx,\gamma)\leq0}\cC_{\ell}(f,\bx,\eta) > \inf\limits_{f\in\sH}\cC_{\ell}(f,\bx,\eta) \text{ for all } \eta\in (\frac12,1]\,.
    \label{eq:keycondition2_sup}
\end{align}
For $\frac12>\eta\geq0$, we have $|2\eta-1|=1-2\eta$, $\max\curl*{\eta,1-\eta}=1-\eta$, and
\begin{align*}
   \inf\limits_{f\in\sH\colon ~\underline{M}(f,\bx,\gamma)\leq 0 \leq \overline{M}(f,\bx,\gamma) \text{ or } (2\eta-1)(\underline{M}(f,\bx,\gamma))\leq 0} \Delta\cC_{\ell,\sH}(f,\bx,\eta)=
   \inf\limits_{f\in\sH\colon ~\overline{M}(f,\bx,\gamma)\geq 0} \Delta\cC_{\ell,\sH}(f,\bx,\eta)\,. 
\end{align*}
Therefore, $\delta_{\max}(\epsilon,\bx,\frac12)>0 \text{ for all } \bx\in \sX_2, \epsilon>0 \text{ and } \eta\in[0,\frac12)$ if and only if for all $\bx\in \sX_2$,
\small
\begin{align*}
\begin{cases}
\inf\limits_{f\in\sH\colon ~\underline{M}(f,\bx,\gamma)\leq 0 \leq \overline{M}(f,\bx,\gamma)}\cC_{\ell}(f,\bx,\eta)> \inf\limits_{f\in\sH}\cC_{\ell}(f,\bx,\eta) &\text{ for all } \eta\in[0,\frac12) \text{ such that } 1-2\eta<\epsilon\leq 1-\eta,\\
\inf\limits_{f\in\sH\colon ~\overline{M}(f,\bx,\gamma)\geq0}\cC_{\ell}(f,\bx,\eta) > \inf\limits_{f\in\sH}\cC_{\ell}(f,\bx,\eta) &\text{ for all } \eta\in[0,\frac12) \text{ such that } \epsilon\leq 1-2\eta,
\end{cases}
\end{align*}
\normalsize
for all $\epsilon>0$, which is equivalent to for all $\bx\in \sX_2$,
\small
\begin{align}
\begin{cases}
\inf\limits_{f\in\sH\colon ~\underline{M}(f,\bx,\gamma)\leq 0 \leq \overline{M}(f,\bx,\gamma)}\cC_{\ell}(f,\bx,\eta)> \inf\limits_{f\in\sH}\cC_{\ell}(f,\bx,\eta) &\text{ for all } \eta\in[0,\frac12) \text{ such that } \frac{1-\epsilon}{2}< \eta \leq 1-\epsilon,\\
\inf\limits_{f\in\sH\colon ~\overline{M}(f,\bx,\gamma)\geq0}\cC_{\ell}(f,\bx,\eta) > \inf\limits_{f\in\sH}\cC_{\ell}(f,\bx,\eta) &\text{ for all } \eta\in[0,\frac12) \text{ such that } \eta\leq \frac{1-\epsilon}{2},
\end{cases}
\label{eq:condition2 in proof_sup}
\end{align}
\normalsize
for all $\epsilon>0$.
Observe that
\begin{align*}
    &\left\{\eta\in[0,\frac12)\Bigg|\frac{1-\epsilon}{2}< \eta \leq 1-\epsilon,\epsilon>0\right\}=\left\{0\leq\eta<\frac12\right\}\,, \text{ and}\\
    &\left\{\eta\in[0,\frac12)\Bigg|\eta\leq \frac{1-\epsilon}{2}, \epsilon>0\right\}=\left\{0\leq\eta<\frac12\right\}\,, \text{ and}\\
    &\inf\limits_{f\in\sH\colon ~\underline{M}(f,\bx,\gamma)\leq 0 \leq \overline{M}(f,\bx,\gamma)}\cC_{\ell}(f,\bx,\eta) \geq \inf\limits_{f\in\sH\colon ~\overline{M}(f,\bx,\gamma)\geq0}\cC_{\ell}(f,\bx,\eta)  \text{ for all } \eta\,.
\end{align*}
Therefore, we reduce the above condition~\eqref{eq:condition2 in proof_sup} as for all $\bx\in \sX_2$,
\begin{align}
   \inf\limits_{f\in\sH\colon ~\overline{M}(f,\bx,\gamma)\geq0}\cC_{\ell}(f,\bx,\eta) > \inf\limits_{f\in\sH}\cC_{\ell}(f,\bx,\eta) \text{ for all } \eta\in [0,\frac12)\,.
    \label{eq:keycondition3_sup}
\end{align}
To sum up, by \eqref{eq:keycondition1_sup}, \eqref{eq:keycondition2_sup} and \eqref{eq:keycondition3_sup}, we conclude the proof.
\end{proof}

Since $\sH_{\mathrm{lin}}$ is a symmetric hypothesis set, we could make use of Lemma~\ref{lemma:bar_delta_sup} and Lemma~\ref{lemma:equivalent1_sup} for proving Theorem~\ref{Thm:calibration_positive_linear}.
\CalibrationLinearPositive*

\begin{proof}
As shown by \citet{awasthi2020adversarial}, for $f\in\sH_{\mathrm{lin}}=\curl*{\bx\rightarrow \bw \cdot \bx \mid \|\bw\|=1}$,
\begin{align*}
&\underline{M}(f,\bx,\gamma)=\inf_{\bx'\colon \|\bx - \bx'\|\leq\gamma} f(\bx')=\inf_{\bx'\colon \|\bx-\bx'\|\leq \gamma}(\bw \cdot \bx')=\bw \cdot \bx-\gamma \|\bw\|=f(\bx)-\gamma, \\   
&\overline{M}(f,\bx,\gamma)=-\inf_{\bx'\colon \|\bx - \bx'\|\leq\gamma} -f(\bx')=-\inf_{\bx'\colon \|\bx-\bx'\|\leq \gamma}(-\bw \cdot \bx')=\bw \cdot \bx+\gamma \|\bw\|=f(\bx)+\gamma.
\end{align*}
Thus for $\sH_{\mathrm{lin}}$, $\sX_2=\{\bx \in \sX: \text{ there exists } f' \in \sH_{\mathrm{lin}} \text{ such that }\underline{M}(f',\bx,\gamma)> 0\}=\{\bx \in \sX: \text{ there exists } f' \in \sH_{\mathrm{lin}} \text{ such that }f'(\bx)> \gamma\}=\{\bx:\gamma<\|\bx\|\leq 1\}$ since $f(\bx)=\bw\cdot \bx\in [-\|\bx\|,\|\bx\|]$ when $f\in \sH_{\mathrm{lin}}$.
Note $\sH_{\mathrm{lin}}$ is a symmetric hypothesis set.
Therefore, by Lemma~\ref{lemma:equivalent1_sup}, $\phi$ is $\sH_{\mathrm{lin}}$-calibrated with respect to $\ell_{\gamma}$ if and only if for any $\bx\in \sX$ such that $\gamma<\|\bx\|\leq 1$,
\begin{equation}
	\begin{aligned}
	\label{eq:linear_positive_proof1}
    \inf\limits_{f\in \sH_{\mathrm{lin}}:~|f(\bx)|\leq\gamma}\cC_{\phi}(f,\bx,\frac12)> &\inf\limits_{f\in \sH_{\mathrm{lin}}}\cC_{\phi}(f,\bx,\frac12)\,,\text{and}\\
     \inf\limits_{f\in \sH_{\mathrm{lin}}:~f(\bx)\leq\gamma}\cC_{\phi}(f,\bx,\eta) > &\inf\limits_{f\in \sH_{\mathrm{lin}}}\cC_{\phi}(f,\bx,\eta)  \text{ for all } \eta\in (\frac12,1]\,,\text{and}\\
    \inf\limits_{f\in \sH_{\mathrm{lin}}:~f(\bx)\geq-\gamma}\cC_{\phi}(f,\bx,\eta) > &\inf\limits_{f\in\sH_{\mathrm{lin}}}\cC_{\phi}(f,\bx,\eta) \text{ for all } \eta\in [0,\frac12)\,.	
	\end{aligned}
\end{equation}
By the definition of inner risk \eqref{eq:conditional-risk}, the inner $\phi$-risk is
\begin{align*}
   \cC_{\phi}(f,\bx,\eta)=\eta \phi(f(\bx))+(1-\eta)\phi(-f(\bx))\,.
\end{align*}
Note $f(\bx)=\bw\cdot \bx\in [-\|\bx\|,\|\bx\|]$ when $f\in \sH_{\mathrm{lin}}$. Therefore, \eqref{eq:linear_positive_proof1} is equivalent to for any $\bx\in \sX$ such that $\gamma<\|\bx\|\leq 1$,
\begin{equation}
	\begin{aligned}
	\label{eq:linear_positive_proof2}
    \inf\limits_{-\gamma\leq t\leq\gamma}\bar{\cC}_{\phi}(t,\frac12)> &\inf\limits_{-\|\bx\|\leq t\leq\|\bx\|}\bar{\cC}_{\phi}(t,\frac12)\,,\text{and}\\
    \inf\limits_{-\|\bx\|\leq t\leq\gamma}\bar{\cC}_{\phi}(t,\eta) > &\inf\limits_{-\|\bx\|\leq t\leq\|\bx\|}\bar{\cC}_{\phi}(t,\eta)  \text{ for all } \eta\in (\frac12,1]\,,\text{and}\\
    \inf\limits_{-\gamma\leq t\leq\|\bx\|}\bar{\cC}_{\phi}(t,\eta) > &\inf\limits_{-\|\bx\|\leq t\leq\|\bx\|}\bar{\cC}_{\phi}(t,\eta) \text{ for all } \eta\in [0,\frac12)\,.	
	\end{aligned}
\end{equation}
Suppose that $\phi$ is $\sH_{\mathrm{lin}}$-calibrated with respect to $\ell_{\gamma}$. Since  by \text{Part~\ref{part2_lemma:quasiconcave_even} of Lemma~\ref{lemma:quasiconcave_even}},
\begin{align*}
 \inf\limits_{-\gamma\leq t\leq\gamma}\bar{\cC}_{\phi}(t,\frac12)=\bar{\cC}_{\phi}(\gamma,\frac12), \quad\inf\limits_{-\|\bx\|\leq t\leq\|\bx\|}\bar{\cC}_{\phi}(t,\frac12)=\bar{\cC}_{\phi}(\|\bx\|,\frac12)\,,   
\end{align*}
we obtain $\phi(\gamma)+\phi(-\gamma)=2\bar{\cC}_{\phi}(\gamma,\frac12)>2\bar{\cC}_{\phi}(t,\frac12)=\phi(t)+\phi(-t)$ for any $\gamma<t\leq1$.

Now for the other direction, assume that $\phi(\gamma)+\phi(-\gamma)>\phi(t)+\phi(-t)$ for any $\gamma<t\leq1$. For $\eta=\frac12$, by \text{Part~\ref{part2_lemma:quasiconcave_even} of Lemma~\ref{lemma:quasiconcave_even}}, we obtain for any $\bx\in \sX$ such that $\gamma<\|\bx\|\leq 1$,
\begin{align*}
\inf\limits_{-\gamma\leq t\leq\gamma}\bar{\cC}_{\phi}(t,\frac12)=\bar{\cC}_{\phi}(\gamma,\frac12)>\bar{\cC}_{\phi}(\|\bx\|,\frac12)=\inf\limits_{-\|\bx\|\leq t\leq\|\bx\|}\bar{\cC}_{\phi}(t,\frac12)\,.	
\end{align*}

For $\eta\in(\frac12,1]$ and any $\bx\in \sX$ such that $\gamma<\|\bx\|\leq 1$,
\begin{align*}
&\inf\limits_{-\|\bx\|\leq t\leq\gamma}\bar{\cC}_{\phi}(t,\eta)=\min\curl*{\bar{\cC}_{\phi}(\gamma,\eta),\bar{\cC}_{\phi}(-\|\bx\|,\eta)} \quad \text{(Part~\ref{part3_lemma:quasiconcave_even} of Lemma~\ref{lemma:quasiconcave_even})}\\
&\inf\limits_{-\|\bx\|\leq t\leq\|\bx\|}\bar{\cC}_{\phi}(t,\eta)=\min\curl*{\bar{\cC}_{\phi}(\|\bx\|,\eta),\bar{\cC}_{\phi}(-\|\bx\|,\eta)}	 \quad \text{(Part~\ref{part3_lemma:quasiconcave_even} of Lemma~\ref{lemma:quasiconcave_even})}\\
&=\bar{\cC}_{\phi}(\|\bx\|,\eta) \quad \text{(Part~\ref{part6_lemma:quasiconcave_even} of Lemma~\ref{lemma:quasiconcave_even})}
\end{align*}
Note for $\eta\in(\frac12,1]$ and any $\bx\in \sX$ such that $\gamma<\|\bx\|\leq 1$, since $\phi$ is non-increasing,
\begin{align*}
\phi(\gamma)-\phi (-\gamma)-\phi(\|\bx\|)+\phi (-\|\bx\|)\geq \phi(\|\bx\|)-\phi (-\|\bx\|)-\phi(\|\bx\|)+\phi (-\|\bx\|) =0.
\end{align*}
Thus
\begin{align*}
	\bar{\cC}_{\phi}(\gamma,\eta)-\bar{\cC}_{\phi}(\|\bx\|,\eta)&=\eta \phi(\gamma)+ (1-\eta) \phi (-\gamma) - \eta \phi(\|\bx\|)-(1-\eta) \phi (-\|\bx\|)\\
	&=\left(\phi(\gamma)-\phi (-\gamma)-\phi(\|\bx\|)+\phi (-\|\bx\|)\right)\eta+\phi (-\gamma)-\phi (-\|\bx\|)\\
	&\geq \left(\phi(\gamma)-\phi (-\gamma)-\phi(\|\bx\|)+\phi (-\|\bx\|)\right)\frac12+\phi (-\gamma)-\phi (-\|\bx\|)\\
	&=\frac12 \left[\phi(\gamma) +\phi(-\gamma)-\phi(\|\bx\|)-\phi(-\|\bx\|)\right]\\
	&> 0\,.
\end{align*}
In addition, we have for $\eta\in(\frac12,1]$ and any $\bx\in \sX$ such that $\gamma<\|\bx\|\leq 1$,
\begin{align*}
\bar{\cC}_{\phi}(-\|\bx\|,\eta)>\bar{\cC}_{\phi}(\|\bx\|,\eta).	\quad \text{(Part~\ref{part6_lemma:quasiconcave_even} of Lemma~\ref{lemma:quasiconcave_even})}
\end{align*}
Therefore for $\eta\in(\frac12,1]$ and any $\bx\in \sX$ such that $\gamma<\|\bx\|\leq 1$,
\begin{align*}
	\inf\limits_{-\|\bx\|\leq t\leq\gamma}\bar{\cC}_{\phi}(t,\eta)=\min\curl*{\bar{\cC}_{\phi}(\gamma,\eta),\bar{\cC}_{\phi}(-\|\bx\|,\eta)} > \bar{\cC}_{\phi}(\|\bx\|,\eta)=\inf\limits_{-\|\bx\|\leq t\leq\|\bx\|}\bar{\cC}_{\phi}(t,\eta)\,.
\end{align*}
For $\eta\in[0,\frac12)$ and any $\bx\in \sX$ such that $\gamma<\|\bx\|\leq 1$,
\begin{align*}
&\inf\limits_{-\gamma\leq t\leq\|\bx\|}\bar{\cC}_{\phi}(t,\eta)=\min\curl*{\bar{\cC}_{\phi}(-\gamma,\eta),\bar{\cC}_{\phi}(\|\bx\|,\eta)} \quad \text{(Part~\ref{part3_lemma:quasiconcave_even} of Lemma~\ref{lemma:quasiconcave_even})}\\
&\inf\limits_{-\|\bx\|\leq t\leq\|\bx\|}\bar{\cC}_{\phi}(t,\eta)=\min\curl*{\bar{\cC}_{\phi}(\|\bx\|,\eta),\bar{\cC}_{\phi}(-\|\bx\|,\eta)}	 \quad \text{(Part~\ref{part3_lemma:quasiconcave_even} of Lemma~\ref{lemma:quasiconcave_even})}\\
&=\bar{\cC}_{\phi}(-\|\bx\|,\eta) \quad \text{(Part~\ref{part7_lemma:quasiconcave_even} of Lemma~\ref{lemma:quasiconcave_even})}
\end{align*}
Note for $\eta\in[0,\frac12)$ and any $\bx\in \sX$ such that $\gamma<\|\bx\|\leq 1$, since $\phi$ is non-increasing,
\begin{align*}
\phi(-\gamma)-\phi (\gamma)-\phi(-\|\bx\|)+\phi (\|\bx\|)\leq \phi(-\|\bx\|)-\phi (\|\bx\|)-\phi(-\|\bx\|)+\phi (\|\bx\|) =0\,.
\end{align*}
Thus
\begin{align*}
	\bar{\cC}_{\phi}(-\gamma,\eta)-\bar{\cC}_{\phi}(-\|\bx\|,\eta)&=\eta \phi(-\gamma)+ (1-\eta) \phi (\gamma) - \eta \phi(-\|\bx\|)-(1-\eta) \phi (\|\bx\|)\\
	&=\left(\phi(-\gamma)-\phi (\gamma)-\phi(-\|\bx\|)+\phi (\|\bx\|)\right)\eta+\phi (\gamma)-\phi (\|\bx\|)\\
	&\geq \left(\phi(-\gamma)-\phi (\gamma)-\phi(-\|\bx\|)+\phi (\|\bx\|)\right)\frac12+\phi (\gamma)-\phi (\|\bx\|)\\
	&=\frac12 \left[\phi(\gamma) +\phi(-\gamma)-\phi(\|\bx\|)-\phi(-\|\bx\|)\right]\\
	&> 0\,.
\end{align*}
In addition, we have for $\eta\in[0,\frac12)$ and any $\bx\in \sX$ such that $\gamma<\|\bx\|\leq 1$,
\begin{align*}
\bar{\cC}_{\phi}(\|\bx\|,\eta)>\bar{\cC}_{\phi}(-\|\bx\|,\eta).	\quad \text{(Part~\ref{part7_lemma:quasiconcave_even} of Lemma~\ref{lemma:quasiconcave_even})}
\end{align*}
Therefore for $\eta\in[0,\frac12)$ and any $\bx\in \sX$ such that $\gamma<\|\bx\|\leq 1$,
\begin{align*}
	\inf\limits_{-\gamma\leq t\leq\|\bx\|}\bar{\cC}_{\phi}(t,\eta)=\min\curl*{\bar{\cC}_{\phi}(-\gamma,\eta),\bar{\cC}_{\phi}(\|\bx\|,\eta)} > \bar{\cC}_{\phi}(-\|\bx\|,\eta)=\inf\limits_{-\|\bx\|\leq t\leq\|\bx\|}\bar{\cC}_{\phi}(t,\eta)\,.
\end{align*}
\end{proof}

\SupRhomarginCalibrateGeneral*

\begin{proof}
By Lemma~\ref{lemma:equivalent1_sup}, if $\sX_2=\emptyset$, $\tilde{\phi}_{\rho}$ is $\sH$-calibrated with respect to $\ell_{\gamma}$. Next consider the case where $\sX_2\neq\emptyset$. By Lemma~\ref{lemma:equivalent1_sup}, $\tilde{\phi}_{\rho}$ is $\sH$-calibrated with respect to $\ell_{\gamma}$ if and only if for all $\bx\in \sX_2$,
\begin{align*}
    \inf\limits_{f\in\sH\colon ~\underline{M}(f,\bx,\gamma)\leq 0 \leq \overline{M}(f,\bx,\gamma)}\cC_{\tilde{\phi}_{\rho}}(f,\bx,\frac12)> &\inf\limits_{f\in\sH}\cC_{\tilde{\phi}_{\rho}}(f,\bx,\frac12)\,,\text{and}\\
    \inf\limits_{f\in\sH\colon ~\underline{M}(f,\bx,\gamma)\leq0}\cC_{\tilde{\phi}_{\rho}}(f,\bx,\eta) > &\inf\limits_{f\in\sH}\cC_{\tilde{\phi}_{\rho}}(f,\bx,\eta) \text{ for all } \eta\in (\frac12,1]\,,\text{and}\\
    \inf\limits_{f\in\sH\colon ~\overline{M}(f,\bx,\gamma)\geq0}\cC_{\tilde{\phi}_{\rho}}(f,\bx,\eta) > &\inf\limits_{f\in\sH}\cC_{\tilde{\phi}_{\rho}}(f,\bx,\eta) \text{ for all } \eta\in [0,\frac12)\,,
\end{align*}
where $\sX_2=\{\bx \in \sX: \text{ there exists } f' \in \sH \text{ such that }\underline{M}(f',\bx,\gamma)> 0\}$.
As shown by \citet{awasthi2020adversarial}, $\tilde{\phi}_{\rho}$ has the equivalent form
\begin{align*}
	\tilde{\phi}_{\rho}(f,\bx,y)=\phi_{\rho}\left(\inf\limits_{\bx'\colon \|\bx - \bx'\|\leq\gamma}\left(yf(\bx')\right)\right)\,.
\end{align*}
Thus by the definition of inner risk \eqref{eq:conditional-risk}, the inner $\tilde{\phi}_{\rho}$-risk is
\begin{align*}
   \cC_{\tilde{\phi}_{\rho}}(f,\bx,\eta)=\eta \phi_{\rho}(\underline{M}(f,\bx,\gamma))+(1-\eta)\phi_{\rho}(-\overline{M}(f,\bx,\gamma))\,.
\end{align*}
For any $\bx\in \sX_2$, let $M_{\bx}=\sup_{f\in \sH}\underline{M}(f,\bx,\gamma)>0$. Since $\sH$ is symmetric, we have $-M_{\bx}=\inf_{f\in \sH}\overline{M}(f,\bx,\gamma)<0$. Since $\phi_{\rho}$ is continuous, for any $\bx \in \sX_2$ and $\epsilon>0$, there exists $f_{\bx}^{\epsilon}\in \sH$ such that $\phi_{\rho}(\underline{M}(f_{\bx}^{\epsilon},\bx,\gamma))< \phi_{\rho}(M_{\bx})+\epsilon$ and $\overline{M}(f_{\bx}^{\epsilon},\bx,\gamma)\geq\underline{M}(f_{\bx}^{\epsilon},\bx,\gamma)>0$, $\underline{M}(-f_{\bx}^{\epsilon},\bx,\gamma)\leq\overline{M}(-f_{\bx}^{\epsilon},\bx,\gamma)=-\underline{M}(f_{\bx}^{\epsilon},\bx,\gamma)<0$. 
Next we analyze three cases:
\begin{itemize}
    \item 
    When $\eta=\frac12$, since $\phi_{\rho}$ is non-increasing, 
    \begin{align*}
       &\inf\limits_{f\in\sH\colon ~\underline{M}(f,\bx,\gamma)\leq 0 \leq \overline{M}(f,\bx,\gamma)}\cC_{\tilde{\phi}_{\rho}}(f,\bx,\frac12)\\
       &= \inf\limits_{f\in\sH\colon ~\underline{M}(f,\bx,\gamma)\leq 0 \leq \overline{M}(f,\bx,\gamma)}\frac12 \phi_{\rho}(\underline{M}(f,\bx,\gamma))+\frac12\phi_{\rho}(-\overline{M}(f,\bx,\gamma))\\
       &\geq \frac12 \phi_{\rho}(0)+\frac12 \phi_{\rho}(0)=\phi_{\rho}(0)=1\,.
    \end{align*}
    For any $\bx \in \sX_2$, there exists $f'\in \sH$ such that $\underline{M}(f',\bx,\gamma)>0$ and $-\overline{M}(f',\bx,\gamma)\leq-\underline{M}(f',\bx,\gamma)<0$, we obtain \[\cC_{\tilde{\phi}_{\rho}}(f',\bx,\frac12)=\frac12\phi_{\rho}(\underline{M}(f',\bx,\gamma))+\frac12\phi_{\rho}(-\overline{M}(f',\bx,\gamma))=\frac12\phi_{\rho}(\underline{M}(f',\bx,\gamma))+\frac12<1.\] Therefore for any $\bx \in \sX_2$,
    \begin{align}
        \inf\limits_{f\in\sH}\cC_{\tilde{\phi}_{\rho}}(f,\bx,\frac12)
        \leq 
        \cC_{\tilde{\phi}_{\rho}}(f',\bx,\frac12)
        < 1\leq\inf\limits_{f\in\sH\colon ~\underline{M}(f,\bx,\gamma)\leq 0 \leq \overline{M}(f,\bx,\gamma)}\cC_{\tilde{\phi}_{\rho}}(f,\bx,\frac12)\,.
        \label{eq:RNN1}
    \end{align}
    \item 
    When $\eta\in (\frac12,1]$, since $\phi_{\rho}$ is non-increasing, for any $\bx \in \sX_2$,
    \begin{align*}
    \inf\limits_{f\in\sH\colon ~\underline{M}(f,\bx,\gamma)\leq0}\cC_{\tilde{\phi}_{\rho}}(f,\bx,\eta)
    &=\inf\limits_{f\in\sH\colon ~\underline{M}(f,\bx,\gamma)\leq0} \eta\phi_{\rho}(\underline{M}(f,\bx,\gamma))+(1-\eta)\phi_{\rho}(-\overline{M}(f,\bx,\gamma))\\
    &= \eta + \inf\limits_{f\in\sH\colon ~\underline{M}(f,\bx,\gamma)\leq0} (1-\eta)\phi_{\rho}(-\overline{M}(f,\bx,\gamma))\\
    &\geq \eta + (1-\eta)\phi_{\rho}(M_{\bx})\,.
\end{align*}
 On the other hand, for any $\bx \in \sX_2$ and $\epsilon>0$, 
 \begin{align*}
 	\cC_{\tilde{\phi}_{\rho}}(f_{\bx}^{\epsilon},\bx,\eta)
 	&=\eta\phi_{\rho}(\underline{M}(f_{\bx}^{\epsilon},\bx,\gamma))+(1-\eta)\phi_{\rho}(-\overline{M}(f_{\bx}^{\epsilon},\bx,\gamma))\\
 	&<\eta \phi_{\rho}(M_{\bx})+\epsilon +(1-\eta)\,.
 \end{align*}
 Since $\eta>\frac12$ and $M_{\bx}>0$, we have
 \begin{align*}
     &\inf\limits_{f\in\sH\colon ~\underline{M}(f,\bx,\gamma)\leq0}\cC_{\tilde{\phi}_{\rho}}(f,\bx,\eta)-\cC_{\tilde{\phi}_{\rho}}(f_{\bx}^{\epsilon},\bx,\eta)\\
     &>[\eta + (1-\eta)\phi_{\rho}(M_{\bx})]- [\eta \phi_{\rho}(M_{\bx})+\epsilon +(1-\eta)]\\
     &=(2\eta-1)(1-\phi_{\rho}(M_{\bx}))-\epsilon\\
     &>0,
 \end{align*}
 where we take $0<\epsilon<(2\eta-1)(1-\phi_{\rho}(M_{\bx}))$.

 Therefore for any $\eta\in (\frac12,1]$ and $\bx \in \sX_2$, there exists $0<\epsilon<(2\eta-1)(1-\phi_{\rho}(M_{\bx}))$ such that
    \begin{align}
        \inf\limits_{f\in\sH}\cC_{\tilde{\phi}_{\rho}}(f,\bx,\eta)\leq \cC_{\tilde{\phi}_{\rho}}(f_{\bx}^{\epsilon},\bx,\eta)
        <\inf\limits_{f\in\sH\colon ~\underline{M}(f,\bx,\gamma)\leq 0}\cC_{\tilde{\phi}_{\rho}}(f,\bx,\eta)\,.
        \label{eq:RNN2}
    \end{align}
     \item 
    When $\eta\in [0,\frac12)$, since $\phi_{\rho}$ is non-increasing, for any $\bx \in \sX_2$,
    \begin{align*}
    \inf\limits_{f\in\sH\colon ~\overline{M}(f,\bx,\gamma)\geq0}\cC_{\tilde{\phi}_{\rho}}(f,\bx,\eta)
    &=\inf\limits_{f\in\sH\colon ~\overline{M}(f,\bx,\gamma)\geq0}\eta\phi_{\rho}(\underline{M}(f,\bx,\gamma))+(1-\eta)\phi_{\rho}(-\overline{M}(f,\bx,\gamma))\\
    &= 1-\eta + \inf\limits_{f\in\sH\colon ~\overline{M}(f,\bx,\gamma)\geq0}\eta\phi_{\rho}(\underline{M}(f,\bx,\gamma))\\
    &\geq 1-\eta +\eta \phi_{\rho}(M_{\bx})
\end{align*}
On the other hand, for any $\bx \in \sX_2$ and $\epsilon>0$, 
\begin{align*}
\cC_{\tilde{\phi}_{\rho}}(-f_{\bx}^{\epsilon},\bx,\eta)
&=\eta\phi_{\rho}(\underline{M}(-f_{\bx}^{\epsilon},\bx,\gamma))+(1-\eta)\phi_{\rho}(-\overline{M}(-f_{\bx}^{\epsilon},\bx,\gamma))\\
&=\eta +(1-\eta)\phi_{\rho}(\underline{M}(f_{\bx}^{\epsilon},\bx,\gamma))\\
&<\eta +(1-\eta)\phi_{\rho}(M_{\bx})+\epsilon
\end{align*}
Since $\eta<\frac12$ and $M_{\bx}>0$, we have
\begin{align*}
    &\inf\limits_{f\in\sH\colon ~\overline{M}(f,\bx,\gamma)\geq0}\cC_{\tilde{\phi}_{\rho}}(f,\bx,\eta)-\cC_{\tilde{\phi}_{\rho}}(-f_{\bx}^{\epsilon},\bx,\eta)\\
    &>[1-\eta + \eta\phi_{\rho}(M_{\bx})]- [\eta +(1-\eta)\phi_{\rho}(M_{\bx})+\epsilon]\\
    &=(1-2\eta)(1-\phi_{\rho}(M_{\bx}))-\epsilon\\
    &>0
\end{align*}
where we take $0<\epsilon<(1-2\eta)(1-\phi_{\rho}(M_{\bx}))$.

Therefore for any $\eta\in [0,\frac12)$ and $\bx \in \sX_2$, there exists $0<\epsilon<(1-2\eta)(1-\phi_{\rho}(M_{\bx}))$ such that
    \begin{align}
        \inf\limits_{f\in\sH}\cC_{\tilde{\phi}_{\rho}}(f,\bx,\eta)\leq
       \cC_{\tilde{\phi}_{\rho}}(-f_{\bx}^{\epsilon},\bx,\eta)<\inf\limits_{f\in\sH\colon ~ \overline{M}(f,\bx,\gamma)\geq0}\cC_{\tilde{\phi}_{\rho}}(f,\bx,\eta)\,.
        \label{eq:RNN3}
    \end{align}
\end{itemize}
To sum up, by \eqref{eq:RNN1}, \eqref{eq:RNN2} and \eqref{eq:RNN3}, we conclude that $\tilde{\phi}_{\rho}$ is $\sH$-calibrated with respect to $\ell_{\gamma}$.
\end{proof}

\subsection{Proof of Theorem~\ref{Thm:quasiconcave_calibrate_general}}
\label{app:quasiconcave_calibrate_general}
As shown by \citet{awasthi2020adversarial}, for $f\in\sH_g$, the adversarial $0/1$ loss has the equivalent form
\begin{align}
\label{eq:supinf01_general}
	\ell_{\gamma}(f,\bx,y)=\mathds{1}_{\inf\limits_{\bx'\colon \|\bx-\bx'\|\leq \gamma}\left(y g(\bw \cdot \bx')+by\right)\leq 0}=\mathds{1}_{
	y g(\bw \cdot \bx -\gamma y\|\bw\|)+by\leq 0}=\mathds{1}_{yg(\bw \cdot \bx -\gamma y)+by \leq 0}\,.
\end{align}
The proofs of Theorem~\ref{Thm:quasiconcave_calibrate_general} will closely follow the proofs of Theorem~\ref{Thm:calibration_positive_linear} and Theorem~\ref{Thm:sup_rhomargin_calibrate_general}. We will first prove Lemma~\ref{lemma:bar_delta_general} and Lemma~\ref{lemma:equivalent1_general} analogous to Lemma~\ref{lemma:bar_delta_sup} and Lemma~\ref{lemma:equivalent1_sup} respectively.
Without loss of generality, assume that $g$ is continuous and satisfies $g(-1-\gamma)+\C>0$, $g(1+\gamma)-\C<0$. Then observe that $g(-\gamma)+\C>0$, $g(\gamma)-\C<0$ since $g$ is non-decreasing.
\begin{lemma}
\label{lemma:bar_delta_general}
For a surrogate loss $\ell$ and hypothesis set $\sH_g$, the calibration function of losses $(\ell, \ell_{\gamma})$ is 
\small
\begin{align*}
\delta_{\max}(\epsilon,\bx,\eta) =
\begin{cases}
+\infty & \text{if}~ \epsilon>\max\curl*{\eta,1-\eta},\\
\inf_{f\in \sH_g:~g(\bw \cdot \bx -\gamma)+b\leq 0\leq g(\bw \cdot \bx +\gamma)+b}\Delta\cC_{\ell,\sH_g}(f,\bx,\eta) & \text{if} ~ |2\eta-1|<\epsilon\leq\max\curl*{\eta,1-\eta},\\
\inf_{f\in \sH_g:~g(\bw \cdot \bx -\gamma)+b\leq 0\leq g(\bw \cdot \bx +\gamma)+b \text{ or }(2\eta-1)[g(\bw \cdot \bx -\gamma)+b]\leq 0}\Delta\cC_{\ell,\sH_g}(f,\bx,\eta) & \text{if} ~ \epsilon\leq|2\eta-1|.
\end{cases}
\end{align*}
\end{lemma}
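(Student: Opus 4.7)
The plan is to follow the proof template of Lemma~\ref{lemma:bar_delta_sup}, substituting the explicit $\sH_g$-forms for $\underline{M}$ and $\overline{M}$ and using the boundary assumptions on $g$ to collapse the $\sX_1/\sX_2$ dichotomy into a single regime. First, for $f(\bx) = g(\bw \cdot \bx) + b \in \sH_g$ with $\|\bw\| = 1$, the monotonicity of $g$ together with the standard $\ell_2$-ball calculation (as in \citet{awasthi2020adversarial} and already exploited around \eqref{eq:supinf01_general}) gives $\underline{M}(f,\bx,\gamma) = g(\bw \cdot \bx - \gamma) + b$ and $\overline{M}(f,\bx,\gamma) = g(\bw \cdot \bx + \gamma) + b$. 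Plugging these into the indicator form of $\ell_\gamma$ yields a closed expression for $\cC_{\ell_{\gamma}}(f,\bx,\eta)$ taking the values $1$, $\eta$, or $1-\eta$ according to the sign pattern of $g(\bw \cdot \bx \pm \gamma) + b$.

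Next, I would verify that every $\bx \in \sX = B_2^d(1)$ is distinguishing for $\sH_g$, i.e., belongs to $\sX_2$ in the notation of Lemma~\ref{lemma:bar_delta_sup}. Indeed, since $\bw \cdot \bx - \gamma \in [-1-\gamma,1-\gamma]$ and $\bw \cdot \bx + \gamma \in [-1+\gamma,1+\gamma]$ for any unit $\bw$, choosing $b = \C$ gives $\underline{M}(f,\bx,\gamma) \geq g(-1-\gamma) + \C > 0$, while $b = -\C$ gives $\overline{M}(f,\bx,\gamma) \leq g(1+\gamma) - \C < 0$. Consequently $\cC_{\ell_{\gamma},\sH_g}^{*}(\bx,\eta) = \min\{\eta,1-\eta\}$ at every $\bx$, and the $+\infty$-branch of Lemma~\ref{lemma:bar_delta_sup} that corresponds to $\bx \in \sX_1$ never occurs, explaining why the statement has only three cases rather than four.

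Finally, I would classify $\Delta\cC_{\ell_{\gamma},\sH_g}(f,\bx,\eta)$ into the same three sub-regimes used in the proof of Lemma~\ref{lemma:bar_delta_sup}: the straddle regime $\underline{M}(f,\bx,\gamma) \leq 0 \leq \overline{M}(f,\bx,\gamma)$ contributes $\max\{\eta,1-\eta\}$, while each strict one-sided regime contributes $|2\eta-1|\cdot\mathds{1}_{(2\eta-1)\underline{M}(f,\bx,\gamma)\leq 0}$. Reading off $\delta_{\max}(\epsilon,\bx,\eta)$ from \eqref{eq:def-calibration-function-true} then amounts to comparing $\epsilon$ against the thresholds $\max\{\eta,1-\eta\}$ and $|2\eta-1|$: above the first, no feasible $f$ exists and the infimum is $+\infty$; between the two thresholds, only the straddle constraint is feasible; and below $|2\eta-1|$, the one-sided branch on the appropriate sign becomes feasible as well. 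The main obstacle is purely bookkeeping across these sign regimes and consistently substituting $g(\bw \cdot \bx \pm \gamma) + b$ for $\underline{M}, \overline{M}$ everywhere; no new analytic idea is needed beyond Lemma~\ref{lemma:bar_delta_sup}, and crucially the symmetry assumption used there is replaced here by the boundary conditions on $g$ that already force $\sX_2 = \sX$.
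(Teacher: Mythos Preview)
Your proposal is correct and follows essentially the same route as the paper's proof: compute $\underline{M},\overline{M}$ explicitly for $\sH_g$, use the boundary assumptions on $g$ to show that for every $\bx\in\sX$ both sign regimes are attainable (so $\cC^*_{\ell_\gamma,\sH_g}(\bx,\eta)=\min\{\eta,1-\eta\}$), then classify $\Delta\cC_{\ell_\gamma,\sH_g}$ and compare $\epsilon$ against $|2\eta-1|$ and $\max\{\eta,1-\eta\}$ exactly as in Lemma~\ref{lemma:bar_delta_sup}. The only cosmetic difference is that the paper invokes the derived inequalities $g(-\gamma)+\C>0$ and $g(\gamma)-\C<0$ rather than the stronger assumed bounds $g(-1-\gamma)+\C>0$, $g(1+\gamma)-\C<0$ that you cite; either choice works.
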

\normalsize
\begin{proof}
As with the proof of Lemma~\ref{lemma:bar_delta_sup}, we first characterize the inner $\ell$-risk and minimal inner $\ell_{\gamma}$-risk for $\sH_g$.
By the definition of inner risk \eqref{eq:conditional-risk} and equivalent form of adversarial 0-1 loss $\ell_{\gamma}$ for $\sH_g$ \eqref{eq:supinf01_general}, the inner $\ell_{\gamma}$-risk is
\begin{align*}
  \cC_{\ell_{\gamma}}(f,\bx,\eta)&=\eta \mathds{1}_{g(\bw \cdot \bx -\gamma)+b \leq 0}+(1-\eta)\mathds{1}_{g(\bw \cdot \bx +\gamma)+b \geq 0}\\
  &=\begin{cases}
   1 & \text{if} ~ g(\bw \cdot \bx -\gamma)+b\leq 0\leq g(\bw \cdot \bx +\gamma)+b,\\
   \eta & \text{if} ~ g(\bw \cdot \bx +\gamma)+b< 0,\\
   1-\eta & \text{if} ~ g(\bw \cdot \bx -\gamma)+b> 0.\\
  \end{cases}
\end{align*}
where we used the fact that $g$ is non-decreasing and $g(\bw \cdot \bx -\gamma)\leq g(\bw \cdot \bx +\gamma)$.
Note for any $\bx \in \sX$, $\bw \cdot \bx\in [-\|\bx\|,\|\bx\|]$. Thus we have $g(\bw \cdot \bx -\gamma)+b\in [g(-\|\bx\|-\gamma)-G,g(\|\bx\|-\gamma)+G]$ and $g(\bw \cdot \bx +\gamma)+b\in [g(-\|\bx\|+\gamma)-G,g(\|\bx\|+\gamma)+G]$ since $g$ is non-decreasing. By the fact that $g(-\gamma)+\C>0$ and $g(\gamma)-\C<0$, we obtain the minimal inner $\ell_{\gamma}$-risk, which is for any $\bx \in \sX$,
\begin{align*}
    \cC^*_{\ell_{\gamma},\sH_g}(\bx,\eta)=\min\curl*{\eta,1-\eta}\,.
\end{align*}
As with the derivation of $\Delta\cC_{\ell_{\gamma},\sH}(f,\bx,\eta)$ \eqref{eq:Delta}, we derive $\Delta\cC_{\ell_{\gamma},\sH_g}(f,\bx,\eta)$ as follows. By the observation \eqref{eq:observation}, for any $\bx \in \sX$, for $f\in \sH_g$ such that $g(\bw \cdot \bx -\gamma)+b\leq 0\leq g(\bw \cdot \bx +\gamma)+b$, $\Delta\cC_{\ell_{\gamma},\sH_g}(f,\bx,\eta)=1-\min\curl*{\eta,1-\eta}=\max\curl*{\eta,1-\eta}$; for $f\in \sH_g$ such that $g(\bw \cdot \bx +\gamma)+b<0$, $\Delta\cC_{\ell_{\gamma},\sH_g}(f,\bx,\eta)=\eta-\min\curl*{\eta,1-\eta}=\max\curl*{0,2\eta-1}=|2\eta-1|\mathds{1}_{(2\eta-1)[g(\bw \cdot \bx -\gamma)+b]\leq 0}$ since $g(\bw \cdot \bx -\gamma)+b\leq g(\bw \cdot \bx +\gamma)+b<0$; for $f\in \sH_g$ such that $g(\bw \cdot \bx -\gamma)+b>0$, $\Delta\cC_{\ell_{\gamma},\sH_g}(f,\bx,\eta)=1-\eta-\min\curl*{\eta,1-\eta}=\max\curl*{0,1-2\eta}=|2\eta-1|\mathds{1}_{(2\eta-1)[g(\bw \cdot \bx -\gamma)+b]\leq 0}$ since $g(\bw \cdot \bx -\gamma)+b>0$. Therefore,
\begin{align*}
  \Delta\cC_{\ell_{\gamma},
    \sH_g}(f,\bx,\eta)&=
    \begin{cases}
    \max\curl*{\eta,1-\eta} & \text{if} ~ g(\bw \cdot \bx -\gamma)+b\leq 0\leq g(\bw \cdot \bx +\gamma)+b\,,\\
   |2\eta-1|\mathds{1}_{(2\eta-1)[g(\bw \cdot \bx -\gamma)+b]\leq 0} & \text{if} ~ g(\bw \cdot \bx +\gamma)+b<0 \text{ or }g(\bw \cdot \bx -\gamma)+b>0\,.
    \end{cases}
\end{align*}
By \eqref{eq:def-calibration-function-true}, for a fixed $\eta\in[0,1]$ and $\bx \in \sX$, the calibration function of losses $(\ell, \ell_{\gamma})$ given $\sH_g$ is
\begin{align*}
    \delta_{\max}(\epsilon,\bx,\eta)=\inf_{f\in\sH_g}\curl*{\Delta\cC_{\ell,\sH_g}(f,\bx,\eta) \mid \Delta\cC_{\ell_{\gamma},\sH_g}(f,\bx,\eta)\geq\epsilon }.
\end{align*}
As with the proof of Lemma~\ref{lemma:bar_delta_sup}, we then make use of the observation \eqref{eq:observation} for deriving the the calibration function. By the observation \eqref{eq:observation},
if $\epsilon>\max\curl*{\eta,1-\eta}$, then for all $f\in\sH_g$, $\Delta\cC_{\ell_{\gamma},\sH_g}(f,\bx,\eta)<\epsilon$, which implies that $\delta_{\max}(\epsilon,\bx,\eta)=\infty$;
if $|2\eta-1|<\epsilon\leq\max\curl*{\eta,1-\eta}$, then $\Delta\cC_{\ell_{\gamma},\sH_g}(f,\bx,\eta)\geq\epsilon$ if and only if $g(\bw \cdot \bx -\gamma)+b\leq 0\leq g(\bw \cdot \bx +\gamma)+b$, which leads to
\begin{align*}
 \delta_{\max}(\epsilon,\bx,\eta)=\inf_{f\in \sH_g:~g(\bw \cdot \bx -\gamma)+b\leq 0\leq g(\bw \cdot \bx +\gamma)+b}\Delta\cC_{\ell,\sH_g}(f,\bx,\eta);   
\end{align*}
if $\epsilon\leq|2\eta-1|$, then $\Delta\cC_{\ell_{\gamma},\sH_g}(f,\bx,\eta)\geq\epsilon$ if and only if $g(\bw \cdot \bx -\gamma)+b\leq 0\leq g(\bw \cdot \bx +\gamma)+b \text{ or }(2\eta-1)[g(\bw \cdot \bx -\gamma)+b]\leq 0$, which leads to
\begin{align*}
\delta_{\max}(\epsilon,\bx,\eta)=\inf_{f\in \sH_g:~g(\bw \cdot \bx -\gamma)+b\leq 0\leq g(\bw \cdot \bx +\gamma)+b \text{ or }(2\eta-1)[g(\bw \cdot \bx -\gamma)+b]\leq 0}\Delta\cC_{\ell,\sH_g}(f,\bx,\eta). 
\end{align*}
\end{proof}

\begin{lemma}
\label{lemma:equivalent1_general}
Let $\ell$ be a surrogate loss function. Then $\ell$ is $\sH_g$-calibrated with respect to $\ell_{\gamma}$ if and only if for any $\bx\in \sX$,
\begin{align*}
    \inf\limits_{f\in\sH_g:~g(\bw \cdot \bx -\gamma)+b\leq 0\leq g(\bw \cdot \bx +\gamma)+b}\cC_{\ell}(f,\bx,\frac12)> &\inf\limits_{f\in \sH_g}\cC_{\ell}(f,\bx,\frac12)\,,\text{and}\\
     \inf\limits_{f\in \sH_g:~g(\bw \cdot \bx -\gamma)+b\leq 0}\cC_{\ell}(f,\bx,\eta) > &\inf\limits_{f\in \sH_g}\cC_{\ell}(f,\bx,\eta)  \text{ for all } \eta\in (\frac12,1]\,,\text{and}\\
    \inf\limits_{f\in \sH_g:~g(\bw \cdot \bx +\gamma)+b\geq 0}\cC_{\ell}(f,\bx,\eta) > &\inf\limits_{f\in\sH_g}\cC_{\ell}(f,\bx,\eta) \text{ for all } \eta\in [0,\frac12)\,.
\end{align*}
\end{lemma}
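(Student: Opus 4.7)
The plan is to mirror the proof of Lemma~\ref{lemma:equivalent1_sup}, but using Lemma~\ref{lemma:bar_delta_general} in place of Lemma~\ref{lemma:bar_delta_sup}, with the sets $\{f\colon \underline{M}(f,\bx,\gamma)\leq 0\leq \overline{M}(f,\bx,\gamma)\}$, $\{f\colon \underline{M}(f,\bx,\gamma)\leq 0\}$, $\{f\colon \overline{M}(f,\bx,\gamma)\geq 0\}$ replaced by their $\sH_g$-analogues defined through $g(\bw\cdot\bx-\gamma)+b$ and $g(\bw\cdot\bx+\gamma)+b$. Concretely, I would start by applying Proposition~\ref{prop:calibration_function_positive}: $\ell$ is $\sH_g$-calibrated with respect to $\ell_\gamma$ iff $\delta_{\max}(\epsilon,\bx,\eta)>0$ for every $\bx\in\sX$, $\eta\in[0,1]$, and $\epsilon>0$. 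I would then substitute the three-case formula from Lemma~\ref{lemma:bar_delta_general} into this condition, noting that the case $\epsilon>\max\{\eta,1-\eta\}$ is automatically satisfied since $\delta_{\max}=+\infty$, so only the other two cases impose nontrivial constraints.

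Next I would split into the three subcases $\eta=\tfrac12$, $\eta\in(\tfrac12,1]$, and $\eta\in[0,\tfrac12)$, exactly as in the proof of Lemma~\ref{lemma:equivalent1_sup}. For $\eta=\tfrac12$, the condition $|2\eta-1|<\epsilon\leq \max\{\eta,1-\eta\}$ collapses to $0<\epsilon\leq \tfrac12$, and positivity of $\delta_{\max}(\epsilon,\bx,\tfrac12)$ is equivalent to
\[
\inf_{f\in\sH_g\colon g(\bw\cdot\bx-\gamma)+b\leq 0\leq g(\bw\cdot\bx+\gamma)+b}\cC_\ell(f,\bx,\tfrac12)>\inf_{f\in\sH_g}\cC_\ell(f,\bx,\tfrac12).
\]
For $\eta\in(\tfrac12,1]$, I would use $|2\eta-1|=2\eta-1$ and $\max\{\eta,1-\eta\}=\eta$, observe that on $\{(2\eta-1)[g(\bw\cdot\bx-\gamma)+b]\leq 0\}\cup\{g(\bw\cdot\bx-\gamma)+b\leq 0\leq g(\bw\cdot\bx+\gamma)+b\}$ the condition reduces to $\{g(\bw\cdot\bx-\gamma)+b\leq 0\}$, and sweep $\epsilon$ over $(0,\max\{\eta,1-\eta\}]$ to conclude that the calibration condition over this range of $\eta$ is equivalent to the stated inequality on $\inf_{f\in\sH_g\colon g(\bw\cdot\bx-\gamma)+b\leq 0}\cC_\ell(f,\bx,\eta)$. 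The case $\eta\in[0,\tfrac12)$ is symmetric, replacing $\underline{M}$ by $\overline{M}$, i.e.\ $g(\bw\cdot\bx-\gamma)+b$ by $g(\bw\cdot\bx+\gamma)+b$.

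The main step requiring care is the same sweep-in-$\epsilon$ argument that appears in Lemma~\ref{lemma:equivalent1_sup}: one must verify that the union of thresholds $\{\eta\colon \epsilon\leq\eta<\tfrac{\epsilon+1}{2}\}$ and $\{\eta\colon \eta\geq\tfrac{\epsilon+1}{2}\}$ as $\epsilon$ ranges over $(0,1]$ covers all of $(\tfrac12,1]$, and that the stronger constraint (infimum over the smaller set $\{g(\bw\cdot\bx-\gamma)+b\leq 0\leq g(\bw\cdot\bx+\gamma)+b\}$) is subsumed by the weaker one (infimum over $\{g(\bw\cdot\bx-\gamma)+b\leq 0\}$), so only the weaker constraint survives in the final equivalence. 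This is the only nonobvious combinatorial step; the rest is direct substitution. The proof then concludes by combining the three subcases, giving the three bulleted conditions in the lemma.
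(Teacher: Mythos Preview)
Your proposal is correct and follows essentially the same approach as the paper: start from Proposition~\ref{prop:calibration_function_positive}, plug in the three-case formula of Lemma~\ref{lemma:bar_delta_general}, discard the trivial $\epsilon>\max\{\eta,1-\eta\}$ case, split into $\eta=\tfrac12$, $\eta\in(\tfrac12,1]$, $\eta\in[0,\tfrac12)$, and for the latter two reduce via the sweep-in-$\epsilon$ argument together with the set-containment inequality, exactly as in Lemma~\ref{lemma:equivalent1_sup}. The paper's proof is line-by-line the same outline you describe.
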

\begin{proof}
As the proof of Lemma~\ref{lemma:equivalent1_sup} first makes use of Lemma~\ref{lemma:bar_delta_sup} and Proposition~\ref{prop:calibration_function_positive}, we also first make use of Lemma~\ref{lemma:bar_delta_general} and Proposition~\ref{prop:calibration_function_positive} in the following proof.
Let $\delta_{\max}$ be the calibration function of $(\ell,\ell_{\gamma})$ for hypothesis set $\sH_g$. By Lemma~\ref{lemma:bar_delta_general},
\small
\begin{align*}
\delta_{\max}(\epsilon,\bx,\eta) =
\begin{cases}
+\infty & \text{if}~ \epsilon>\max\curl*{\eta,1-\eta},\\
\inf_{f\in \sH_g:~g(\bw \cdot \bx -\gamma)+b\leq 0\leq g(\bw \cdot \bx +\gamma)+b}\Delta\cC_{\ell,\sH_g}(f,\bx,\eta) & \text{if} ~ |2\eta-1|<\epsilon\leq\max\curl*{\eta,1-\eta},\\
\inf_{f\in \sH_g:~g(\bw \cdot \bx -\gamma)+b\leq 0\leq g(\bw \cdot \bx +\gamma)+b \text{ or }(2\eta-1)[g(\bw \cdot \bx -\gamma)+b]\leq 0}\Delta\cC_{\ell,\sH_g}(f,\bx,\eta) & \text{if} ~ \epsilon\leq|2\eta-1|.
\end{cases}
\end{align*}
\normalsize
By Proposition~\ref{prop:calibration_function_positive}, $\ell$ is $\sH_g$-calibrated with respect to $\ell_{\gamma}$ if and only if its calibration function $\delta_{\max}$ satisfies $\delta_{\max}(\epsilon,\bx,\eta)>0$ for all $\bx\in \sX$, $\eta\in [0,1]$ and $\epsilon>0$.
The following steps are similar to the steps in the proof of Lemma~\ref{lemma:equivalent1_sup}, where we analyze by considering three cases.\\
For $\eta=\frac12$, we have for any $\bx \in \sX$,
\begin{align}
\delta_{\max}(\epsilon,\bx,\frac12)>0 \text{ for all } \epsilon>0 \Leftrightarrow \inf\limits_{f\in \sH_g:~g(\bw \cdot \bx -\gamma)+b\leq 0\leq g(\bw \cdot \bx +\gamma)+b}\cC_{\ell}(f,\bx,\frac12)> \inf\limits_{f\in \sH_g}\cC_{\ell}(f,\bx,\frac12).
\label{eq:keycondition1_general}
\end{align}
For $1\geq\eta>\frac12$, we have $|2\eta-1|=2\eta-1$, $\max\curl*{\eta,1-\eta}=\eta$, and
\begin{align*}
   \inf_{f\in \sH_g:~g(\bw \cdot \bx -\gamma)+b\leq 0\leq g(\bw \cdot \bx +\gamma)+b \text{ or }(2\eta-1)[g(\bw \cdot \bx -\gamma)+b]\leq 0}\Delta\cC_{\ell,\sH_g}(f,\bx,\eta)
   =\inf_{f\in \sH_g\colon ~g(\bw \cdot \bx -\gamma)+b\leq 0} \Delta\cC_{\ell,\sH_g}(f,\bx,\eta)\,. 
\end{align*}
Therefore, $\delta_{\max}(\epsilon,\bx,\frac12)>0 \text{ for any } \bx \in \sX,~\epsilon>0 \text{ and } \eta\in(\frac12,1]$ if and only if$\text{ for any } \bx \in \sX$,
\small
\begin{align*}
\begin{cases}
\inf\limits_{f\in \sH_g:~g(\bw \cdot \bx -\gamma)+b\leq 0\leq g(\bw \cdot \bx +\gamma)+b}\cC_{\ell}(f,\bx,\eta)> \inf\limits_{f\in \sH_g}\cC_{\ell}(f,\bx,\eta) &\text{ for all } \eta\in(\frac12,1] \text{ such that } 2\eta-1<\epsilon\leq \eta,\\
\inf\limits_{f\in \sH_g\colon ~g(\bw \cdot \bx -\gamma)+b\leq 0}\cC_{\ell}(f,\bx,\eta) > \inf\limits_{f\in \sH_g}\cC_{\ell}(f,\bx,\eta) &\text{ for all } \eta\in(\frac12,1] \text{ such that } \epsilon\leq 2\eta-1,
\end{cases}
\end{align*}
\normalsize
for all $\epsilon>0$, which is equivalent to $\text{for any } \bx \in \sX$,
\small
\begin{align}
\begin{cases}
\label{eq:condition1 in proof_general}
\inf\limits_{f\in \sH_g:~g(\bw \cdot \bx -\gamma)+b\leq 0\leq g(\bw \cdot \bx +\gamma)+b}\cC_{\ell}(f,\bx,\eta)> \inf\limits_{f\in \sH_g}\cC_{\ell}(f,\bx,\eta) &\text{ for all } \eta\in(\frac12,1] \text{ such that } \epsilon\leq \eta < \frac{\epsilon+1}{2},\\
\inf\limits_{f\in \sH_g\colon ~g(\bw \cdot \bx -\gamma)+b\leq 0}\cC_{\ell}(f,\bx,\eta) > \inf\limits_{f\in \sH_g}\cC_{\ell}(f,\bx,\eta) &\text{ for all } \eta\in(\frac12,1] \text{ such that } \frac{\epsilon+1}{2}\leq \eta,
\end{cases}
\end{align}
\normalsize
for all $\epsilon>0$.
Observe that
\begin{align*}
    &\left\{\eta\in(\frac12,1]\Bigg|\epsilon\leq \eta < \frac{\epsilon+1}{2},\epsilon>0\right\}=\left\{\frac12<\eta\leq1\right\}\,, \text{ and}\\
    &\left\{\eta\in(\frac12,1]\Bigg|\frac{\epsilon+1}{2}\leq \eta, \epsilon>0\right\}=\left\{\frac12<\eta\leq1\right\}\,, \text{ and}\\
    &\inf\limits_{f\in \sH_g:~g(\bw \cdot \bx -\gamma)+b\leq 0\leq g(\bw \cdot \bx +\gamma)+b}\cC_{\ell}(f,\bx,\eta) \geq \inf\limits_{f\in \sH_g\colon ~g(\bw \cdot \bx -\gamma)+b\leq 0}\cC_{\ell}(f,\bx,\eta) \text{ for all } \eta\,.
\end{align*}
Therefore, we reduce the above condition~\eqref{eq:condition1 in proof_general} as $\text{for any } \bx \in \sX$,
\begin{align}
\label{eq:keycondition2_general}
    \inf\limits_{f\in \sH_g\colon ~g(\bw \cdot \bx -\gamma)+b\leq 0}\cC_{\ell}(f,\bx,\eta) > \inf\limits_{f\in \sH_g}\cC_{\ell}(f,\bx,\eta) \text{ for all } \eta\in (\frac12,1]\,.
\end{align}
For $\frac12>\eta\geq0$, we have $|2\eta-1|=1-2\eta$, $\max\curl*{\eta,1-\eta}=1-\eta$, and
\begin{align*}
   \inf_{f\in \sH_g:~g(\bw \cdot \bx -\gamma)+b\leq 0\leq g(\bw \cdot \bx +\gamma)+b \text{ or }(2\eta-1)[g(\bw \cdot \bx -\gamma)+b]\leq 0}\Delta\cC_{\ell,\sH_g}(f,\bx,\eta)
   =\inf_{f\in \sH_g\colon ~g(\bw \cdot \bx +\gamma)+b\geq 0} \Delta\cC_{\ell,\sH_g}(f,\bx,\eta)\,.  
\end{align*}
Therefore, $\delta_{\max}(\epsilon,\bx,\frac12)>0 \text{ for any } \bx \in \sX,~\epsilon>0 \text{ and } \eta\in[0,\frac12)$ if and only if$\text{ for any } \bx \in \sX$,
\small
\begin{align*}
\begin{cases}
\inf\limits_{f\in \sH_g:~g(\bw \cdot \bx -\gamma)+b\leq 0\leq g(\bw \cdot \bx +\gamma)+b}\cC_{\ell}(f,\bx,\eta)> \inf\limits_{f\in \sH_g}\cC_{\ell}(f,\bx,\eta) &\text{ for all } \eta\in[0,\frac12) \text{ such that } 1-2\eta<\epsilon\leq 1-\eta,\\
\inf\limits_{f\in \sH_g\colon ~g(\bw \cdot \bx +\gamma)+b\geq 0}\cC_{\ell}(f,\bx,\eta) > \inf\limits_{f\in \sH_g}\cC_{\ell}(f,\bx,\eta) &\text{ for all } \eta\in[0,\frac12) \text{ such that } \epsilon\leq 1-2\eta,
\end{cases}
\end{align*}
\normalsize
for all $\epsilon>0$, which is equivalent to $\text{for any } \bx \in \sX$,
\small
\begin{align}
\begin{cases}
\inf\limits_{f\in \sH_g:~g(\bw \cdot \bx -\gamma)+b\leq 0\leq g(\bw \cdot \bx +\gamma)+b}\cC_{\ell}(f,\bx,\eta)> \inf\limits_{f\in \sH_g}\cC_{\ell}(f,\bx,\eta) &\text{ for all } \eta\in[0,\frac12) \text{ such that } \frac{1-\epsilon}{2}< \eta \leq 1-\epsilon,\\
\inf\limits_{f\in \sH_g\colon ~g(\bw \cdot \bx +\gamma)+b\geq 0}\cC_{\ell}(f,\bx,\eta) > \inf\limits_{f\in \sH_g}\cC_{\ell}(f,\bx,\eta) &\text{ for all } \eta\in[0,\frac12) \text{ such that } \eta\leq \frac{1-\epsilon}{2},
\end{cases}
\label{eq:condition2 in proof_general}
\end{align}
\normalsize
for all $\epsilon>0$.
Observe that
\begin{align*}
    &\left\{\eta\in[0,\frac12)\Bigg|\frac{1-\epsilon}{2}< \eta \leq 1-\epsilon,\epsilon>0\right\}=\left\{0\leq\eta<\frac12\right\}\,, \text{ and}\\
    &\left\{\eta\in[0,\frac12)\Bigg|\eta\leq \frac{1-\epsilon}{2}, \epsilon>0\right\}=\left\{0\leq\eta<\frac12\right\}\,, \text{ and}\\
    &\inf\limits_{f\in \sH_g:~g(\bw \cdot \bx -\gamma)+b\leq 0\leq g(\bw \cdot \bx +\gamma)+b}\cC_{\ell}(f,\bx,\eta) \geq \inf\limits_{f\in \sH_g\colon ~g(\bw \cdot \bx +\gamma)+b\geq 0}\cC_{\ell}(f,\bx,\eta) \text{ for all } \eta\,.
\end{align*}
Therefore we reduce the above condition~\eqref{eq:condition2 in proof_general} as $\text{for any } \bx \in \sX$,
\begin{align}
    \inf\limits_{f\in \sH_g\colon ~g(\bw \cdot \bx +\gamma)+b\geq 0}\cC_{\ell}(f,\bx,\eta) > \inf\limits_{f\in \sH_g}\cC_{\ell}(f,\bx,\eta) \text{ for all } \eta\in [0,\frac12)\,.
    \label{eq:keycondition3_general}
\end{align}
To sum up, by \eqref{eq:keycondition1_general}, \eqref{eq:keycondition2_general} and \eqref{eq:keycondition3_general}, we conclude the proof.
\end{proof}

\QuasiconcaveCalibrateGeneral*
\begin{proof}
By Lemma~\ref{lemma:equivalent1_general}, $\phi$ is $\sH_g$-calibrated with respect to $\ell_{\gamma}$ if and only if for any $\bx\in \sX$,
\begin{equation}
\label{eq:general_positive_proof1}
\begin{aligned}
    \inf\limits_{f\in\sH_g:~g(\bw \cdot \bx -\gamma)+b\leq 0\leq g(\bw \cdot \bx +\gamma)+b}\cC_{\phi}(f,\bx,\frac12)> &\inf\limits_{f\in \sH_g}\cC_{\phi}(f,\bx,\frac12)\,,\text{and}\\
     \inf\limits_{f\in \sH_g:~g(\bw \cdot \bx -\gamma)+b\leq 0}\cC_{\phi}(f,\bx,\eta) > &\inf\limits_{f\in \sH_g}\cC_{\phi}(f,\bx,\eta)  \text{ for all } \eta\in (\frac12,1]\,,\text{and}\\
    \inf\limits_{f\in \sH_g:~g(\bw \cdot \bx +\gamma)+b\geq 0}\cC_{\phi}(f,\bx,\eta) > &\inf\limits_{f\in\sH_g}\cC_{\phi}(f,\bx,\eta) \text{ for all } \eta\in [0,\frac12)\,.
\end{aligned}
\end{equation}
By the definition of inner risk \eqref{eq:conditional-risk}, the inner $\phi$-risk is
\begin{align*}
   \cC_{\phi}(f,\bx,\eta)=\eta \phi(f(\bx))+(1-\eta)\phi(-f(\bx))\,.
\end{align*}
and $f(\bx)=g(\bw\cdot \bx)+b\in [g(-\|\bx\|)-\C,g(\|\bx\|)+\C]$ when $f\in \sH_g$ since $g$ is continuous and non-decreasing. Specifically, by the assumption that $g(-1-\gamma)+\C>0$, $g(1+\gamma)-\C<0$, when $f\in\{f\in\sH_g:~g(\bw \cdot \bx -\gamma)+b\leq 0\leq g(\bw \cdot \bx +\gamma)+b\}$, $f(\bx)=g(\bw\cdot \bx)+b\in [\min_{-\|\bx\|\leq s\leq \|\bx\|}g(s)-g(s+\gamma),\max_{-\|\bx\|\leq s\leq \|\bx\|}g(s)-g(s-\gamma)]$; when $f\in\{f\in\sH_g:~g(\bw \cdot \bx -\gamma)+b\leq 0\}$, $f(\bx)=g(\bw\cdot \bx)+b\in [g(-\|\bx\|)-\C,\max_{-\|\bx\|\leq s\leq \|\bx\|}g(s)-g(s-\gamma)]$; when $f\in\{f\in\sH_g:~g(\bw \cdot \bx +\gamma)+b\geq 0\}$, $f(\bx)=g(\bw\cdot \bx)+b\in [\min_{-\|\bx\|\leq s\leq \|\bx\|}g(s)-g(s+\gamma),g(\|\bx\|)+\C]$. For convenience, we denote $\overline{A}(t)=\max_{-t\leq s\leq t}g(s)-g(s-\gamma)\geq 0$ and $\underline{A}(t)=\min_{-t\leq s\leq t}g(s)-g(s+\gamma)\leq 0$ for any $0\leq t \leq 1$. Therefore, for any $\bx\in \sX$, \eqref{eq:general_positive_proof1} is equivalent to 
\begin{equation}
\label{eq:general_positive_proof2}
	\begin{aligned}
    \inf\limits_{\underline{A}(\|\bx\|)\leq t\leq\overline{A}(\|\bx\|)}\bar{\cC}_{\phi}(t,\frac12)> &\inf\limits_{g(-\|\bx\|)-\C\leq t\leq g(\|\bx\|)+\C}\bar{\cC}_{\phi}(t,\frac12)\,,\text{and}\\
    \inf\limits_{g(-\|\bx\|)-\C\leq t\leq \overline{A}(\|\bx\|)}\bar{\cC}_{\phi}(t,\eta) > &\inf\limits_{g(-\|\bx\|)-\C\leq t\leq g(\|\bx\|)+\C}\bar{\cC}_{\phi}(t,\eta)  \text{ for all } \eta\in (\frac12,1]\,,\text{and}\\
    \inf\limits_{\underline{A}(\|\bx\|)\leq t\leq g(\|\bx\|)+\C}\bar{\cC}_{\phi}(t,\eta) > &\inf\limits_{g(-\|\bx\|)-\C\leq t\leq g(\|\bx\|)+\C}\bar{\cC}_{\phi}(t,\eta) \text{ for all } \eta\in [0,\frac12)\,.	
	\end{aligned}
\end{equation}
Suppose that $\phi$ is $\sH_g$-calibrated with respect to $\ell_{\gamma}$. Since for $\eta\in[0,\frac12)$,
\begin{align*}
    &\inf\limits_{\underline{A}(\|\bx\|)\leq t\leq g(\|\bx\|)+\C}\bar{\cC}_{\phi}(t,\eta)
    = \min\curl*{\bar{\cC}_{\phi}(\underline{A}(\|\bx\|),\eta),\bar{\cC}_{\phi}(g(\|\bx\|)+\C,\eta)} & \text{(Part~\ref{part3_lemma:quasiconcave_even} of Lemma~\ref{lemma:quasiconcave_even})}\\
    &\inf\limits_{g(-\|\bx\|)-\C\leq t\leq g(\|\bx\|)+\C}\bar{\cC}_{\phi}(t,\eta)
    = \min\curl*{\bar{\cC}_{\phi}(g(-\|\bx\|)-\C,\eta),\bar{\cC}_{\phi}(g(\|\bx\|)+\C,\eta)} & \text{(Part~\ref{part3_lemma:quasiconcave_even} of Lemma~\ref{lemma:quasiconcave_even})}
\end{align*}
we have $\bar{\cC}_{\phi}(g(-\|\bx\|)-\C,\eta)<\bar{\cC}_{\phi}(g(\|\bx\|)+\C,\eta)$ for any $\bx \in \sX$, otherwise
\begin{align*}
 \inf_{\underline{A}(\|\bx\|)\leq t\leq g(\|\bx\|)+\C}\bar{\cC}_{\phi}(t,\eta)\leq \bar{\cC}_{\phi}(g(\|\bx\|)+\C,\eta)=\inf_{g(-\|\bx\|)-\C\leq t\leq g(\|\bx\|)+\C}\bar{\cC}_{\phi}(t,\eta). 
\end{align*}
By \text{Part~\ref{part9_lemma:quasiconcave_even} of Lemma~\ref{lemma:quasiconcave_even}}, $\phi(\C-g(-t))+\phi(g(-t)-\C)=\phi(g(t)+\C)+\phi(-g(t)-\C)$ for all $0\leq t\leq1$.\\
Also, for any $0\leq t\leq 1$,
\begin{align*}
    &\frac12\min\curl*{\phi(\overline{A}(t))+\phi(-\overline{A}(t)),\phi(\underline{A}(t))+\phi(-\underline{A}(t)) }\\
    =& \inf\limits_{\underline{A}(t)\leq t\leq \overline{A}(t)}\bar{\cC}_{\phi}(t,\frac12) & \text{(Part~\ref{part3_lemma:quasiconcave_even} of Lemma~\ref{lemma:quasiconcave_even})}\\
    >& \inf\limits_{g(-t)-\C\leq t\leq g(t)+\C}\bar{\cC}_{\phi}(t,\frac12) & \eqref{eq:general_positive_proof2}\\
    =& \frac12\min\curl*{\phi(\C-g(-t))+\phi(g(-t)-\C),\phi(g(t)+\C)+\phi(-g(t)-\C) } & \text{(Part~\ref{part3_lemma:quasiconcave_even} of Lemma~\ref{lemma:quasiconcave_even})}\\
    =& \frac12(\phi(\C-g(-t))+\phi(g(-t)-\C))
\end{align*}

Now for the other direction, assume that for any $0\leq t \leq 1$,
  \begin{align*}
  \phi(\C-g(-t))+\phi(g(-t)-\C) & = \phi(g(t)+\C)+\phi(-g(t)-\C)\\
  \text{and} \quad
  \min\curl*{\phi(\overline{A}(t))+\phi(-\overline{A}(t)),\phi(\underline{A}(t))+\phi(-\underline{A}(t)) } & > \phi(\C-g(-t))+\phi(g(-t)-\C). 
 \end{align*}
Then for $\eta=\frac12$ and any $\bx \in \sX$,
\begin{align*}
    &\inf\limits_{\underline{A}(\|\bx\|)\leq t\leq\overline{A}(\|\bx\|)}\bar{\cC}_{\phi}(t,\frac12)\\
    =& \frac12\min\curl*{\phi(\overline{A}(\|\bx\|))+\phi(-\overline{A}(\|\bx\|)),\phi(\underline{A}(\|\bx\|))+\phi(-\underline{A}(\|\bx\|)) } & \text{(Part~\ref{part3_lemma:quasiconcave_even} of Lemma~\ref{lemma:quasiconcave_even})}\\
    >&\frac12(\phi(\C-g(-\|\bx\|))+\phi(g(-\|\bx\|)-\C)) & \text{(by assumption)}\\
    =& \frac12\min\curl*{\phi(\C-g(-\|\bx\|))+\phi(g(-\|\bx\|)-\C),\phi(g(\|\bx\|)+\C)+\phi(-g(\|\bx\|)-\C) } & \text{(by assumption)}\\
    =& \inf\limits_{g(-\|\bx\|)-\C\leq t\leq g(\|\bx\|)+\C}\bar{\cC}_{\phi}(t,\frac12). & \text{(Part~\ref{part3_lemma:quasiconcave_even} of Lemma~\ref{lemma:quasiconcave_even})}
\end{align*}
For $\eta\in(\frac12,1]$ and any $\bx \in \sX$,
\begin{align*}
    &\inf\limits_{g(-\|\bx\|)-\C\leq t\leq \overline{A}(\|\bx\|)}\bar{\cC}_{\phi}(t,\eta)
    = \min\curl*{\bar{\cC}_{\phi}(g(-\|\bx\|)-\C,\eta),\bar{\cC}_{\phi}(\overline{A}(\|\bx\|),\eta)} & \text{(Part~\ref{part3_lemma:quasiconcave_even} of Lemma~\ref{lemma:quasiconcave_even})}\\
    &\inf\limits_{g(-\|\bx\|)-\C\leq t\leq g(\|\bx\|)+\C}\bar{\cC}_{\phi}(t,\eta)  
    = \min\curl*{\bar{\cC}_{\phi}(g(-\|\bx\|)-\C,\eta),\bar{\cC}_{\phi}(g(\|\bx\|)+\C,\eta)} & \text{(Part~\ref{part3_lemma:quasiconcave_even} of Lemma~\ref{lemma:quasiconcave_even})}\\
    &= \bar{\cC}_{\phi}(g(\|\bx\|)+\C,\eta) & \text{(Part~\ref{part8_lemma:quasiconcave_even} of Lemma~\ref{lemma:quasiconcave_even})}
\end{align*}
Since $\phi$ is non-increasing, we have for any $\bx \in \sX$,
\begin{align*}
    &\phi(-g(\|\bx\|)-\C)-\phi(g(\|\bx\|)+\C)+\phi(\overline{A}(\|\bx\|))-\phi(-\overline{A}(\|\bx\|))\\
    \geq&  \phi(-g(\|\bx\|)-\C)-\phi(g(\|\bx\|)+\C)+\phi(g(\|\bx\|)+\C)-\phi(-g(\|\bx\|)-\C)\\
    =&0.
\end{align*}
Then for $\eta\in(\frac12,1]$ and any $\bx \in \sX$,
\begin{align*}
    &\bar{\cC}_{\phi}(\overline{A}(\|\bx\|),\eta)-\bar{\cC}_{\phi}(g(\|\bx\|)+\C,\eta)\\
    =& (\phi(\overline{A}(\|\bx\|))-\phi(-\overline{A}(\|\bx\|))+\phi(-g(\|\bx\|)-\C)-\phi(g(\|\bx\|)+\C))\eta+\phi(-\overline{A}(\|\bx\|))-\phi(-g(\|\bx\|)-\C)\\
    \geq& (\phi(\overline{A}(\|\bx\|))-\phi(-\overline{A}(\|\bx\|))+\phi(-g(\|\bx\|)-\C)-\phi(g(\|\bx\|)+\C))\frac12+\phi(-\overline{A}(\|\bx\|))-\phi(-g(\|\bx\|)-\C)\\
    =& \frac12(\phi(\overline{A}(\|\bx\|))-\phi(-\overline{A}(\|\bx\|))-\phi(-g(\|\bx\|)-\C)-\phi(g(\|\bx\|)+\C))\\
    >&0.
\end{align*}
In addition, by Part~\ref{part8_lemma:quasiconcave_even} of Lemma~\ref{lemma:quasiconcave_even}, for all $\eta\in (\frac12,1]$ and any $\bx \in \sX$, $\bar{\cC}_{\phi}(g(-\|\bx\|)-\C,\eta)-\bar{\cC}_{\phi}(g(\|\bx\|)+\C,\eta)>0$.
As a result, for $\eta\in (\frac12,1]$ and any $\bx \in \sX$,
\begin{align*}
    &\inf\limits_{g(-\|\bx\|)-\C\leq t\leq \overline{A}(\|\bx\|)}\bar{\cC}_{\phi}(t,\eta) -\inf\limits_{g(-\|\bx\|)-\C\leq t\leq g(\|\bx\|)+\C}\bar{\cC}_{\phi}(t,\eta)\\
    =& \min\curl*{\bar{\cC}_{\phi}(g(-\|\bx\|)-\C,\eta)-\bar{\cC}_{\phi}(g(\|\bx\|)+\C,\eta), \bar{\cC}_{\phi}(\overline{A}(\|\bx\|),\eta)-\bar{\cC}_{\phi}(g(\|\bx\|)+\C,\eta)}\\
    >&0.
\end{align*}
Finally, for $\eta\in[0,\frac12)$, by \text{Part~\ref{part9_lemma:quasiconcave_even} of Lemma~\ref{lemma:quasiconcave_even}}, we have $\bar{\cC}_{\phi}(g(-\|\bx\|)-\C,\eta)<\bar{\cC}_{\phi}(g(\|\bx\|)+\C,\eta)$ and 
\begin{align*}
    &\inf\limits_{\underline{A}(\|\bx\|)\leq t\leq g(\|\bx\|)+\C}\bar{\cC}_{\phi}(t,\eta)
    = \min\curl*{\bar{\cC}_{\phi}(\underline{A}(\|\bx\|),\eta),\bar{\cC}_{\phi}(g(\|\bx\|)+\C,\eta)} & \text{(Part~\ref{part3_lemma:quasiconcave_even} of Lemma~\ref{lemma:quasiconcave_even})}\\
    &\inf\limits_{g(-\|\bx\|)-\C\leq t\leq g(\|\bx\|)+\C}\bar{\cC}_{\phi}(t,\eta)  
    = \min\curl*{\bar{\cC}_{\phi}(g(-\|\bx\|)-\C,\eta),\bar{\cC}_{\phi}(g(\|\bx\|)+\C,\eta)} & \text{(Part~\ref{part3_lemma:quasiconcave_even} of Lemma~\ref{lemma:quasiconcave_even})}\\
    &= \bar{\cC}_{\phi}(g(-\|\bx\|)-\C,\eta) & \text{(Part~\ref{part9_lemma:quasiconcave_even} of Lemma~\ref{lemma:quasiconcave_even})}
\end{align*}
Since $\phi(\underline{A}(\|\bx\|))+\phi(-\underline{A}(\|\bx\|))>\phi(\C-g(-\|\bx\|))+\phi(g(-\|\bx\|)-\C)$ and $\phi$ is non-increasing, we have for any $\bx \in \sX$,
\begin{align*}
    &\phi(\C-g(-\|\bx\|))-\phi(g(-\|\bx\|)-\C)+\phi(\underline{A}(\|\bx\|))-\phi(-\underline{A}(\|\bx\|))\\
    =&  \phi(\C-g(-\|\bx\|))-\phi(-\underline{A}(\|\bx\|))+\phi(\underline{A}(\|\bx\|))-\phi(g(-\|\bx\|)-\C)\\
    <&  \phi(\underline{A}(\|\bx\|))-\phi(g(-\|\bx\|)-\C)+\phi(\underline{A}(\|\bx\|))-\phi(g(-\|\bx\|)-\C)\\
    =&  2[\phi(\underline{A}(\|\bx\|))-\phi(g(-\|\bx\|)-\C)]\\
    \leq&0.
\end{align*}
Then for $\eta\in[0,\frac12)$ and any $\bx \in \sX$.
\begin{align*}
    &\bar{\cC}_{\phi}(\underline{A}(\|\bx\|),\eta)- \bar{\cC}_{\phi}(g(-\|\bx\|)-\C,\eta)\\
    =& [\phi(\C-g(-\|\bx\|))-\phi(g(-\|\bx\|)-\C)+\phi(\underline{A}(\|\bx\|))-\phi(-\underline{A}(\|\bx\|))]\eta+\phi(-\underline{A}(\|\bx\|))-\phi(\C-g(-\|\bx\|))\\
    \geq& [\phi(\C-g(-\|\bx\|))-\phi(g(-\|\bx\|)-\C)+\phi(\underline{A}(\|\bx\|))-\phi(-\underline{A}(\|\bx\|))]\frac12+\phi(-\underline{A}(\|\bx\|))-\phi(\C-g(-\|\bx\|))\\
   =& \frac12[\phi(\underline{A}(\|\bx\|))+\phi(-\underline{A}(\|\bx\|))-\phi(g(-\|\bx\|)-\C)-\phi(\C-g(-\|\bx\|))]\\
    >&0.
\end{align*}
In addition, by Part~\ref{part9_lemma:quasiconcave_even} of Lemma~\ref{lemma:quasiconcave_even}, for all $\eta\in[0,\frac12)$ and any $\bx \in \sX$, $\bar{\cC}_{\phi}(g(\|\bx\|)+\C,\eta)-\bar{\cC}_{\phi}(g(-\|\bx\|)-\C,\eta)>0$. As a result, for $\eta\in[0,\frac12)$ and any $\bx \in \sX$,
\begin{align*}
    &\inf\limits_{\underline{A}(\|\bx\|)\leq t\leq g(\|\bx\|)+\C}\bar{\cC}_{\phi}(t,\eta) -\inf\limits_{g(-\|\bx\|)-\C\leq t\leq g(\|\bx\|)+\C}\bar{\cC}_{\phi}(t,\eta)\\
    =& \min\curl*{\bar{\cC}_{\phi}(g(\|\bx\|)+\C,\eta)-\bar{\cC}_{\phi}(g(-\|\bx\|)-\C,\eta), \bar{\cC}_{\phi}(\underline{A}(\|\bx\|),\eta)-\bar{\cC}_{\phi}(g(-\|\bx\|)-\C,\eta)}\\
    >&0.
\end{align*}
\end{proof}

\end{document}